\documentclass{article}

\usepackage{microtype}
\usepackage{graphicx}
\usepackage{subcaption}
\usepackage{booktabs} 

\usepackage{hyperref}

\usepackage[preprint]{icml2026}

\usepackage{amsmath}
\usepackage{amssymb}
\usepackage{mathtools}
\usepackage{amsthm}
\usepackage[capitalize,noabbrev]{cleveref}
\usepackage{enumitem}
\usepackage{xcolor}         
\usepackage{colortbl}
\usepackage{xspace}
\usepackage{multirow}
\usepackage{makecell}
\usepackage{array}

\definecolor{lightgrayv}{HTML}{F4F3F8} 
\definecolor{grayv}{HTML}{707070}
\definecolor{redv}{HTML}{C00000}
\definecolor{bluev}{HTML}{0070C0}

\newcommand{\eg}{{e.g.,}\xspace}
\newcommand{\ie}{{i.e.,}\xspace}

\newcommand{\wrt}{{w.r.t.}\xspace}

\newcommand{\baby}{\textsc{Badit}\xspace}

\theoremstyle{plain}
\newtheorem{theorem}{Theorem}[section]
\newtheorem{proposition}[theorem]{Proposition}

\newtheorem{corollary}[theorem]{Corollary}
\theoremstyle{definition}

\theoremstyle{remark}
\newtheorem{remark}[theorem]{Remark}

\usepackage[textsize=tiny]{todonotes}

\begin{document}

\twocolumn[
\icmltitle{Decomposing the Basic Abilities of Large Language Models: Mitigating Cross-Task Interference in Multi-Task Instruct-Tuning}




\begin{icmlauthorlist}
\icmlauthor{Bing Wang}{ccst,moe}
\icmlauthor{Ximing Li}{ccst,moe,riken}
\icmlauthor{Changchun Li}{ccst,moe}
\icmlauthor{Jinjin Chi}{ccst,moe}
\icmlauthor{Gang Niu}{riken}
\icmlauthor{Masashi Sugiyama}{riken,tokyo}
\end{icmlauthorlist}

\icmlaffiliation{ccst}{College of Computer Science and Technology, Jilin University}
\icmlaffiliation{moe}{Key Laboratory of Symbolic Computation and Knowledge Engineering, Ministry of Education, Jilin University}
\icmlaffiliation{riken}{RIKEN Center for Advanced Intelligence Project}
\icmlaffiliation{tokyo}{Graduate School of Frontier Sciences, University of Tokyo}

\icmlcorrespondingauthor{Ximing Li}{liximing86@gmail.com}

\icmlkeywords{Machine Learning, ICML}

\vskip 0.3in
]



\printAffiliationsAndNotice{}  

\begin{abstract}
  Recently, the prominent performance of large language models~(LLMs) has been largely driven by multi-task instruct-tuning. Unfortunately, this training paradigm suffers from a key issue, named cross-task interference, due to conflicting gradients over shared parameters among different tasks. Some previous methods mitigate this issue by isolating task-specific parameters, \eg task-specific neuron selection and mixture-of-experts. In this paper, we empirically reveal that the cross-task interference still exists for the existing solutions because of many parameters also shared by different tasks, and accordingly, we propose a novel solution, namely Basic Abilities Decomposition for multi-task Instruct-Tuning (\baby). Specifically, we empirically find that certain parameters are consistently co-activated, and that co-activated parameters naturally organize into base groups. This motivates us to analogize that LLMs encode several orthogonal basic abilities, and that any task can be represented as a linear combination of these abilities. 
Accordingly, we propose \baby that decomposes LLM parameters into orthogonal high-singular-value LoRA experts representing basic abilities, and dynamically enforces their orthogonality during training via spherical clustering of rank-1 components. We conduct extensive experiments on the \textit{SuperNI} benchmark with 6 LLMs, and empirical results demonstrate that \baby can outperform SOTA methods and mitigate the degree of cross-task interference.

\end{abstract}

\section{Introduction} \label{sec:1}

Recently, the community has witnessed the rapid development of \emph{large language models} (LLMs) in tackling a variety of downstream tasks, \eg reasoning \citep{yang2025qwen3} and Q\&A \citep{hu2025can}. This remarkable capability is primarily attributed to the supervised fine-tuning across massive data of diverse tasks, namely \textit{multi-task instruct-tuning} \citep{wang2023rehearsal,dou2023loramoe,shi2024continual}.

Despite the success of LLMs, their multi-task instruct-tuning faces a key challenge referred to as \textit{cross-task interference}, that is, training multiple tasks simultaneously inevitably results in conflicting gradients from different task data over shared LLM parameters, thereby degrading performance on each task \citep{mccloskey1989catastrophic, luo2023an}. To handle this, a basic assumption in multi-task instruct-tuning is that \textit{different tasks consistently activate distinct subsets of LLM parameters} \citep{leng2025towards,tang2025enhancing}. 
Accordingly, some previous methods identify and isolate task‑specific parameters \citep{tang2021layerwise,zhao2024sapt}. For example, \citet{leng2025towards} locates task-specific neurons in LLMs through gradient attribution and selectively trains only those parameters. Another line of research applies \emph{mixture-of-experts} (MoE) to separate the parameters associated with different tasks \citep{wang2023orthogonal,dou2023loramoe}, incrementally introducing multiple experts to explicitly isolate task-specific parameters, while enforcing orthogonality constraints among the parameters of different experts \citep{wang2023orthogonal}.

In this paper, we empirically reveal that these existing solutions still suffer from cross-task interference because of many parameters still shared by different tasks, and accordingly, we propose a novel solution to further mitigate it.
Specifically, we perform empirical evaluations with standard methods on the benchmark dataset with 15 different tasks. The experimental results in Figs.~\ref{mlp_gate_act+grad_keep10_binary} and \ref{moe_expert_activation} reveal the significant overlap among the task-specific parameters cross different tasks. We analyze 15 tasks \citep{wang2022super}, and illustrate how they activate different neurons, parameters, and MoE experts across various LLMs, respectively. The experimental results show that the majority of neurons, parameters, and MoE experts are simultaneously activated by multiple tasks. Therefore, prior multi-task instruct-tuning methods fail to fully isolate task-specific parameters, leaving a substantial number of shared parameters that still exhibit conflicting gradients across tasks during optimization.

\begin{figure*}[t]
  \centering
  \includegraphics[width=0.97\textwidth]{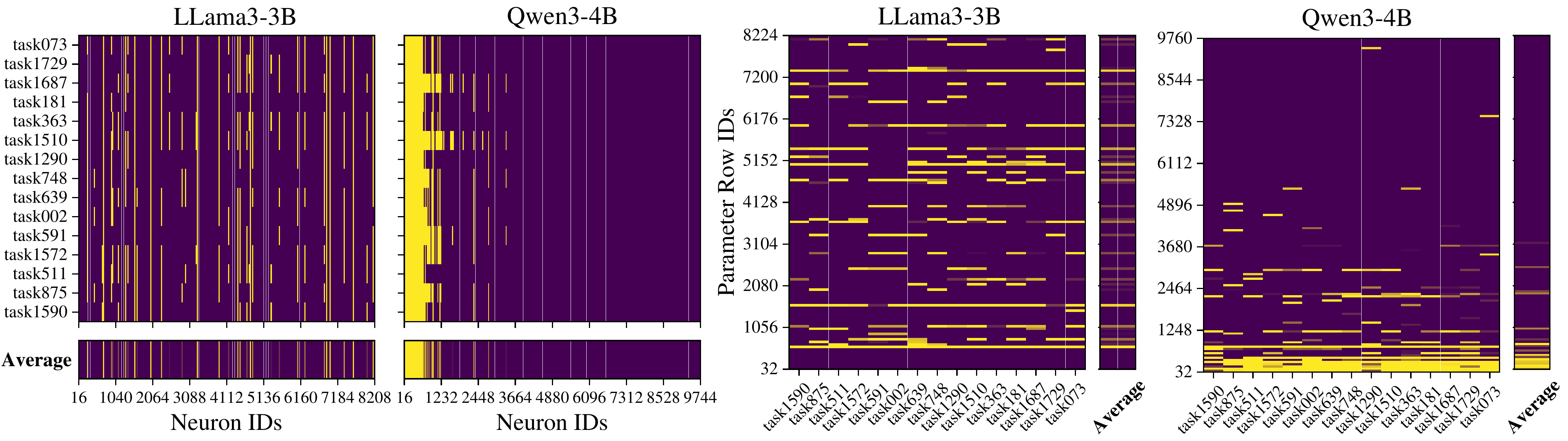}
  \caption{Shared task-specific neurons and parameters across different tasks.}
  \label{mlp_gate_act+grad_keep10_binary}
  \vspace{-6pt}
\end{figure*}
\begin{figure}[t]
  \centering
  \includegraphics[width=0.97\columnwidth]{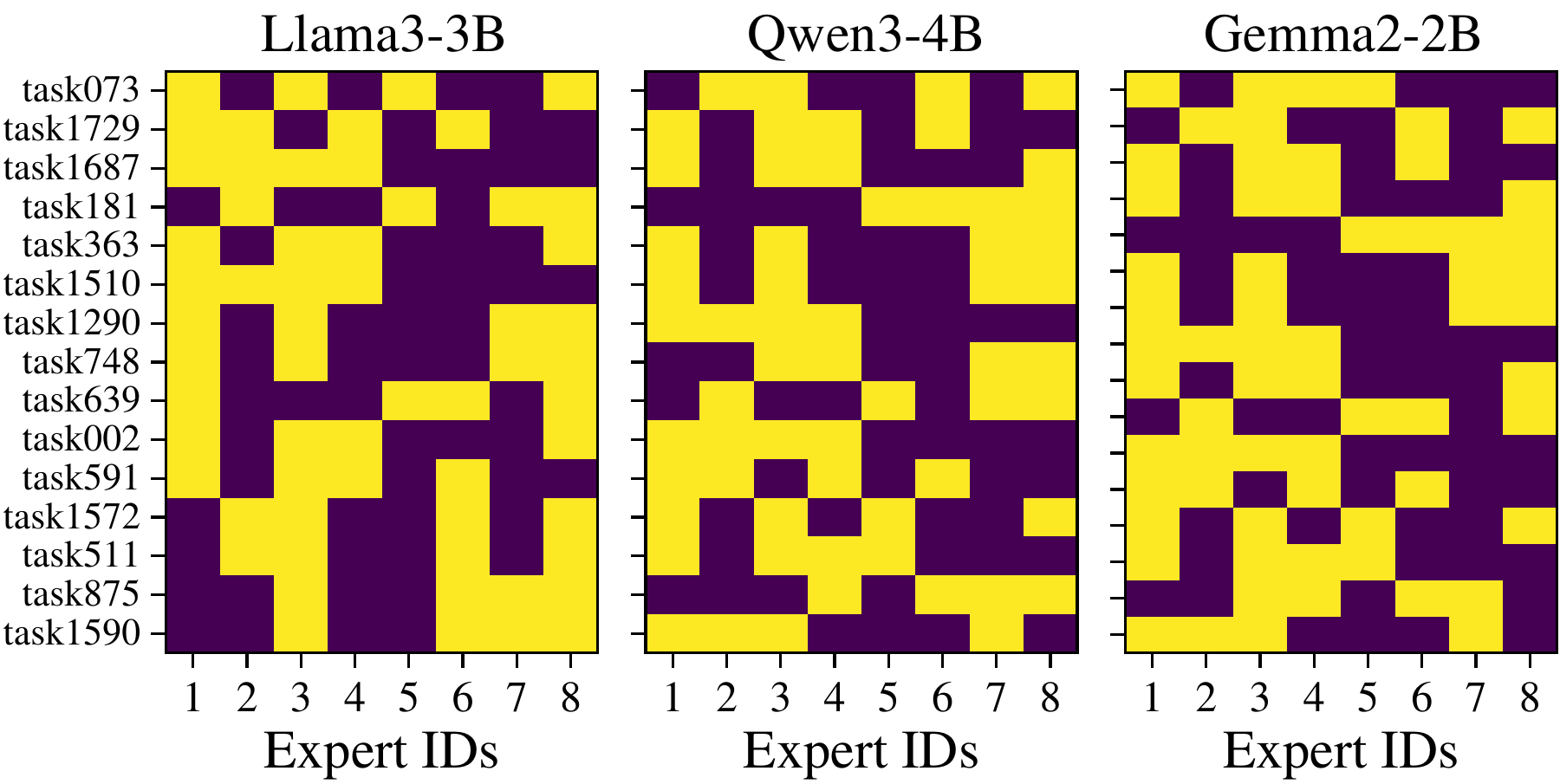}
  \caption{Activated experts in MoE-based LLMs by different tasks.}
  \label{moe_expert_activation}
  \vspace{-6pt}
\end{figure}

To further mitigate cross-task interference, we propose a new method, namely \emph{Basic Abilities Decomposition for multi-task Instruct-Tuning} (\baby). Specifically, \baby is inspired by an empirical observation in Fig.~\ref{mlp_gate_act+grad_keep10_binary} that certain neurons and parameters are always co-activated across tasks, and these co-activated ones can be naturally grouped into several base groups.
This motivates us to \textit{decompose LLM parameters into a set of orthogonal base groups, named \textbf{basic ability}, and formulate each ability as an MoE expert}, thereby mitigating cross-task interference by isolating these abilities, instead of tasks.
To implement this idea, \baby involves two key strategies. 
First, we decompose the LLM parameters via singular value decomposition \citep{meng2024pissa} into a linear combination of multiple large-singular-value \textit{low-rank adaptation} (LoRA) experts \citep{hu2022lora} and a residual term comprising low-singular-value ones. Since the decomposed LoRA experts are naturally orthogonal, they serve as initial basic abilities. Second, during multi-task instruct-tuning, to ensure that each LoRA expert consistently represents a distinct basic ability, we introduce a dynamic grouping strategy based on spherical clustering. It enforces orthogonality among the optimization gradients of different LoRA experts while maintaining similar angles among the rank-1 components within each LoRA.

To evaluate the performance of \baby, we conduct experiments on 6 different LLMs using \textit{SuperNI}~\citep{wang2022super}, a multi-task dataset encompassing 15 diverse tasks. Generally, \baby outperforms the state-of-the-art multi-task instruct-tuning approach, GainLoRA \citep{liang2025gated}, by an average of 2.68 \textsc{Rouge} scores, and quantitatively demonstrates reduced cross-task interference.
\textit{Our source code has been released in the repository \url{https://github.com/wangbing1416/BADIT}.}

Our contributions can be summarized as three-fold:
\begin{itemize}
    \item Our experiments reveal that different tasks activate shared parameter subsets, and certain shared ones are always co-activated across tasks. We analogize the co-activated subsets as the LLM's basic abilities.  
    \item We propose \baby, which decomposes the basic abilities from the LLM's parameters as MoE experts and represents any task as a combination of these abilities. 
    \item Extensive experiments are conducted to demonstrate that \baby can alleviate cross-task interference.
\end{itemize}

\section{Preliminary Experiments} \label{sec:2}

In this section, we present preliminary experiments demonstrating that, during the multi-task instruct-tuning of LLMs, their activated parameters can be organized as multiple base groups (we refer to them as \textit{basic abilities}).

\vspace{-3pt}
\subsection{Experimental Settings}

Following \citet{zhao2024sapt} and \citet{liang2025gated}, we conduct an empirical study using 15 different tasks from \textit{SuperNI} \citep{wang2022super}, a multi-task instruct-tuning dataset. Our investigation analyzes activation patterns of neurons and parameters in Llama3-3B \citep{llama3modelcard} and Qwen3-4B \citep{yang2025qwen3}, and evaluates their task-shared activations. 
Furthermore, we integrate a lightweight MoE architecture, termed LoRAMoE \citep{dou2023loramoe}, into these LLMs to examine how different tasks activate these experts when each task is trained independently. We fix the number of LoRA experts at 8 and select the top-4 LoRA experts with the highest routing weights.

\noindent
\textbf{How to identify activated neurons and parameters across tasks?}
To investigate how different task datasets influence LLM parameters, we begin by analyzing the shared \textbf{neurons} activated by these tasks. Formally, given a dataset $\mathcal{D}_t$ for the $t$-th task, an activated neuron is defined as one hidden state whose output in a forward pass is non-zero or significantly different from zero. For example, consider the average activation in a feed-forward layer computed over the task dataset $\mathbf{h} = \frac{1}{|\mathcal{D}_t|} \sum _{\mathbf{x} \in \mathcal{D}_t}\sigma(\mathbf{W} \mathbf{x} + \mathbf{b})$, a neuron
$h_i$ is deemed activated if $|h_i| > \epsilon$, with $\epsilon$ being a small positive threshold.
Additionally, following \citet{song2024does}, we also examine activated LLM parameters through \textbf{gradients} and assess the task-shared parameters. Specifically, given $\mathcal{D}_t$, we calculate the Fisher information matrix \citep{wu2024an} towards the gradient of a specific parameter matrix $\mathbf{W}_i$ in full LLM parameters $\boldsymbol{\theta}$. We then use the diagonal entries of this matrix, each representing the expected variance of the score function for the corresponding parameter, as a measure of parameter activation as follows:
\begin{equation}
    F_{ii} = \mathbb{E}_{\mathbf{x} \sim \mathcal{D}_t} \left[ \left( \frac{\partial}{\partial \mathbf{W}_i} \log p_{\boldsymbol{\theta}} (y|\mathbf{x}) \right)^2 \right], \ \mathbf{W}_i \in \boldsymbol{\theta}, \nonumber
\end{equation}
where we implement $\log p_{\boldsymbol{\theta}} (y|\mathbf{x})$ by the supervised fine-tuning loss. 
Prior work suggests that LLMs primarily store knowledge in their feed-forward networks \citep{geva2021transformer,dai2022knowledge}, particularly within their gating layers \citep{song2024does}. Consequently, our subsequent experiments examine the gating layer, and the results for other components, \eg self-attention layers, and additional analyses across more LLMs, are provided in Appendix~\ref{appendix:investigation}.

\vspace{-3pt}
\subsection{Task-Specific Activations Are Still Shared}

The activated neurons and parameters of the two LLMs are visualized in Fig.~\ref{mlp_gate_act+grad_keep10_binary}. The two left subplots depict neuron activation patterns, where each row corresponds to the set of neurons activated for a specific task. The two right subplots display parameter-level activations, where each column represents a specific task and indicates the number of activated parameters in the corresponding row of the parameter matrix for that task. Additionally, Fig.~\ref{moe_expert_activation} illustrates how different tasks activate the 8 MoE experts, based on the routing weights produced by the gate module in the final layer of the LLMs. In all figures described above, activated components are marked in yellow.

The experimental results in Fig.~\ref{mlp_gate_act+grad_keep10_binary} show that, \textit{for each task, most activated neurons or parameters are shared with those of other tasks}. Some neurons and parameters are even shared across 15 tasks simultaneously. Notably, in Qwen3-4B, the activated neurons and parameters are predominantly shared among 15 tasks at lower IDs.
The results in Fig.~\ref{moe_expert_activation} also reveal that even with a relatively advanced MoE architecture, a substantial number of experts are jointly activated across tasks. These shared experts still suffer from interference due to conflicting gradients arising from different tasks.

\subsection{LLMs Can be Decomposed as Basic Abilities}

Based on the results presented in Fig.~\ref{mlp_gate_act+grad_keep10_binary}, we can draw several preliminary observations. First, whether considering neurons or LLM parameters, activations are consistently concentrated within a small subset, while the vast majority remain inactive. 
Second, we further observe that certain neurons and parameters are consistently co-activated across multiple tasks. These co-activated units tend to form a small number of recurring base activation patterns. For example, in both Llama3-3B and Qwen3-4B, there exist specific neurons and parameters that are simultaneously activated by exactly 15 tasks.
These observations suggest that such co-activated neurons and parameters likely fulfill shared functional roles—akin to the model possessing a repertoire of basic, reusable abilities, with each complex task being realized through a composition of these underlying primitives. This insight motivates our core approach: \textit{decomposing the distinct, orthogonal basic abilities} inherently encoded in LLMs. By isolating these basic units, we aim to disentangle task-specific interference.

\vspace{-3pt}
\section{The Proposed \baby Method}

In this section, we first present the technical preliminaries in Sec.~\ref{sec:2.1}. We then provide an overview of our proposed \baby method in Sec.~\ref{sec:2.2}. The two core steps of \baby are described in detail in Secs.~\ref{sec:2.3} and \ref{sec:2.4}, respectively.

\vspace{-3pt}
\subsection{Technical Preliminaries} \label{sec:2.1}

\noindent
\textbf{Multi-task instruct-tuning.}
Given $T$ natural language tasks $\{\mathcal{D}_t\}_{t=1}^T$, $\bigcap_{t=1}^T \mathcal{D}_t = \varnothing$, where $\mathcal{D}_t = \{(\mathbf{x}_{i}^t, y_i^t)\}_{i=1}^{|\mathcal{D}_t|}$ represents task-specific instruct samples, the goal of multi-task instruct-tuning is to tune a pre-trained LLM $\mathcal{F}(\cdot\ ; \boldsymbol{\theta}^0)$ across tasks while mitigating cross-task interference.
Formally, for the task $\mathcal{D}_t$, we optimize LLM parameters as
\begin{equation}
    \label{eq1}
    \boldsymbol{\theta}^t = \arg \min \limits _{\boldsymbol{\theta}} \frac{1}{|\mathcal{D}_t|} \sum \nolimits_{(\mathbf{x}_{i}^t, y_i^t) \in \mathcal{D}_t} 
\Big[ \mathcal{L} \big(\mathcal{F}(\mathbf{x}_{i}^t; \boldsymbol{\theta}), y_i^t \big) \Big],
\end{equation}
where $\mathcal{L}(\cdot, \cdot)$ is a supervised fine-tuning loss. 

\vspace{2pt} \noindent
\textbf{Low-rank adaptation (LoRA).}
LoRA is a parameter-efficient fine-tuning method that injects trainable low-rank matrices into each LLM weight matrix. During fine-tuning, the LLM weight matrix $\mathbf{W} \in \mathbb{R}^{m \times n}$ (\eg MLP layers) remains frozen, and the adapted forward pass becomes
\begin{equation}
    \label{eq3}
    \mathbf{h} = (\mathbf{W} + \Delta \mathbf{W}) \mathbf{x} = (\mathbf{W} + \mathbf{A} \mathbf{B}) \mathbf{x},
\end{equation}
where $\mathbf{A} \in \mathbb{R}^{m \times r}$ and $\mathbf{B} \in \mathbb{R}^{r \times n}$ are trainable low-rank matrices, $r \ll \min(m, n)$ controls the rank \wrt parameter efficiency. Additionally, $\mathbf{A}$ and $\mathbf{B}$ are respectively initialized by the \textit{Kaiming} initialization \citep{he2015delving} and a zero matrix to keep the initial $\Delta \mathbf{W} \mathbf{x} = \mathbf{0}$, preserving the pre-trained LLM's behavior at the start of fine-tuning.

\vspace{2pt} \noindent
\textbf{Singular value decomposition (SVD).}
SVD is a classical matrix factorization algorithm, and any matrix $\mathbf{W} \in \mathbb{R}^{m \times n}$ can be decomposed into a product of matrices as follows:
\begin{equation}
    \label{eq4}
    \mathbf{W} = \mathbf{U} \boldsymbol{\Sigma} \mathbf{V}^\top,
\end{equation}
where $\mathbf{U} \in \mathbb{R}^{m \times m}$ and $\mathbf{V} \in \mathbb{R}^{n \times n}$ are orthogonal matrices whose columns correspond to the left and right singular vectors of $\mathbf{W}$, respectively, and $\boldsymbol{\Sigma}$ is a diagonal matrix containing the singular values $\sigma_1, \ldots, \sigma_{\min(m, n)}$ arranged in descending order.
Previous works, \eg \textit{principal singular values and singular vectors adaptation} (PiSSA) \citep{meng2024pissa,yang2024corda}, apply SVD in Eq.~\eqref{eq4} to the LLM weight matrix $\mathbf{W}$ to identify its principal components, and formulate a LoRA analogous to Eq.~\eqref{eq3} as follows:
\begin{equation}
    \label{eq5}
    \mathbf{h} = \mathbf{W} \mathbf{x} \xrightarrow{\text{SVD}} \mathbf{U} \boldsymbol{\Sigma} \mathbf{V}^\top \mathbf{x}
    \xrightarrow{\text{truncate}} (\mathbf{A} \mathbf{B} + \mathbf{\widehat W}) \mathbf{x},
\end{equation}
where $\mathbf{\widehat W}$, $\mathbf{A}$, and $\mathbf{B}$ are respectively the residual matrix and two low-rank matrices as follows:
\begin{equation}
    \label{eq6}
    \begin{aligned}
    \mathbf{A} = \mathbf{U}_{[:r]} \,\mathrm{diag} \big(& \sqrt{\boldsymbol{\Sigma}_{[:r]}} \big),\ 
    \mathbf{B} = \mathrm{diag} \big( \sqrt{\boldsymbol{\Sigma}_{[:r]}} \big) \mathbf{V}_{[:r]}^\top, \\
    \mathbf{\widehat W} &= \mathbf{U}_{[r:]} \,\mathrm{diag} \big( \boldsymbol{\Sigma}_{[r:]} \big) \mathbf{V}_{[r:]}^{\top}.
    \end{aligned}
\end{equation}
where $\text{diag}(\cdot)$ denotes the diagonal matrix whose values are the singular values of the matrix.

\subsection{Method Overview} \label{sec:2.2}

The primary idea of \baby is to decompose the basic abilities embedded within the LLM parameters and leverage these abilities to represent any downstream task. Generally, our \baby method consists of two main steps: 
\textit{Basic ability decomposition} (BAD) disentangles an initial set of orthogonal LoRA experts from the LLM’s parameter matrices, where each expert corresponds to a different basic ability;
\textit{Dynamically orthogonal grouping} (DOG) dynamically groups rank-1 components within LoRA during training to preserve orthogonality among these basic abilities. The overall framework of \baby is depicted in Fig.~\ref{compare}.

\begin{figure}[t]
  \centering
  \includegraphics[width=1.0\columnwidth]{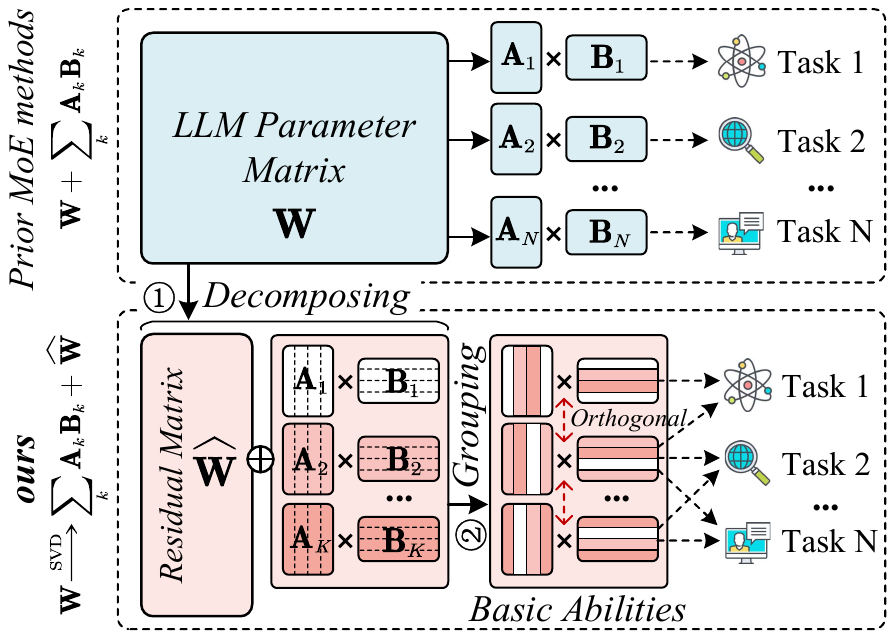}
  \caption{Comparison with prior multi-task instruct-tuning methods. We decouple key LLM parameters into basic abilities (\ie LoRA experts) and dynamically group their rank-1 components, enforcing orthogonality to represent distinct basic abilities.}
  \label{compare}
  \vspace{-5pt}
\end{figure}

Specifically, given a pre-trained LLM $\mathcal{F}(\cdot\ ; \boldsymbol{\theta}^0)$, the BAD step begins by decomposing each pre-trained weight matrix $\mathbf{W}^0 \in \boldsymbol{\theta}^0$ into a weighted combination of multiple LoRA components, as $\mathbf{W}^0 = \sum \nolimits _{k=1}^{K} \alpha_k \mathbf{A}_k \mathbf{B}_k + \mathbf{\widehat W},$ using the extended version of the SVD method in Eq.~\eqref{eq5}, where $K$ represents the number of LoRA experts, $\alpha_k$ is a learnable routing weight, and $\mathbf{\widehat W}$ represents the residual term. 
By the decomposition, we interpret each LoRA component $\mathbf{A}_k \mathbf{B}_k$ as an initial basic ability embedded within the pre-trained weights. During the supervised fine-tuning process in Eq.~\eqref{eq1}, we freeze the residual weights $\mathbf{\widehat W}$ and solely update the LoRA parameters.

To further encourage each LoRA expert to specialize in a different basic ability during fine-tuning, our method follows the hypothesis that \textit{the training gradients of different basic abilities should be mutually orthogonal} \citep{lin2022beyond,lin2022trgp}. To implement this principle, the DOG step introduces a dynamic adaptation mechanism that progressively groups the rank-1 components of the initialized LoRA experts $\{\mathbf{A}_k \mathbf{B}_k\}_{k=1}^K$ during the fine-tuning of the LLM. This grouping strategy actively enforces orthogonality among the training gradients of different LoRA modules.
We provide detailed descriptions of the BAD and DOG steps in Secs.~\ref{sec:2.3} and \ref{sec:2.4}, respectively.

\subsection{Basic Ability Decomposition} \label{sec:2.3}

The BAD step aims to decompose the pre-trained weight matrices $\forall\ \mathbf{W}^0 \in \boldsymbol{\theta}^0$ of the LLM into multiple LoRA experts, each intended to capture a different basic ability implicitly encoded in $\mathbf{W}^0$. Inspired by previous SVD methods on LLM parameters \citep{meng2024pissa,yang2024corda}, we decompose $\mathbf{W}^0$ as follows:
\begin{align}
    \label{eq7}
    \mathbf{W}^0 \xrightarrow{\text{SVD}} & \mathbf{U}_{[:rK]} \boldsymbol{\Sigma}_{[:rK]} \mathbf{V}_{[:rK]}^\top + \mathbf{U}_{[rK:]} \boldsymbol{\Sigma}_{[rK:]} \mathbf{V}_{[rK:]}^\top \nonumber \\
    = & \Big( \mathbf{U}_{[:rK]} \text{diag} \big( \sqrt{\boldsymbol{\Sigma}_{[:rK]}} \big) \Big) 
    \Big( \text{diag} \big( \sqrt{\boldsymbol{\Sigma}_{[:rK]}} \big) \mathbf{V}_{[:rK]}^\top \Big) \nonumber \\
    & \hspace{0cm} + \mathbf{U}_{[rK:]} \boldsymbol{\Sigma}_{[rK:]} \mathbf{V}_{[rK:]}^\top \\
    \xrightarrow{\text{truncate}} & \sum \nolimits _{k=1}^{K} \Big( \mathbf{U}_{[r(k-1):rk]} \text{diag} \big( \sqrt{\boldsymbol{\Sigma}_{[r(k-1):rk]}} \big) \Big)  \nonumber \\
    & \hspace{1.12cm} \Big( \text{diag} \big( \sqrt{\boldsymbol{\Sigma}_{[r(k-1):rk]}} \big) \mathbf{V}_{[r(k-1):rk]}^\top \Big) \nonumber \\ 
    & \hspace{0cm} + \mathbf{U}_{[rK:]} \boldsymbol{\Sigma}_{[rK:]} \mathbf{V}_{[rK:]}^\top = \sum \nolimits _{k=1}^{K} \mathbf{A}_k \mathbf{B}_k + \mathbf{\widehat W}, \nonumber
\end{align}
where $\mathbf{A}_k \mathbf{B}_k$ denotes the $k$-th LoRA expert with $r$ ranks, which encapsulates the $k$-th basic ability. These LoRA modules correspond to the singular vectors associated with the top $rK$ largest singular values obtained via SVD, \ie they capture the most salient components of the original parameter matrix. 
Meanwhile, $\mathbf{\widehat W}$ represents a residual term that accounts for the less important parameters in the pre-trained weights (those associated with relatively small singular values), and we freeze its gradients during the subsequent fine-tuning process.
In fact, following the aforementioned SVD-based decoupling, the parameters of different LoRA experts are inherently orthogonal. We provide a formal proof of this property in Appendix~\ref{app:1}. Consequently, these orthogonal LoRA experts naturally represent mutually orthogonal basic abilities.
Based on this decomposition, we obtain a group of initialized LoRA experts. To further align this model with effective MoE-based architectures \citep{dou2023loramoe,li2024mixlora}, we introduce learnable routing weights for each expert as follows:
\begin{equation}
    \label{eq8}
    \mathbf{W}^0 \xrightarrow{\text{SVD}} \sum \nolimits _{k=1}^{K} \alpha_k \mathbf{A}_k \mathbf{B}_k + \mathbf{\widehat W},
\end{equation}
where the routing weight $\alpha_k$, $k \in \{1, \ldots, K\}$ are initialized to 1 to keep it equal to Eq.~\eqref{eq7} and are optimized across different tasks. The routing weights are crucial for enabling the LLM to adaptively select the corresponding basic abilities required for inference on new tasks.

\subsection{Dynamically Orthogonal Grouping} \label{sec:2.4}

Since LoRA experts inevitably lose their orthogonality advantage after multi-task instruct-tuning, their nature to represent distinct basic abilities is compromised. To address it, the DOG step is introduced to dynamically regroup the rank-1 components \wrt the $rK$ singular values from SVD of each LoRA during training, ensuring that their gradients remain mutually orthogonal throughout the optimization.

Formally, given the dataset $(\mathbf{x}^t, y^t)$ from the $t$-th task, the training gradient of the globally-indexed $i$-th rank-1 LoRA component $\big[ \mathbf{a}_{i}; \mathbf{b}_{i}^\top \big], i \in \{1, \ldots, rK\}$ is formulated as
\begin{equation}
    \label{eq9}
    \mathbf{g}_i = \left[ \frac{\partial \mathcal{L} \big(\mathcal{F}(\mathbf{x}^t; \boldsymbol{\theta}), y^t \big)}{\partial \mathbf{a}_i}; 
    \frac{\partial \mathcal{L} \big(\mathcal{F}(\mathbf{x}^t; \boldsymbol{\theta}), y^t \big)}{\partial \mathbf{b}_i^\top}\right],
\end{equation}
where $\mathbf{a}_{i}$ and $\mathbf{b}_{i}$ denote a specific column of $\mathbf{A}$ and a specific row of $\mathbf{B}$, respectively, corresponding to a particular expert among $K$ experts and associated with one of the $r$ ranks.
Given the gradients of these rank-1 components, our goal is to derive a dynamic grouping strategy $\boldsymbol{\Pi} \in (0, 1)^{rK \times K}$ that reassigns the $rK$ rank-1 components into $K$ new rank-$r$ LoRA experts, where each $\pi_{i,k} = 1 / 0$ denotes whether the original $i$-th LoRA component is assigned / not assigned to the new $k$-th expert. We aim to optimize the strategy $\boldsymbol{\Pi}$ so that the gradient directions of the resulting $K$ new experts are as mutually orthogonal as possible, while the gradient directions of the rank-1 components within each expert are similar. This design ensures that each newly formed expert still retains a distinct and orthogonal basic ability.

More specifically, since we focus solely on the directions of gradients as the grouping criterion, we first normalize all gradients onto the unit sphere as follows:
\begin{equation}
    \label{eq9.5}
    \mathbf{\widehat g}_i = \mathbf{g}_i \ /\ \|\mathbf{g}_i\|_2.
\end{equation}
Then, to minimize the angular deviation among the rank-1 components’ gradients within each LoRA expert, we adopt the following objective:
\begin{equation}
    \label{eq10}
    \max _{\boldsymbol{\Pi}} \sum _{k=1}^K \sum _{i, j=1}^{rK} \pi_{i,k} \pi_{j,k} \left\langle \mathbf{\widehat g}_i, \mathbf{\widehat g}_j \right\rangle 
    = \max _{\boldsymbol{\Pi}} \sum _{k=1}^K \left\| \sum _{i=1}^{rK} \pi_{i,k} \mathbf{\widehat g}_i \right\|^2.
\end{equation}
Actually, optimizing this objective also inherently encourages larger, potentially even orthogonal, angular deviation between the gradients of different LoRA experts. A detailed derivation of this objective function and its natural advantages are provided in Appendix~\ref{app:2}.
To implement this objective and further enforce a stronger orthogonality constraint, we design an iterative optimization algorithm.
First, we obtain an initial grouping strategy $\boldsymbol{\Pi}^{(0)}$ by performing spherical $K$-means clustering by optimizing
\begin{equation}
    \label{eq11}
    \max _{\boldsymbol{\Pi}^{(0)},\mathcal{C}^{(0)}} \sum _{k=1}^K \sum _{i=1}^{rK} \pi_{i,k} \left\langle \mathbf{\widehat g}_i, \mathbf{c}_k^{(0)} \right\rangle, 
    \ \left\| \mathbf{c}_k^{(0)} \right\| = 1,
\end{equation}
where $\mathcal{C}^{(0)} = \{\mathbf{c}_k^{(0)}\}_{k=1}^K$ denotes the unit center direction of each cluster.
Then, the $\tau$-th iteration of our update algorithm to obtain the optimal $\boldsymbol{\Pi}$ consists of the following three steps:

\textbf{Step 1}: Compute the current centroid vector for each cluster.
\begin{equation}
    \label{eq12}
    \mathbf{c}_k^{(\tau)} = \sum _{i=1}^{rK} \pi_{i,k}^{(\tau)} \mathbf{\widehat g}_i,\ k \in \{1, \ldots, K\}.
\end{equation}
\textbf{Step 2}: To orthogonalize these cluster centroids, we perform SVD on the matrix formed by these centroids as
\begin{equation}
    \label{eq13}
    \mathbf{C}^{(\tau)} = \big[ \mathbf{c}_1^{(\tau)}, \ldots, \mathbf{c}_K^{(\tau)} \big] = \mathbf{U}_c \boldsymbol{\Sigma}_c \mathbf{V}_c^\top,
\end{equation}
and denote the resulting orthogonal basis as $\mathbf{Q}^{(\tau)} = \mathbf{U}_c \mathbf{V}_c^\top$. Accordingly, the orthogonalized cluster directions, given by the columns $\{\mathbf{q}_k^{(\tau)}\}_{k=1}^K = \mathbf{Q}^{(\tau)}$, satisfy the required orthogonality constraint $\big(\mathbf{q}_k^{(\tau)}\big)^\top \mathbf{q}_k^{(\tau)} = 1$ and $\big(\mathbf{q}_k^{(\tau)}\big)^\top \mathbf{q}_{l \neq k}^{(\tau)} = 0$.  

\textbf{Step 3}: Compute the similarity $\big\langle \mathbf{\widehat g}_i, \mathbf{q}_k^{(\tau)} \big\rangle$ between the gradient and the orthogonalized cluster directions, and solve the following integer optimization problem to obtain the updated policy  $\boldsymbol{\Pi}^{(\tau)}$:
\begin{equation}
    \label{eq14}
    \max _{\boldsymbol{\Pi}^{(\tau)}} \sum _{k=1}^K \sum _{i=1}^{rK} \pi_{i,k}^{(\tau)} \big\langle \mathbf{\widehat g}_i, \mathbf{q}_k^{(\tau)} \big\rangle, \
    \sum _{i=1}^{rK} \pi_{i,k}^{(\tau)} = r.
\end{equation}

The iterative algorithm terminates when the strategy $\boldsymbol{\Pi}$ stabilizes (\ie no longer changes) or after a maximum of 10 optimization steps. Using the final strategy $\boldsymbol{\Pi}$, we regroup the rank-1 LoRA components accordingly. Notably, the output of the MoE architecture remains invariant after our regrouping, even though each expert possesses its own routing weights $\alpha$, a fact that is proven in Appendix~\ref{app:3}.

\section{Experiments}

In this section, we implement experiments to demonstrate the performance of our proposed \baby. More experimental results, including intra- and inter-expert gradient angles, sensitivity analysis, and detailed performance across different tasks, are provided in Appendix~\ref{app:experiments}.

\begin{table*}[t]
\centering
\renewcommand\arraystretch{1.03}
  \caption{Experimental results of \baby under two implementation setting: mixed training and sequential training, respectively. We repeat experiments with 5 different seeds, and report their average scores and standard deviations.}
  \label{result}
  \small
  \setlength{\tabcolsep}{5pt}{
  \begin{tabular}{m{0.3cm}<{\centering}m{4.1cm}m{1.65cm}<{\centering}m{1.65cm}<{\centering}m{1.65cm}<{\centering}m{1.65cm}<{\centering}m{1.65cm}<{\centering}m{1.65cm}<{\centering}}
    \toprule
    & \multirow{2}{*}{\quad \quad \quad \quad Method} & \multicolumn{2}{c}{\textbf{Mixed Training}} & \multicolumn{4}{c}{\textbf{Sequential Training}} \\
    \cmidrule(r){3-4} \cmidrule(r){5-8}
     & & \textsc{Rouge}$\uparrow$ & Forward$\uparrow$ & \textsc{Rouge}$\uparrow$ & Forget Rate$\downarrow$ & Forward$\uparrow$ & Backward$\uparrow$ \\
    \hline
    \multirow{6}{*}{\rotatebox{90}{\textbf{Qwen3-8B}}}
     & LoRA \citep{hu2022lora} & 54.22$\pm$0.72 & -1.66$\pm$0.73 & 47.08$\pm$1.95 & 9.21$\pm$2.24 & -0.11$\pm$0.83 & -8.11$\pm$2.19 \\
     & OLoRA \citep{wang2023orthogonal} & 54.53$\pm$0.46 & -0.45$\pm$0.46 & 48.57$\pm$3.15 & 8.73$\pm$3.18 & -1.15$\pm$0.39 & -6.78$\pm$2.95 \\
     & LoRAMoE \citep{dou2023loramoe} & 54.64$\pm$0.71 & -0.57$\pm$0.71 & 48.07$\pm$0.79 & 8.47$\pm$0.59 & -1.26$\pm$0.44 & -6.23$\pm$0.55 \\
     & PiSSA \citep{meng2024pissa} & 53.57$\pm$0.35 & -2.30$\pm$0.35 & 47.69$\pm$2.52 & 7.50$\pm$2.80 & -0.06$\pm$0.79 & -6.83$\pm$2.32 \\
     & GainLoRA \citep{liang2025gated} & 54.33$\pm$0.73 & -1.78$\pm$0.73 & 48.44$\pm$1.02 & 8.96$\pm$0.95 & -0.73$\pm$1.53 & -6.42$\pm$0.71 \\
     & \cellcolor{lightgrayv} $\boldsymbol{\star}$ \textbf{\baby} (ours) & \cellcolor{lightgrayv} \textbf{55.87}$\pm$0.35 & \cellcolor{lightgrayv} \textbf{0.92}$\pm$0.35 & \cellcolor{lightgrayv} \textbf{50.86}$\pm$1.42 & \cellcolor{lightgrayv} \textbf{6.95}$\pm$1.57 & \cellcolor{lightgrayv} \textbf{0.44}$\pm$0.53 & \cellcolor{lightgrayv} \textbf{-5.43}$\pm$1.47 \\
    \hline
    \multirow{6}{*}{\rotatebox{90}{\textbf{Qwen3-4B}}}
     & LoRA \citep{hu2022lora} & 53.31$\pm$0.92 & -1.24$\pm$0.92 & 48.54$\pm$1.63 & 8.53$\pm$1.58 & -1.31$\pm$0.37 & -6.68$\pm$1.34 \\
     & OLoRA \citep{wang2023orthogonal} & 53.50$\pm$0.76 & 0.15$\pm$0.76 & 47.83$\pm$1.54 & 8.78$\pm$2.90 & -1.77$\pm$0.61 & -6.47$\pm$2.66 \\
     & LoRAMoE \citep{dou2023loramoe} & 53.63$\pm$0.25 & -0.68$\pm$0.25 & 48.58$\pm$1.79 & 8.50$\pm$1.59 & -0.74$\pm$0.44 & -6.25$\pm$1.55 \\
     & PiSSA \citep{meng2024pissa} & 52.87$\pm$0.70 & -0.45$\pm$0.70 & 48.26$\pm$1.79 & 8.05$\pm$1.44 & -0.64$\pm$0.67 & -6.12$\pm$1.64 \\
     & GainLoRA \citep{liang2025gated} & 53.24$\pm$0.72 & -1.33$\pm$0.72 & 48.07$\pm$1.70 & 8.47$\pm$1.50 & -1.26$\pm$0.44 & -6.23$\pm$1.75 \\
     & \cellcolor{lightgrayv} $\boldsymbol{\star}$ \textbf{\baby} (ours) & \cellcolor{lightgrayv} \textbf{55.22}$\pm$0.42 & \cellcolor{lightgrayv} \textbf{0.91}$\pm$0.42 & \cellcolor{lightgrayv} \textbf{50.74}$\pm$2.27 & \cellcolor{lightgrayv} \textbf{5.95}$\pm$2.12 & \cellcolor{lightgrayv} \textbf{0.62}$\pm$0.70 & \cellcolor{lightgrayv} \textbf{-4.21}$\pm$2.27 \\
    \hline
    \specialrule{0em}{0.5pt}{0.5pt}
    \hline
    \multirow{6}{*}{\rotatebox{90}{\textbf{Llama3-8B}}}
     & LoRA \citep{hu2022lora} & 52.58$\pm$0.79 & -1.02$\pm$0.79 & 44.88$\pm$3.61 & 12.41$\pm$3.49 & -1.90$\pm$1.48 & -10.23$\pm$3.63 \\
     & OLoRA \citep{wang2023orthogonal} & 52.89$\pm$0.89 & -0.41$\pm$0.89 & 45.25$\pm$2.84 & 11.45$\pm$3.04 & 0.00$\pm$0.77 & -10.01$\pm$2.88 \\
     & LoRAMoE \citep{dou2023loramoe} & 52.95$\pm$0.47 & -1.96$\pm$0.47 & 45.47$\pm$3.68 & 11.78$\pm$3.25 & 0.27$\pm$0.93 & -9.71$\pm$3.04 \\
     & PiSSA \citep{meng2024pissa} & 52.88$\pm$0.97 & -0.72$\pm$0.97 & 45.23$\pm$2.81 & 12.54$\pm$3.07 & -0.63$\pm$0.61 & -10.67$\pm$3.04 \\
     & GainLoRA \citep{liang2025gated} & 52.71$\pm$1.07 & -2.13$\pm$1.07 & 45.04$\pm$2.78 & 12.70$\pm$2.63 & -1.17$\pm$0.85 & -10.94$\pm$3.07 \\
     & \cellcolor{lightgrayv} $\boldsymbol{\star}$ \textbf{\baby} (ours) 
     & \cellcolor{lightgrayv} \textbf{54.75}$\pm$0.69 & \cellcolor{lightgrayv} \textbf{0.79}$\pm$0.69 & \cellcolor{lightgrayv} \textbf{48.83}$\pm$1.89 & \cellcolor{lightgrayv} \textbf{8.57}$\pm$1.75 & \cellcolor{lightgrayv} \textbf{1.71}$\pm$1.53 & \cellcolor{lightgrayv} \textbf{-3.49}$\pm$2.67 \\
    \hline
    \multirow{6}{*}{\rotatebox{90}{\textbf{Llama3-3B}}}
     & LoRA \citep{hu2022lora} & 50.07$\pm$0.97 & -0.23$\pm$0.96 & 45.14$\pm$3.05 & 9.79$\pm$2.17 & -0.17$\pm$0.90 & -4.91$\pm$2.91 \\
     & OLoRA \citep{wang2023orthogonal} & 49.83$\pm$0.70 & -0.47$\pm$0.69 & 45.59$\pm$3.10 & 10.07$\pm$3.01 & -0.74$\pm$0.91 & -5.95$\pm$2.52 \\
     & LoRAMoE \citep{dou2023loramoe} & 49.72$\pm$0.52 & -0.59$\pm$0.51 & 46.20$\pm$2.90 & 8.58$\pm$2.82 & -0.59$\pm$1.38 & -3.51$\pm$2.63 \\
     & PiSSA \citep{meng2024pissa} & 49.64$\pm$0.60 & -0.27$\pm$0.59 & 45.49$\pm$2.95 & 9.13$\pm$2.34 & -0.93$\pm$1.00 & -5.27$\pm$2.39 \\
     & GainLoRA \citep{liang2025gated} & 49.60$\pm$0.60 & -0.23$\pm$0.60 & 45.34$\pm$3.14 & 9.67$\pm$2.63 & -0.78$\pm$1.23 & -5.66$\pm$3.24 \\
     & \cellcolor{lightgrayv} $\boldsymbol{\star}$ \textbf{\baby} (ours) & \cellcolor{lightgrayv} \textbf{51.45}$\pm$0.72 & \cellcolor{lightgrayv} \textbf{0.61}$\pm$0.72 & \cellcolor{lightgrayv} \textbf{48.13}$\pm$2.26 & \cellcolor{lightgrayv} \textbf{6.25}$\pm$2.45 & \cellcolor{lightgrayv} \textbf{0.71}$\pm$0.85 & \cellcolor{lightgrayv} \textbf{-2.57}$\pm$2.86 \\
    \hline
    \specialrule{0em}{0.5pt}{0.5pt}
    \hline
    \multirow{6}{*}{\rotatebox{90}{\textbf{Gemma2-9B}}}
     & LoRA \citep{hu2022lora} & 55.16$\pm$0.49 & -1.35$\pm$0.49 & 46.80$\pm$2.01 & 10.46$\pm$2.05 & -0.32$\pm$0.79 & -9.38$\pm$2.10 \\
     & OLoRA \citep{wang2023orthogonal} & 54.78$\pm$0.81 & -1.05$\pm$0.81 & 46.05$\pm$1.50 & 11.20$\pm$1.64 & -0.69$\pm$0.69 & -13.57$\pm$1.28 \\
     & LoRAMoE \citep{dou2023loramoe} & 54.78$\pm$0.64 & -0.91$\pm$0.62 & 46.98$\pm$2.89 & 11.99$\pm$2.99 & 0.17$\pm$0.45 & -10.11$\pm$2.76 \\
     & PiSSA \citep{meng2024pissa} & 54.20$\pm$1.07 & -1.78$\pm$1.07 & 44.93$\pm$3.09 & 12.27$\pm$3.26 & -2.24$\pm$0.27 & -14.80$\pm$3.10 \\
     & GainLoRA \citep{liang2025gated} & 54.29$\pm$0.45 & -1.88$\pm$0.45 & 46.51$\pm$2.22 & 12.62$\pm$2.77 & -0.62$\pm$0.48 & -10.80$\pm$2.67 \\
     & \cellcolor{lightgrayv} $\boldsymbol{\star}$ \textbf{\baby} (ours) 
     & \cellcolor{lightgrayv} \textbf{57.35}$\pm$0.67 & \cellcolor{lightgrayv} \textbf{0.54}$\pm$0.67 & \cellcolor{lightgrayv} \textbf{48.34}$\pm$2.11 & \cellcolor{lightgrayv} \textbf{10.31}$\pm$2.22 & \cellcolor{lightgrayv} \textbf{0.39}$\pm$0.80 & \cellcolor{lightgrayv} \textbf{-8.45}$\pm$1.46 \\
    \hline
    \multirow{6}{*}{\rotatebox{90}{\textbf{Gemma2-2B}}}
     & LoRA \citep{hu2022lora} & 51.72$\pm$0.96 & -1.30$\pm$0.96 & 43.16$\pm$3.06 & 13.21$\pm$3.42 & -0.03$\pm$1.77 & -11.28$\pm$2.94 \\
     & OLoRA \citep{wang2023orthogonal} & 51.82$\pm$0.71 & -0.87$\pm$0.71 & 44.93$\pm$3.12 & 10.94$\pm$3.36 & -1.03$\pm$0.31 & -10.55$\pm$2.61 \\
     & LoRAMoE \citep{dou2023loramoe} & 51.36$\pm$0.36 & -1.05$\pm$0.37 & 43.88$\pm$2.68 & 11.66$\pm$3.12 & -0.86$\pm$0.17 & -9.95$\pm$2.69 \\
     & PiSSA \citep{meng2024pissa} & 50.96$\pm$0.44 & -1.28$\pm$0.44 & 42.57$\pm$2.90 & 12.74$\pm$3.59 & -1.65$\pm$0.62 & -10.57$\pm$3.37 \\
     & GainLoRA \citep{liang2025gated} & 51.26$\pm$0.76 & -1.70$\pm$0.76 & 44.16$\pm$2.36 & 11.81$\pm$2.38 & -0.92$\pm$0.57 & -10.27$\pm$2.63 \\
     & \cellcolor{lightgrayv} $\boldsymbol{\star}$ \textbf{\baby} (ours) & \cellcolor{lightgrayv} \textbf{53.60}$\pm$0.36 & \cellcolor{lightgrayv} \textbf{0.70}$\pm$0.36 & \cellcolor{lightgrayv} \textbf{46.76}$\pm$2.16 & \cellcolor{lightgrayv} \textbf{9.30}$\pm$2.98 & \cellcolor{lightgrayv} \textbf{0.25}$\pm$0.49 & \cellcolor{lightgrayv} \textbf{-7.65}$\pm$2.86 \\
    \bottomrule
  \end{tabular} }
\end{table*}

\subsection{Experimental Settings}

\noindent
\textbf{Datasets and evaluation.}
We evaluate our \baby method on \textit{SuperNI} \citep{wang2022super}, a multi-task instruct-tuning dataset comprising 15 distinct natural language tasks, \eg sentiment analysis, each accompanied by its own training and evaluation sets. Detailed statistics and descriptions of the dataset are provided in Appendix~\ref{sec.appendix.detail.dataset}.
During training, we assess our approach under two multi-task training paradigms: \textit{mixed training} and \textit{sequential training}. In mixed training, we mix the training sets of all 15 tasks and train them simultaneously. In sequential training, we train the tasks sequentially according to a prescribed order, and we report the average performance across five different task orders to mitigate the influence of ordering.

\noindent
\textbf{Foundation LLMs.}
We validate our method on 6 instruct-tuned LLMs spanning different LLM families and sizes, including Qwen3-8B, Qwen3-4B \citep{yang2025qwen3}, Llama3-8B, Llama3-3B \citep{llama3modelcard}, Gemma2-9B, and Gemma2-2B \citep{riviere2024gemma}. Their detailed model cards are provided in Appendix~\ref{sec.appendix.detail.model}.

\noindent
\textbf{Baselines.}
We compare our approach against two LoRA series methods: LoRA \citep{hu2022lora} and PiSSA \citep{meng2024pissa}, and three MoE-based methods specifically designed for multi-task instruct-tuning: OLoRA \citep{wang2023orthogonal}, LoRAMoE \citep{dou2023loramoe}, and GainLoRA \citep{liang2025gated}. Detailed descriptions of these methods are provided in Appendix~\ref{sec.appendix.detail.baseline}.

In addition to the experimental setup described above, implementation details of our method and the precise formulations of the evaluation metrics we use can be found in Appendices~\ref{sec.appendix.detail.implementation} and \ref{sec.appendix.detail.metrics}, respectively.

\subsection{Main Results}

Our main experimental results are presented in Table~\ref{result}, where each experiment is repeated 5 times with different random seeds, and we report the average performance and its corresponding standard deviations. Notably, under the sequential training setting, each random seed also determines a different task order; consequently, the reported standard deviations in this setting are generally larger due to the added sensitivity to task sequencing. 
Among the evaluation metrics we employ, \textsc{Rouge} denotes the average performance across all tasks. Forward measure the performance gain achieved by multi-task training compared to training each task individually. Forget Rate and Backward quantify the impact on previously learned tasks when training on new tasks for the sequential training setting. The formal definitions of these metrics are provided in Appendix~\ref{sec.appendix.detail.metrics}.

Generally, our method consistently outperforms existing multi-task instruct-tuning approaches across all foundation LLMs and evaluation metrics. For example, \baby achieves an average improvement of approximately 2.68 points over the state-of-the-art method, GainLoRA, on the \textsc{Rouge} metric. This not only demonstrates that our approach effectively enhances the multi-task performance of LLMs but also shows its ability to substantially mitigate interference among tasks and alleviate catastrophic forgetting in the sequential training scenario.

\subsection{Ablation Study}

We conduct an ablation study in Table~\ref{ablation} to investigate the impact of removing two key components: BAD and DOG.  
Specifically, \textbf{w/o BAD} means we start from a LoRAMoE initialized with zero vectors and enforce orthogonality among rank-1 LoRA components only during training, without the SVD initialization.  
\textbf{w/o DOG} means we apply SVD-based decoupled initialization but do not enforce orthogonality among experts during training.

The experiments are carried out on Qwen3-8B, Llama3-8B, and Gemma-9B, and we report \textsc{Rouge} scores under both mixed and sequential training settings.
Generally, removing either component consistently degrades LLM performance, underscoring the critical role both modules play in enhancing multi-task capability and mitigating interference among tasks. Moreover, we observe that the DOG module generally contributes more substantially: while SVD-based initialization alone ensures that distinct LoRA components remain orthogonal, thus represent diverse basic abilities, only during the early stages of training, this orthogonality gradually erodes as training progresses. Consequently, dynamically preserving orthogonality throughout training (as enabled by DOG) proves essential for sustaining performance gains and robustness in two multi-task settings.

\subsection{Intra- and Inter-Expert Gradient Angles}

To investigate whether our method can dynamically promote orthogonality among expert gradients through clustering, we visualize in Fig.~\ref{compare_angle} the evolution of two angular metrics across training epochs: (1) the inter-expert gradient angle, \ie the angle between gradients of different experts, and (2) the intra-expert gradient angle, \ie the angle between gradients of different rank-1 components within the same expert.
The results reveal that, in contrast to LoRAMoE, which lacks dynamic regrouping, our method consistently maintains inter-expert gradient angles near 90° throughout training, indicating sustained orthogonality between experts. Meanwhile, the intra-expert gradient angles in our approach remain around 60°, approximately 20° lower than those observed in LoRAMoE. Therefore, these findings demonstrate that our method effectively preserves orthogonality among expert gradients across the entire training process, thereby enabling a more precise and disentangled representation of basic abilities.
Additional results across more LLMs are provided in Appendix~\ref{app:experiments.angles}.

\subsection{Computation Budgets}

In this section, we analyze the computational overhead of our proposed two-stage method. A formal $\mathcal{O}$ time complexity analysis is provided in Appendix~\ref{app:complexity}. Specifically, Table~\ref{time} reports, across 6 LLMs, the relative increase in computational time cost of our method compared to LoRAMoE, under identical settings of expert count and LoRA rank.
Overall, our method incurs approximately 1.22$\times$ the training time of LoRAMoE, with the majority of this additional cost attributed to the DOG stage. This observation leads to two key insights:  
First, despite involving an SVD-based decoupling step, our BAD stage does not increase, and may even reduce, computational time due to faster convergence during training.  
Second, the DOG stage introduces extra overhead because it performs spherical $K$-means clustering and integer optimization on parameter gradients, which are computationally intensive operations primarily executed on CPUs.
Although our approach entails a modest increase in training time, the resulting substantial gains in model performance render this trade-off well justified and acceptable.

\begin{table}[t]
\centering
\renewcommand\arraystretch{1.02}
  \caption{Ablation study of the BAD and DOG methods in \baby. ST R., ST F., and MT R. indicate the \textsc{Rouge} and forget rate under the sequential training, and the \textsc{Rouge} under the mixed training, respectively. Three LLMs use their 8B and 9B versions.}
  \label{ablation}
  \small
  \setlength{\tabcolsep}{5pt}{
  \begin{tabular}{m{0.3cm}<{\centering}m{2.3cm}m{0.8cm}<{\centering}m{0.8cm}<{\centering}m{1.1cm}<{\centering}m{0.7cm}<{\centering}}
    \toprule
    & \quad Method & ST R. & ST F. & MT R. & $\boldsymbol{\Delta}$ \\
    \hline
    \multirow{4}{*}{\rotatebox{90}{\textbf{Qwen3}}} 
    & \cellcolor{lightgrayv} \textbf{\baby} & \cellcolor{lightgrayv} 50.86 & \cellcolor{lightgrayv} 6.95 & \cellcolor{lightgrayv} 55.87 & \cellcolor{lightgrayv} - \\
    &  w/o BAD & 50.07 & 7.34 & 55.02 & \textbf{2.03} \\
    &  w/o DOG & 49.70 & 8.20 & 54.85 & \textbf{3.43} \\
    & w/o BAD \& DOG & 48.07 & 8.47 & 54.64 & \textbf{5.54} \\
    \hline
    \multirow{4}{*}{\rotatebox{90}{\textbf{Llama3}}} 
    & \cellcolor{lightgrayv} \textbf{\baby} & \cellcolor{lightgrayv} 48.83 & \cellcolor{lightgrayv} 8.57 & \cellcolor{lightgrayv} 54.75 & \cellcolor{lightgrayv} - \\
    &  w/o BAD & 47.92 & 9.30 & 54.02 & \textbf{2.37} \\
    &  w/o DOG & 47.33 & 9.74 & 53.74 & \textbf{3.68} \\
    & w/o BAD \& DOG & 45.47 & 11.78 & 52.95 & \textbf{8.37} \\
    \hline
    \multirow{4}{*}{\rotatebox{90}{\textbf{Gemma2}}} 
    & \cellcolor{lightgrayv} \textbf{\baby} & \cellcolor{lightgrayv} 48.34 & \cellcolor{lightgrayv} 10.31 & \cellcolor{lightgrayv} 57.35 & \cellcolor{lightgrayv} - \\
    &  w/o BAD & 47.78 & 10.57 & 56.71 & \textbf{1.46} \\
    &  w/o DOG & 47.60 & 11.07 & 56.03 & \textbf{2.82} \\
    & w/o BAD \& DOG & 46.98 & 11.99 & 54.78 & \textbf{5.61} \\
    \bottomrule
  \end{tabular} }
\end{table}

\begin{figure}[t]
  \centering
  \includegraphics[width=1.0\columnwidth]{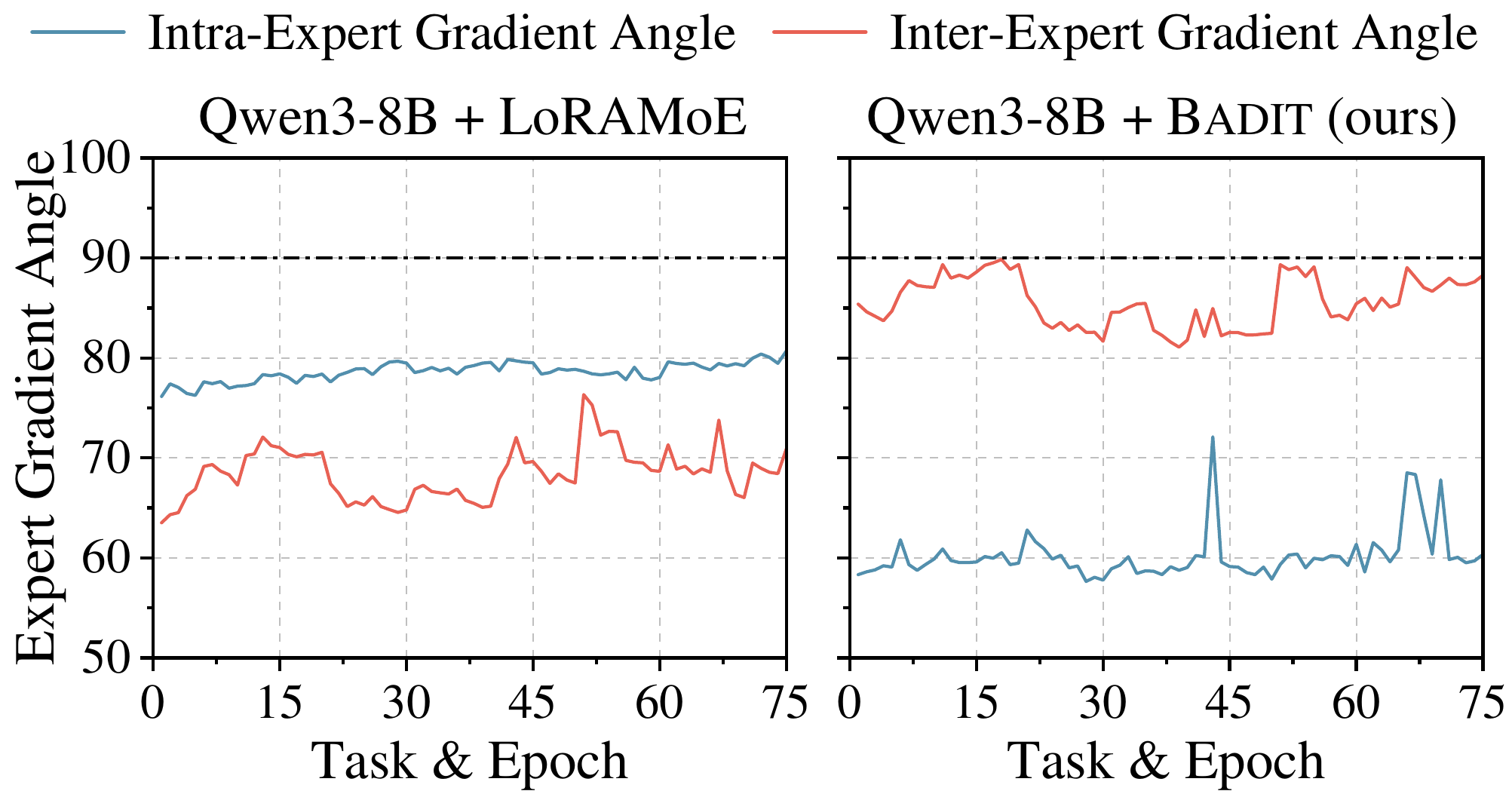}
  \caption{Intra- and inter-expert gradient angles across epochs.}
  \label{compare_angle}
  \vspace{-3pt}
\end{figure}

\begin{table}[t]
\centering
\renewcommand\arraystretch{1.05}
  \caption{Time cost of different methods across 6 LLMs under the sequential training setting.}
  \label{time}
  \scriptsize
  \setlength{\tabcolsep}{5pt}{
  \begin{tabular}{m{1.3cm}<{\centering}m{0.6cm}<{\centering}m{1.1cm}<{\centering}m{0.65cm}<{\centering}m{0.65cm}<{\centering}m{0.75cm}<{\centering}m{0.75cm}<{\centering}}
    \toprule
    Method & LoRA & LoRAMoE & PiSSA & \cellcolor{lightgrayv} \baby & \cellcolor{lightgrayv} - DOG & \cellcolor{lightgrayv} - BAD \\
    \hline
    Qwen3-8B & 0.92$\times$ & 1$\times$ & 1.18$\times$ & \cellcolor{lightgrayv} 1.25$\times$ & \cellcolor{lightgrayv} 1.14$\times$ & \cellcolor{lightgrayv} 1.24$\times$ \\
    Qwen3-4B & 0.75$\times$ & 1$\times$ & 0.86$\times$ & \cellcolor{lightgrayv} 1.18$\times$ & \cellcolor{lightgrayv} 0.96$\times$ & \cellcolor{lightgrayv} 1.20$\times$ \\
    Llama3-8B & 0.68$\times$ & 1$\times$ & 0.84$\times$ & \cellcolor{lightgrayv} 1.16$\times$ & \cellcolor{lightgrayv} 0.95$\times$ & \cellcolor{lightgrayv} 1.15$\times$ \\
    Llama3-3B & 0.71$\times$ & 1$\times$ & 0.61$\times$ & \cellcolor{lightgrayv} 1.27$\times$ & \cellcolor{lightgrayv} 0.96$\times$ & \cellcolor{lightgrayv} 1.27$\times$ \\
    Gemma2-9B & 0.74$\times$ & 1$\times$ & 1.08$\times$ & \cellcolor{lightgrayv} 1.25$\times$ & \cellcolor{lightgrayv} 1.02$\times$ & \cellcolor{lightgrayv} 1.24$\times$ \\
    Gemma2-2B & 0.79$\times$ & 1$\times$ & 0.69$\times$ & \cellcolor{lightgrayv} 1.24$\times$ & \cellcolor{lightgrayv} 1.01$\times$ & \cellcolor{lightgrayv} 1.24$\times$ \\
    \hline
    Average & \textbf{0.76}$\times$ & \textbf{1}$\times$ & \textbf{0.87}$\times$ & \cellcolor{lightgrayv} \textbf{1.22}$\times$ & \cellcolor{lightgrayv} \textbf{1.00}$\times$ & \cellcolor{lightgrayv} \textbf{1.22}$\times$ \\
    \bottomrule
  \end{tabular} }
\end{table}

\section{Related Work}

Cross-task interference is a critical challenge in the multi-task instruct-tuning of LLMs \citep{luo2023an,jang2023exploring}. Generally, existing approaches can be broadly classified into three categories \citep{wang2024a,zhou2024continual}: 
\textit{rehearsal-based} methods \citep{huang2024mitigating,wang2024inscl} store and replay historical data samples;
\textit{optimization-based} methods \citep{li2024revisiting,jiang2025unlocking} introduce training regularizations, \eg orthogonal constraints \citep{wang2023orthogonal}, to guide the optimization direction; and
\textit{architecture-based} methods \citep{dou2023loramoe,zhao2024safe,yang2025is} employ parameter-efficient modules designed to isolate task-specific knowledge and mitigate interference across tasks.

In this section, we focus primarily on architecture-based methods, and more comprehensive literature reviews can be seen in Appendix~\ref{sec.appendix.related}.
Specifically, the basic idea of architecture-based methods is to mitigate cross-task interference by explicitly assigning task-specific, independent trainable parameters. For example, \citet{wang2023rehearsal} allocate a small set of private parameters, \eg soft prompts \citep{li2021prefix}, and train them alongside a shared pre-trained model. Similarly, SAPT \citep{zhao2024sapt} addresses knowledge transfer among private parameters by introducing a shared attention module.
More recently, several works have alleviated interference by combining LoRA \citep{hu2022lora} with MoE architectures \citep{shazeer2017outrageously}. These methods design a LoRA-based expert for each task within an MoE framework and employ a routing mechanism to automatically select one or more appropriate experts for incoming tasks \citep{dou2023loramoe,ma2024modula,feng2024mixture,li2024mixlora,liu2024when,chen2024octavius}. Our approach builds upon this paradigm by dynamically adapting the expert models so that each expert encapsulates a distinct and orthogonal basic ability.

\section{Conclusion} \label{sec:5}

In this work, we identify a fundamental limitation in existing multi-task instruct-tuning methods: despite efforts to isolate task-specific parameters, substantial parameter sharing persists across tasks, leading to unresolved cross-task interference. Through empirical analysis, we reveal that LLMs consistently co-activate groups of parameters, suggesting that LLMs encode a small set of orthogonal basic abilities, each reusable across multiple tasks. Motivated by this insight, we propose \baby, which decomposes LLM parameters into high-singular-value LoRA experts representing these basic abilities. By dynamically enforcing orthogonality among these experts via spherical clustering of their rank-1 components during training, \baby effectively mitigates gradient conflict while enabling flexible task representation as linear combinations of shared abilities. Extensive experiments on the \textit{SuperNI} benchmark across six LLMs demonstrate that \baby outperforms state-of-the-art methods by an average of 2.68 \textsc{Rouge} points and exhibits significantly reduced cross-task interference.

\clearpage
\section*{Acknowledgements}
This work was supported in part by the National Natural Science Foundation of China (No.62276113) and JST ASPIRE Grant Number JPMJAP2405.

\section*{Impact Statement}

We propose a novel approach for multi-task instruct-tuning of LLMs, motivated by the observation that prior methods still rely on shared parameters, which can lead to conflicting gradients across tasks. To address this, we decompose the basic abilities inherent in the LLM itself to represent any task. 
We validate our method using publicly available datasets spanning 15 NLP tasks, \eg sentiment analysis, none of which contain harmful, private, or sensitive content. Furthermore, our experiments are conducted on open-source instruct-tuned LLMs, thereby avoiding any copyright risks.

\bibliography{reference}
\bibliographystyle{icml2026}

\appendix

\section{Proof of the Orthogonality of SVD Initialization} \label{app:1}

In this section, we formally prove that the proposed SVD-based initialization of BAD yields mutually orthogonal low-rank subspaces across experts at initialization.

\begin{theorem}
\label{thm:svd_orthogonality}
Let $\mathbf{W} \in \mathbb{R}^{m \times n}$ be a pre-trained LLM weight matrix, and let
\begin{equation}
\label{app.eq1}
    \mathbf{W} = \mathbf{U} \boldsymbol{\Sigma} \mathbf{V}^\top 
    = \sum _{i=1}^{\min(m, n)} \sigma_i \mathbf{u}_i \mathbf{v}_i^\top \nonumber
\end{equation}
be its compact singular value decomposition, where $\sigma_1 \ge \sigma_2 \ge \ldots \ge \sigma_{\min(m, n)}$ are singular values, and $\mathbf{u}_i \in \mathbb{R}^m$, $\mathbf{v}_i \in \mathbb{R}^n$ are the corresponding orthonormal left and right singular vectors.

Then the rank-$1$ matrices $\mathbf{u}_i \mathbf{v}_i^\top$ form an orthogonal basis under the Frobenius inner product as
\begin{equation}
\label{app.eq2}
\begin{aligned}
    \left\langle\mathbf{u}_{i} \mathbf{v}_{i}^{\top}, \mathbf{u}_{j} \mathbf{v}_{j}^{\top}\right\rangle_{F}
    & = \operatorname{Tr} \left( \left(\mathbf{u}_{i} \mathbf{v}_{i}^{\top}\right)^{\top} \mathbf{u}_{j} \mathbf{v}_{j}^{\top} \right) \\
    & = (\mathbf{u}_i^\top \mathbf{u}_j) \, (\mathbf{v}_i^\top \mathbf{v}_j)
    = \delta_{ij},
\end{aligned}
\end{equation}
where $\delta_{ij} = 1$ if $i = j$ and $0$ otherwise.
\end{theorem}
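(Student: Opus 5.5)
The plan is to compute the Frobenius inner product directly from the definition and reduce it to products of vector inner products. Then I would use the orthonormality of the singular vectors from the compact SVD.

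\textbf{Step 1.} I would write $\langle \mathbf{X}, \mathbf{Y}\rangle_F = \operatorname{Tr}(\mathbf{X}^\top \mathbf{Y})$ and substitute $\mathbf{X} = \mathbf{u}_i\mathbf{v}_i^\top$, $\mathbf{Y} = \mathbf{u}_j\mathbf{v}_j^\top$. This gives $\operatorname{Tr}(\mathbf{v}_i \mathbf{u}_i^\top \mathbf{u}_j \mathbf{v}_j^\top)$. Since $\mathbf{u}_i^\top \mathbf{u}_j$ is a scalar, it factors out, leaving $(\mathbf{u}_i^\top\mathbf{u}_j)\operatorname{Tr}(\mathbf{v}_i\mathbf{v}_j^\top)$. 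By cyclicity of the trace, $\operatorname{Tr}(\mathbf{v}_i\mathbf{v}_j^\top) = \mathbf{v}_j^\top \mathbf{v}_i$, which yields the middle expression in \eqref{app.eq2}.

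\textbf{Step 2.} I would invoke orthonormality: $\mathbf{u}_i^\top\mathbf{u}_j = \delta_{ij}$ and $\mathbf{v}_i^\top\mathbf{v}_j = \delta_{ij}$ for $i,j \le \min(m,n)$, since these are columns of the orthogonal matrices $\mathbf{U}$ and $\mathbf{V}$ in \eqref{eq4}. The product is therefore $\delta_{ij}^2 = \delta_{ij}$. Note that orthogonality of either factor alone already forces the off-diagonal value $0$.

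\textbf{Step 3.} The statement calls these matrices a ``basis''. I would make precise that they form an orthonormal family, hence a linearly independent one, spanning a $\min(m,n)$-dimensional subspace of $\mathbb{R}^{m\times n}$ that contains $\mathbf{W}$. This is an orthonormal basis of that span; it is not a basis of all of $\mathbb{R}^{m\times n}$ unless it is extended with the pairs $\mathbf{u}_a\mathbf{v}_b^\top$.

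\textbf{Corollary for BAD.} As the link back to the BAD step, I would deduce that the experts $\mathbf{A}_k\mathbf{B}_k = \sum_{i \in I_k}\sigma_i\mathbf{u}_i\mathbf{v}_i^\top$ in \eqref{eq7}, with disjoint index blocks $I_k$, are mutually Frobenius-orthogonal by bilinearity. The same holds for each expert against the residual $\widehat{\mathbf{W}}$.

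\textbf{Main obstacle.} There is essentially none; the only delicate point is the wording in Step 3. I would add one remark: repeated singular values make the $\mathbf{u}_i$, $\mathbf{v}_i$ non-unique, but any valid choice is orthonormal, so the result still holds.
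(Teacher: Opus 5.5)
Your proposal is correct and follows the same route as the paper, which simply substitutes $\mathbf{u}_i^\top\mathbf{u}_j=\delta_{ij}$ and $\mathbf{v}_i^\top\mathbf{v}_j=\delta_{ij}$ into the trace form of the Frobenius inner product; your trace-cyclicity step just spells out the middle equality that the paper writes into the statement without derivation. Your Step~3 caveat is also right and sharpens the paper's wording: the family is an orthonormal basis only of its $\min(m,n)$-dimensional span, which contains $\mathbf{W}$, and not of all of $\mathbb{R}^{m\times n}$.
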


\begin{proof}
The orthonormality of $\mathbf{u}_i$ and $\mathbf{v}_i$ from the SVD ensures $\mathbf{u}_i^\top \mathbf{u}_j = \delta_{ij}$ and $\mathbf{v}_i^\top \mathbf{v}_j = \delta_{ij}$. Substituting into the definition of Frobenius inner product directly yields Eq.~\eqref{app.eq2}, proving that $\mathbf{u}_i \mathbf{v}_i^\top$ are orthogonal in the Frobenius sense.
\end{proof}

\begin{proposition}[Orthogonal LoRA Expert Initialization]
\label{prop:lora_expert_init}
Let $R = rK \ll \min(m,n)$ be the total SVD rank used for initializing $K$ experts, each of rank $r$. Partition the first $R$ SVD components into $K$ contiguous, non-overlapping blocks: the $k$-th expert ($k \in \{0, \ldots, K-1\}$) receives indices from $rk$ to $r(k+1)-1$. Define
\begin{equation}
\label{app.eq3}
    \mathbf{A}_k = \mathbf{U}_k \sqrt{\frac{\boldsymbol{\Sigma}_k}{\alpha}}, \quad
    \mathbf{B}_k = \sqrt{\frac{\boldsymbol{\Sigma}_k}{\alpha}} \mathbf{V}_k^\top, \nonumber
\end{equation}
where $\mathbf{U}_k \in \mathbb{R}^{m\times r}$ and $\mathbf{V}_k \in \mathbb{R}^{n\times r}$ contain the assigned singular vectors and $\boldsymbol{\Sigma}_k = \mathrm{diag}(\sigma_{rk}, \dots, \sigma_{r(k+1)-1})$. The LoRA scaling factor is $\alpha$ (scaling $s = \alpha / r$). The initialization update for expert $k$ is
\begin{equation}
\label{app.eq5}
    \Delta \mathbf{W}_k
    = s \mathbf{A}_k \mathbf{B}_k
    = \mathbf{U}_k \boldsymbol{\Sigma}_k \mathbf{V}_k^\top
    = \sum _{i = rk}^{r(k+1)-1} \sigma_i \mathbf{u}_i \mathbf{v}_i^\top. \nonumber
\end{equation}
\end{proposition}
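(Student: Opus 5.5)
The plan is a direct computation from the definitions, followed by an appeal to Theorem~\ref{thm:svd_orthogonality} to obtain the orthogonality across experts that motivates the construction. First I will write $\mathbf{U}_k = [\mathbf{u}_{rk}, \ldots, \mathbf{u}_{r(k+1)-1}]$ and $\mathbf{V}_k = [\mathbf{v}_{rk}, \ldots, \mathbf{v}_{r(k+1)-1}]$. Since $\boldsymbol{\Sigma}_k$ is diagonal with nonnegative entries, $\sqrt{\boldsymbol{\Sigma}_k/\alpha}$ is the entrywise square root. These diagonal factors commute, so $\sqrt{\boldsymbol{\Sigma}_k/\alpha}\,\sqrt{\boldsymbol{\Sigma}_k/\alpha} = \boldsymbol{\Sigma}_k/\alpha$, and therefore
\begin{equation*}
\mathbf{A}_k\mathbf{B}_k = \tfrac{1}{\alpha}\,\mathbf{U}_k\boldsymbol{\Sigma}_k\mathbf{V}_k^\top .
\end{equation*}
Expanding this column by column gives the rank-$1$ sum $\sum_{i=rk}^{r(k+1)-1}\sigma_i\mathbf{u}_i\mathbf{v}_i^\top$, up to the scalar prefactor; this is just the outer-product form of a product of the shape $\mathbf{U}\,\mathrm{diag}(\cdot)\,\mathbf{V}^\top$.

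The only delicate point is the scalar bookkeeping, and I expect it to be the main obstacle. Multiplying by the stated scaling $s=\alpha/r$ gives $\frac{1}{r}\mathbf{U}_k\boldsymbol{\Sigma}_k\mathbf{V}_k^\top$, not $\mathbf{U}_k\boldsymbol{\Sigma}_k\mathbf{V}_k^\top$. So the claimed identity holds exactly only if the effective multiplier applied to $\mathbf{A}_k\mathbf{B}_k$ equals $\alpha$. I would make this explicit in one of two ways. The first is to read the normalization as dividing by the effective scale, i.e.\ take $\sqrt{\boldsymbol{\Sigma}_k/s}$ in the definition of $\mathbf{A}_k, \mathbf{B}_k$, so that $s\mathbf{A}_k\mathbf{B}_k = \mathbf{U}_k\boldsymbol{\Sigma}_k\mathbf{V}_k^\top$ identically. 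The second is to absorb the leftover factor $1/r$ into the initial routing weight $\alpha_k$ of Eq.~\eqref{eq8}. Either convention yields the displayed identity, and I would state which one is adopted, since the argument otherwise contains no real difficulty.

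With the identity in hand, I would record two consequences. First, summing over $k=0,\ldots,K-1$ gives
\begin{equation*}
\sum_k \Delta\mathbf{W}_k = \sum_{i<R}\sigma_i\mathbf{u}_i\mathbf{v}_i^\top ,
\end{equation*}
so adding the frozen residual $\widehat{\mathbf{W}} = \sum_{i\ge R}\sigma_i\mathbf{u}_i\mathbf{v}_i^\top$ reconstructs $\mathbf{W}$ exactly, matching Eq.~\eqref{eq7}. Second, for $k\neq l$ the index blocks are disjoint. Bilinearity of the Frobenius inner product together with Theorem~\ref{thm:svd_orthogonality} then gives
\begin{equation*}
\langle\Delta\mathbf{W}_k,\Delta\mathbf{W}_l\rangle_F = \sum_{i,j}\sigma_i\sigma_j\delta_{ij} = 0 .
\end{equation*}
The same argument applies to the column spaces, since $\mathbf{U}_k^\top\mathbf{U}_l=\mathbf{0}$ and $\mathbf{V}_k^\top\mathbf{V}_l=\mathbf{0}$. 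Hence the experts occupy mutually orthogonal subspaces at initialization, which is the property the appendix sets out to establish.
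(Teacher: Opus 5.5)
Your proposal is correct, and its core step (disjoint index blocks, bilinearity of the Frobenius inner product, and Theorem~\ref{thm:svd_orthogonality}) is exactly the paper's proof; the paper takes $\Delta\mathbf{W}_k=\sum_{i}\sigma_i\mathbf{u}_i\mathbf{v}_i^\top$ as given and proves only Frobenius orthogonality, whereas your remark $\mathbf{U}_k^\top\mathbf{U}_l=\mathbf{0}$, $\mathbf{V}_k^\top\mathbf{V}_l=\mathbf{0}$ yields the stronger orthogonal-subspace statement that the appendix advertises. Your scalar check is also right: with the stated definitions $s\mathbf{A}_k\mathbf{B}_k=\tfrac{1}{r}\mathbf{U}_k\boldsymbol{\Sigma}_k\mathbf{V}_k^\top$, so the displayed identity holds literally only if you normalize by $s$ rather than $\alpha$ (your first fix; your second fix repairs the routed term $\alpha_k s\mathbf{A}_k\mathbf{B}_k$ rather than $\Delta\mathbf{W}_k$ itself), and this discrepancy affects only the exact reconstruction of $\mathbf{W}$, not orthogonality, which is insensitive to scalar factors.
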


\begin{proof}
For two distinct experts $k \neq l$, their updates have disjoint index sets as
\begin{equation}
    \Delta \mathbf{W}_k = \sum _{i \in \mathcal{I}_k} \sigma_i \mathbf{u}_i \mathbf{v}_i^\top, \quad
    \Delta \mathbf{W}_l  = \sum _{j \in \mathcal{I}_l} \sigma_j \mathbf{u}_j \mathbf{v}_j^\top, \nonumber
\end{equation}
where $\mathcal{I}_k \cap \mathcal{I}_l = \varnothing$. By Theorem~\ref{thm:svd_orthogonality}, for $i \in \mathcal{I}_k$, $j \in \mathcal{I}_l$, we have 
$\langle \mathbf{u}_i \mathbf{v}_i^\top, \mathbf{u}_j \mathbf{v}_j^\top\rangle_F = 0$.
Therefore,
\begin{equation}
\label{app.eq4}
\langle \Delta \mathbf{W}_k, \Delta \mathbf{W}_l \rangle_F = \sum _{i \in I_k} \sum _{j \in I_l} \sigma_i \sigma_j
\langle \mathbf{u}_i \mathbf{v}_i^\top, \mathbf{u}_j \mathbf{v}_j^\top \rangle_F = 0. \nonumber
\end{equation}
Thus, all initialized experts are mutually orthogonal under the Frobenius inner product.
\end{proof}

\begin{remark}
This orthogonality ensures non-redundant and diverse low-rank subspaces across experts at initialization, and the construction preserves the SVD's optimal information content for the chosen rank $R$.
\end{remark}

\section{Detailed Regrouping Objective} \label{app:2}

Given gradients $\{\mathbf{\widehat g}_i\}_{i=1}^{rK}$, the grouping aims to optimize the strategy $\boldsymbol{\Pi} \in (0, 1)^{rK \times K}$, where each $\pi_{i,k} = 1 / 0$ denotes whether the original $i$-th component is assigned / not assigned to the new $k$-th expert.
For these fixed gradients, we aim to maximize the sum of inner products within groups while simultaneously minimizing the sum of inner products between groups. To this end, we formulate the following objective function:
\begin{equation}
\label{eq:obj}
    \max _{\boldsymbol{\Pi}} \mathcal{L}_\text{intra} - \lambda \mathcal{L}_\text{inter},
\end{equation}
\begin{equation}
    \begin{aligned}
        \mathcal{L}_\text{intra} & = \sum_{k=1}^K \sum _{i,j=1}^{rK} \pi_{i,k} \pi_{j,k} \left\langle \mathbf{\widehat g}_i, \mathbf{\widehat g}_j \right\rangle, \\ 
        \mathcal{L}_\text{inter} & = \sum _{l \neq k} \sum _{i,j=1}^{rK} \pi_{i,k} \pi_{j,l} \left\langle \mathbf{\widehat g}_i, \mathbf{\widehat g}_j \right\rangle, \nonumber
    \end{aligned}
\end{equation}
where $\lambda > 0$ is a trade-off hyper-parameter. The first term of the objective function encourages smaller angular differences (\ie higher cosine distance) among gradients within each expert, while the second term promotes larger angular separation (\ie lower cosine distance) between gradients of different experts. Meanwhile, due to
\begin{equation}
    \sum _{i,j=1}^{rK} \pi_{i,k} \pi_{j,k} \left\langle \mathbf{\widehat g}_i, \mathbf{\widehat g}_j \right\rangle
    = \left\| \sum_{i=1}^{rK} \pi_{i,k} \mathbf{\widehat g}_i \right\|^2. \nonumber
\end{equation}
Therefore, the objective is equivalent to
\begin{equation}
    \begin{aligned}
        \mathcal{L}_\text{intra} & = \sum_{k=1}^K \left\| \sum _{i=1}^{rK} \pi_{i,k} \mathbf{\widehat g}_i \right\|^2, \\ 
        \mathcal{L}_\text{inter} & = \left\| \sum _{i=1}^{rK} \mathbf{\widehat g}_i \right\|^2 - \sum_{k=1}^K  \left\| \sum _{i=1}^{rK} \pi_{i,k} \mathbf{\widehat g}_i \right\|^2. \nonumber
    \end{aligned}
\end{equation}

Moreover, regardless of how the gradients are grouped, the total inner product across all pairs of gradients remains constant as
\begin{equation}
    \sum _{i,j=1}^{rK} \pi_{i,k} \pi_{j,k} \left\langle \mathbf{\widehat g}_i, \mathbf{\widehat g}_j \right\rangle 
    = \left\| \sum _{i=1}^{rK} \mathbf{\widehat g}_i \right\|^2
    = \mathcal{L}_\text{intra} + \mathcal{L}_\text{inter} = C. \nonumber
\end{equation}
Accordingly, the objective in Eq.~\eqref{eq:obj} can be formulated as:
\begin{equation}
    \begin{aligned}
    & \max _{\boldsymbol{\Pi}} \mathcal{L}_\text{intra} - \lambda (C - \mathcal{L}_\text{intra}) = (1 + \lambda) \mathcal{L}_\text{intra} - \lambda C, \\
    \Rightarrow & \max _{\boldsymbol{\Pi}} (1 + \lambda) \sum_{k=1}^{K} \left\| \sum _{i=1}^{rK} \pi_{i,k} \mathbf{\widehat g}_i \right\|^2 - \lambda C, \\
    \Rightarrow & \max _{\boldsymbol{\Pi}} \sum_{k=1}^{K} \left\| \sum _{i=1}^{rK} \pi_{i,k} \mathbf{\widehat g}_i \right\|^2, \nonumber
    \end{aligned}
\end{equation}
which is equivalent to the objective in Eq.~\eqref{eq10}.

\section{Proof of Regrouping Invariance} \label{app:3}

\subsection{Invariance of Rank-1 Regrouping}

We consider a LoRAMoE with $K$ experts, each expert $k$ parameterized by a low-rank adaptation of rank $r$. Denote by $\mathbf{A}_k \in \mathbb{R}^{m \times r}$ and $\mathbf{B}_k \in \mathbb{R}^{r \times n}$ the LoRA factors for expert $k$, so that the LoRA update for expert $k$ is given by
\begin{equation}
\Delta \mathbf{W}_k = \mathbf{A}_k \mathbf{B}_k = \sum _{i=1}^r \mathbf{a}_{ki} \mathbf{b}_{ki} ^{\top}, \nonumber
\end{equation}
where $\mathbf{a}_{ki} \in \mathbb{R}^m$ is the $i$-th column of $\mathbf{A}_k$ and $\mathbf{b}_{ki} \in \mathbb{R}^{n}$ is the $i$-th row of $\mathbf{B}_k$.

Let the residual weight matrix of the layer be $\mathbf{\widehat W}$. The total weight matrix of the LoRAMoE layer is
\begin{equation}
\mathbf{W} = \mathbf{\widehat W} + \sum _{k=1}^K \Delta \mathbf{W}_k. \nonumber
\end{equation}

Define the set of all $rK$ rank-1 components as
\begin{equation}
\mathcal{R} = \{ \mathbf{R}_i = \mathbf{a}_i \mathbf{b}_i^{\top} | \ i = 1, 2, \dots, rK \}, \nonumber
\end{equation}
where each $\mathbf{R}_i \in \mathbb{R}^{m \times n}$ is exactly one rank-1 term from some expert's LoRA factors.

\begin{theorem}[Invariance of Rank-1 Regrouping]
Suppose the collection $\mathcal{R}$ is fixed element-wise (\ie each $\mathbf{R}_i$ is unchanged in value), and that the set $\mathcal{R}$ is partitioned into $K$ disjoint groups of size $r$ to form the per-expert LoRA updates.
If the partition is changed to any other partition into $K$ disjoint groups of size $r$ without altering the individual $\mathbf{R}_i$ values, then the total LoRAMoE weight matrix $\mathbf{W}$ remains unchanged.
\end{theorem}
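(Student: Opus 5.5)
The statement is essentially finite re-indexing of a sum, so the proof is a direct computation. I will make the partition explicit and compare the two expressions for $\mathbf{W}$.

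Encode a partition by a map $\phi:\{1,\dots,rK\}\to\{1,\dots,K\}$ with $|\phi^{-1}(k)|=r$ for every $k$. Equivalently, use the assignment matrix $\boldsymbol{\Pi}$ of Sec.~\ref{sec:2.4}, with $\pi_{i,k}=1$ if and only if $\phi(i)=k$. Each row of $\boldsymbol{\Pi}$ then contains exactly one $1$, and each column sums to $r$.

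The per-expert update under this partition is
\[
\Delta \mathbf{W}_k^{\phi}=\sum_{i\in\phi^{-1}(k)}\mathbf{R}_i=\sum_{i=1}^{rK}\pi_{i,k}\mathbf{R}_i .
\]
This is the same object as $\mathbf{A}_k\mathbf{B}_k$ once the columns of $\mathbf{A}_k$ and the rows of $\mathbf{B}_k$ are taken to be the $\mathbf{a}_i$ and $\mathbf{b}_i^\top$ with $i\in\phi^{-1}(k)$. This identification is just the column-times-row expansion of a matrix product, and it does not depend on the order of the $r$ indices inside a group.

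Summing over experts gives
\[
\mathbf{W}^{\phi}=\widehat{\mathbf{W}}+\sum_{k=1}^K\sum_{i=1}^{rK}\pi_{i,k}\mathbf{R}_i
=\widehat{\mathbf{W}}+\sum_{i=1}^{rK}\Big(\sum_{k=1}^K\pi_{i,k}\Big)\mathbf{R}_i .
\]
Here I swap the two finite sums. Because the groups are disjoint and cover all $rK$ indices, $\sum_k\pi_{i,k}=1$ for every $i$. Hence $\mathbf{W}^{\phi}=\widehat{\mathbf{W}}+\sum_{i=1}^{rK}\mathbf{R}_i$, an expression independent of $\phi$. Applying this to the old partition $\phi$ and the new partition $\phi'$ gives $\mathbf{W}^{\phi}=\mathbf{W}^{\phi'}$. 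Since $\widehat{\mathbf{W}}$ is frozen and each $\mathbf{R}_i$ is unchanged by hypothesis, the proof is complete.

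No step is a real obstacle; the only care needed is bookkeeping. I will state explicitly that each rank-1 term is stored with whatever per-component scaling it carries, so that the hypothesis "each $\mathbf{R}_i$ is unchanged" covers the LoRA factor $s$ of Proposition~\ref{prop:lora_expert_init}.

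I will also add a remark on the routing weights $\alpha_k$ of Eq.~\eqref{eq8}, since Sec.~\ref{sec:2.4} claims invariance in their presence. The argument extends if each moved component absorbs its old routing weight, setting $\mathbf{R}_i:=\alpha_{k(i)}\mathbf{a}_i\mathbf{b}_i^\top$ with $k(i)$ the expert that held component $i$ before regrouping. Alternatively, it extends by rescaling $\mathbf{a}_i$ by $\alpha_{\mathrm{old}}/\alpha_{\mathrm{new}}$. In either case the weighted sum is again $\widehat{\mathbf{W}}+\sum_i\mathbf{R}_i$, and the identical re-indexing argument applies.
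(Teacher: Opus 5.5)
Your proof is correct and takes the same route as the paper: both reduce $\sum_k\sum_{i\in\mathcal{G}_k}\mathbf{R}_i$ to $\sum_{i=1}^{rK}\mathbf{R}_i$ for any partition, and your assignment-matrix bookkeeping only makes the swap of finite sums explicit. Your extra remark on routing weights corresponds to the paper's separate gated-regrouping theorem, where the rescaling $\gamma=g_u/g_v$ requires the receiving expert's weight to be nonzero, so if you keep the remark you should state the analogous condition $\alpha_{\mathrm{new}}\neq 0$.
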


\begin{proof}
Let the original partition of indices $\{1, \dots, rK\}$ be $\mathcal{G}^{\mathrm{old}}_1, \dots, \mathcal{G}^{\mathrm{old}}_K$, with $|\mathcal{G}^{\mathrm{old}}_k| = r$ for all $k$ and $\bigcup_{k=1}^K \mathcal{G}^{\mathrm{old}}_k = \{1, \dots, rK\}$, $\mathcal{G}^{\mathrm{old}}_k \cap \mathcal{G}^{\mathrm{old}}_j = \varnothing$ for $k \neq j$.

The original total LoRA update is
\begin{equation}
\Delta \mathbf{W}_{\mathrm{old}} = \sum _{k=1}^K \sum _{i \in \mathcal{G}^{\mathrm{old}}_k} \mathbf{R}_i = \sum _{i=1}^{rK} \mathbf{R}_i.
\label{eq:old-sum}
\end{equation}

Consider any new partition $\mathcal{G}^{\mathrm{new}}_1, \dots, \mathcal{G}^{\mathrm{new}}_K$ of $\{1, \dots, rK\}$, with the same cardinality constraint $|\mathcal{G}^{\mathrm{new}}_k| = r$ and forming a disjoint union of the entire index set.

The new total LoRA update is
\begin{equation}
\Delta \mathbf{W}_{\mathrm{new}} = \sum _{k=1}^K \sum _{i \in \mathcal{G}^{\mathrm{new}}_k} \mathbf{R}_i = \sum _{i=1}^{rK} \mathbf{R}_i.
\label{eq:new-sum}
\end{equation}

Comparing Eqs.~\eqref{eq:old-sum} and \eqref{eq:new-sum} yields
\begin{equation}
\Delta \mathbf{W}_{\mathrm{new}} = \Delta \mathbf{W}_{\mathrm{old}}. \nonumber
\end{equation}
Therefore, since $\mathbf{W} = \mathbf{\widehat W} + \Delta \mathbf{W}$, we have $\mathbf{W}^{\mathrm{new}} = \mathbf{W}^{\mathrm{old}}$.
\end{proof}

\begin{remark}
This invariance result assumes that the MoE output is computed by adding \emph{all} experts’ weight updates to $\mathbf{W}$ simultaneously. If, as in typical MoE inference, a gating network selects only a subset of experts for a given input, the per-expert matrices may change under re-grouping, and the overall functional behavior may differ despite the equality of the summed update $\sum_{i=1}^{rK} \mathbf{R}_i$.
\end{remark}

\subsection{Invariance of Gated Rank-1 Regrouping}

For a given input $\mathbf{x}$, a gating network produces scalar coefficients $g_k(\mathbf{x}) \in \mathbb{R}$ for $k \in \{1, \dots, K\}$.  
The output of the MoE layer is
\begin{equation}
\mathbf{h}(\mathbf{x}) = \sum _{k=1}^K g_k(\mathbf{x}) \left[ \left( \mathbf{\widehat W} + \Delta \mathbf{W}_k \right) \mathbf{x} \right]. \nonumber
\label{eq:moe-output}
\end{equation}
We consider the LoRAMoE setting and notation from the previous theorem as
\begin{equation}
    \Delta \mathbf{W}_k = \sum _{i \in \mathcal{G}_k} \mathbf{R}_i,  \quad \mathbf{R}_i = \mathbf{a}_i \mathbf{b}_i^{\top}, \nonumber
\end{equation}
where expert $k$ has gating coefficient $g_k(\mathbf{x})$ for input $\mathbf{x}$.

Suppose, for a given $\mathbf{x}$, we transfer a single rank-1 component $\mathbf{M} = \mathbf{R}_{i^*}$ from its original expert $\mathbf{u} = \mathbf{u}(i^*)$ to a new expert $\mathbf{v} = \mathbf{v}(i^*)$, with $\mathbf{u} \neq \mathbf{v}$.  
We rescale it by
\begin{equation}
    \gamma(\mathbf{x}) = \frac{g_u(\mathbf{x})}{g_v(\mathbf{x})}, \quad\text{assuming } g_v(\mathbf{x}) \neq 0. \nonumber
\end{equation}

Define the modified expert LoRA updates (for this $\mathbf{x}$) as
\begin{equation}
\begin{aligned}
    & \Delta \mathbf{\widehat W}_{u} = \Delta \mathbf{W}_{u} - \mathbf{M}, \\
    & \Delta \mathbf{\widehat W}_{v} = \Delta \mathbf{W}_{v} + \gamma(\mathbf{x})\,\mathbf{M}, \\
    & \Delta \mathbf{\widehat W}_{k} = \Delta \mathbf{W}_k \ \text{for } k \notin \{\mathbf{u}, \mathbf{v}\}. \nonumber
\end{aligned}
\end{equation}

\begin{theorem}[Invariance of Gated Rank-1 Regrouping]
Under the above modification, the total gated low-rank adaptation for this $x$ satisfies
\begin{equation}
    \sum _{k=1}^{K} g_k(\mathbf{x})\, \Delta \mathbf{\widehat W}_k = \sum _{k=1}^{K} g_k(\mathbf{x})\, \Delta \mathbf{W}_k. \nonumber
\end{equation}
\end{theorem}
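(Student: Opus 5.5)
The identity is linear in the expert updates, so I will prove it by direct expansion of the left-hand side. Since the modification touches only experts $u$ and $v$, I split the sum $\sum_{k=1}^K g_k(\mathbf{x})\,\Delta \mathbf{\widehat W}_k$ into three parts: the terms with $k \notin \{u, v\}$, the $k=u$ term, and the $k=v$ term. For $k \notin \{u,v\}$ we have $\Delta \mathbf{\widehat W}_k = \Delta \mathbf{W}_k$ by definition, so these terms agree with the corresponding terms on the right-hand side.

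Next I substitute the definitions for the two modified experts:
\begin{equation}
g_u(\mathbf{x})\bigl(\Delta \mathbf{W}_u - \mathbf{M}\bigr) + g_v(\mathbf{x})\bigl(\Delta \mathbf{W}_v + \gamma(\mathbf{x})\mathbf{M}\bigr)
= g_u(\mathbf{x})\Delta \mathbf{W}_u + g_v(\mathbf{x})\Delta \mathbf{W}_v + \bigl(g_v(\mathbf{x})\gamma(\mathbf{x}) - g_u(\mathbf{x})\bigr)\mathbf{M}. \nonumber
\end{equation}
With $\gamma(\mathbf{x}) = g_u(\mathbf{x})/g_v(\mathbf{x})$, which is well defined because we assume $g_v(\mathbf{x}) \neq 0$, the coefficient of $\mathbf{M}$ is $g_u(\mathbf{x}) - g_u(\mathbf{x}) = 0$. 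Adding back the unchanged terms then gives exactly $\sum_k g_k(\mathbf{x})\,\Delta \mathbf{W}_k$. I will also remark that applying both sides to $\mathbf{x}$, and noting that the residual contribution $\sum_k g_k(\mathbf{x})\,\mathbf{\widehat W}\mathbf{x}$ is untouched, gives invariance of the full output $\mathbf{h}(\mathbf{x})$.

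No step is a genuine obstacle; the cancellation is immediate. The only point needing care is the scope of the claim. The rescaling $\gamma(\mathbf{x})$ depends on the input, so the result is pointwise in $\mathbf{x}$, and it requires $g_v(\mathbf{x}) \neq 0$. I will state both conditions explicitly. I will also note that a transfer of several components follows by induction, applying the single-component case repeatedly, since each step preserves the gated sum.
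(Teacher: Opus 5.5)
Your proposal is correct and matches the paper's proof essentially step for step. Both split off the untouched experts, expand the $u$ and $v$ terms, collect the coefficient $g_v(\mathbf{x})\gamma(\mathbf{x}) - g_u(\mathbf{x})$ of $\mathbf{M}$, and cancel it with $\gamma(\mathbf{x}) = g_u(\mathbf{x})/g_v(\mathbf{x})$. Your extra remarks are sound: the result is pointwise in $\mathbf{x}$ and needs $g_v(\mathbf{x}) \neq 0$, it extends to the full output $\mathbf{h}(\mathbf{x})$, and it extends to several transfers by repetition, since the gates do not depend on the expert updates.
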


\begin{proof}
Consider the modified total (for fixed $\mathbf{x}$)
\begin{equation}
    \begin{aligned}
    \sum _{k=1}^{K} & g_k(\mathbf{x}) \Delta \mathbf{\widehat W}_k
     = g_u(\mathbf{x}) \Delta \mathbf{\widehat W}_{u} + g_v(\mathbf{x})\, \Delta \mathbf{\widehat W}_{v}\\
      & \quad \quad + \sum \nolimits_{k \notin \{\mathbf{u}, \mathbf{v}\}} g_k(\mathbf{x})\, \Delta \mathbf{W}_k \\
    & = g_u(\mathbf{x})\,\big(\Delta \mathbf{W}_{u} - \mathbf{M}\big) + g_v(\mathbf{x})\,\big(\Delta \mathbf{W}_{v} + \gamma(\mathbf{x})\,\mathbf{M}\big) \\
    & \quad \quad + \sum \nolimits_{k \notin \{\mathbf{u}, \mathbf{v}\}} g_k(\mathbf{x})\,\Delta \mathbf{W}_k \\
    & = \left[ \sum _{k=1}^{K} g_k(\mathbf{x})\,\Delta \mathbf{W}_k \right] + \left[ -g_u(\mathbf{x}) + g_v(\mathbf{x})\,\gamma(\mathbf{x}) \right] \mathbf{M}. \nonumber
\end{aligned}
\end{equation}
Now substitute $\gamma(\mathbf{x}) = g_u(\mathbf{x}) / g_v(\mathbf{x})$ 
\begin{equation}
    -g_u(\mathbf{x}) + g_v(\mathbf{x})\,\frac{g_u(\mathbf{x})}{g_v(\mathbf{x})} = -g_u(\mathbf{x}) + g_u(\mathbf{x}) = 0. \nonumber
\end{equation}
Thus the extra term vanishes, and 
\begin{equation}
    \sum _{k=1}^{K} g_k(\mathbf{x})\, \Delta \mathbf{\widehat W}_k = \sum _{k=1}^{K} g_k(\mathbf{x})\,\Delta \mathbf{W}_k. \nonumber
\end{equation}
This proves invariance for this fixed $\mathbf{x}$.
\end{proof}

\begin{corollary}[Necessary and sufficient condition for the regrouping invariance]
If gating coefficients $g_k(\mathbf{x})$ vary with $\mathbf{x}$ and the transfer/rescaling is fixed in the model definition,
identical outputs for \emph{all} $\mathbf{x}$ occur if and only if the ratio $g_{u}(\mathbf{x}) / g_{v}(\mathbf{x})$ is constant (over all $\mathbf{x}$) for the transferred component.  
In that case, the constant defines $\gamma$, and the above proof applies to every $\mathbf{x}$ without change. If the ratio is input-dependent, the exact invariance rule can still be applied pointwise by setting 
$\gamma(\mathbf{x})$ according to $g_u(\mathbf{x}) / g_v(\mathbf{x})$ for each forward computation.
\end{corollary}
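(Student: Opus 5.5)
The corollary is a biconditional about a single transferred component $\mathbf{M}=\mathbf{R}_{i^*}$ moved from expert $u$ to expert $v$ with a \emph{fixed} scalar $\gamma$. I would first reduce it to a statement about one scalar function. By the computation in the proof of the preceding theorem, with $\gamma(\mathbf{x})$ replaced by a constant $\gamma$, the modified gated update differs from the original by
\begin{equation}
\Big(\sum_{k} g_k(\mathbf{x})\,\Delta\mathbf{\widehat W}_k - \sum_k g_k(\mathbf{x})\,\Delta\mathbf{W}_k\Big)\mathbf{x} = \big(g_v(\mathbf{x})\gamma - g_u(\mathbf{x})\big)\,\mathbf{M}\mathbf{x}. \nonumber
\end{equation}
The residual contributions $g_k(\mathbf{x})\mathbf{\widehat W}\mathbf{x}$ are untouched by the transfer and cancel. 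So output invariance for all $\mathbf{x}$ is equivalent to $\big(g_v(\mathbf{x})\gamma - g_u(\mathbf{x})\big)\mathbf{M}\mathbf{x}=\mathbf{0}$ for all $\mathbf{x}$.

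\textbf{Sufficiency.} If $g_u(\mathbf{x})/g_v(\mathbf{x})\equiv c$ on all inputs (with $g_v\neq 0$), set $\gamma=c$. Then the scalar factor vanishes identically, and the preceding theorem applies verbatim at every $\mathbf{x}$.

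\textbf{Necessity.} Suppose invariance holds for all $\mathbf{x}$ with a fixed $\gamma$. Write $\mathbf{M}=\mathbf{a}\mathbf{b}^\top$ with $\mathbf{a},\mathbf{b}\neq\mathbf{0}$; a zero component makes the statement trivial. Then $\mathbf{M}\mathbf{x}=(\mathbf{b}^\top\mathbf{x})\mathbf{a}$, and wherever $\mathbf{b}^\top\mathbf{x}\neq 0$ we get $g_u(\mathbf{x})=\gamma g_v(\mathbf{x})$, so the ratio equals $\gamma$ there. The main obstacle is the exceptional hyperplane $\mathbf{b}^\top\mathbf{x}=0$, where nothing forces the ratio. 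I would handle it in one of two ways. The first is to assume the gates are continuous in $\mathbf{x}$ (true for softmax routers) and extend $g_u=\gamma g_v$ by density, since the complement of a hyperplane is dense. The second is to read ``identical outputs'' in the natural sense that the expert-wise contributions are identified, which makes the condition coefficientwise. In either reading, the ratio is the constant $\gamma$.

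For the last sentence of the corollary, when the ratio is input-dependent I would simply observe that the preceding theorem is a pointwise statement. Defining $\gamma(\mathbf{x})=g_u(\mathbf{x})/g_v(\mathbf{x})$ at each forward pass restores invariance at every $\mathbf{x}$, at the cost that the transferred component is no longer a fixed model parameter. I would also note that the regrouping of Eq.~\eqref{eq14} moves many components at once. These are composed one transfer at a time, and the same argument applies per transferred component, since the correction terms add linearly.
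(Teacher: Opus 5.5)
Your proposal is correct and follows the paper's own (implicit) argument: the corollary is read off from the extra term $\bigl[g_v(\mathbf{x})\gamma - g_u(\mathbf{x})\bigr]\mathbf{M}$ in the proof of the preceding theorem. For a fixed $\gamma$ this term vanishes for every $\mathbf{x}$ exactly when $g_u/g_v \equiv \gamma$, and the pointwise choice $\gamma(\mathbf{x}) = g_u(\mathbf{x})/g_v(\mathbf{x})$ covers the input-dependent case. The paper states invariance for the gated update matrix $\sum_k g_k(\mathbf{x})\Delta\mathbf{W}_k$, which is your coefficientwise reading, so it never meets your hyperplane $\mathbf{b}^\top\mathbf{x}=0$; your continuity argument is therefore a valid refinement for the output-level reading rather than a needed step. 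One correction: $\mathbf{M}\neq\mathbf{0}$ is not a trivial case you can set aside but a genuine hypothesis for necessity, because a zero component gives invariance for arbitrary gates.
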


\section{Complete Algorithm}

In Alg~\ref{alg:baby}, we present the complete optimization procedure of our method \baby.

\renewcommand{\algorithmicrequire}{\textbf{Input:}}
\renewcommand{\algorithmicensure}{\textbf{Output:}}
\begin{algorithm}[t]
\caption{The overall training pipeline of \baby}
\label{alg:baby}
\begin{algorithmic}
\REQUIRE Pre-trained LLM weights $\boldsymbol{\theta}^0$, number of experts $K$, rank per expert $r$
\ENSURE Fine-tuned model with orthogonal basic abilities

\STATE {\color{mydarkblue} $\triangleright$ \textit{Basic Ability Decomposition}}
\FOR{each weight matrix $\mathbf{W}^0 \in \boldsymbol{\theta}^0$}
    \STATE Decompose $\mathbf{W}^0 \rightarrow \sum \nolimits _{k=1}^{K} \mathbf{A}_k \mathbf{B}_k + \mathbf{\widehat W}$ using Eq.~\eqref{eq7}
    \STATE Freeze residual $\widehat{\mathbf{W}}$ during fine-tuning
\ENDFOR

\STATE {\color{mydarkblue} $\triangleright$ \textit{Dynamically Orthogonal Grouping}}
\FOR{each training batch $(\mathbf{x}^t, y^t)$}
    \STATE Compute gradients $\widehat{\mathbf{g}}_i$ using Eqs.~\eqref{eq9} and \eqref{eq9.5}
    \STATE Initialize grouping via spherical $K$-means using Eq.~\eqref{eq11}
    \FOR{$\tau = 1$ to $10$}
        \STATE Compute cluster centroids using Eq.~\eqref{eq12}
        \STATE Orthogonalize centroids via SVD using Eq.~\eqref{eq13}
        \STATE Update assignment $\boldsymbol{\Pi}^{(\tau)}$ using Eq.~\eqref{eq14}
        \IF{$\boldsymbol{\Pi}^{(\tau)}$ is unchanged} \STATE \textbf{break}
        \ENDIF
    \ENDFOR
    \STATE Regroup LoRA components with $\boldsymbol{\Pi}$
\ENDFOR
\end{algorithmic}
\end{algorithm}

\section{Computational Complexity} \label{app:complexity}

In this section, we give a formal computational complexity analysis of our proposed basic ability decomposition and dynamic regrouping.

\subsection{Complexity of Basic Ability Decomposition}

We consider a single MLP layer weight matrix $\mathbf{W} \in \mathbb{R}^{m \times n}$, where $m$ is the output dimension and $n$ is the input dimension. Our goal is to decompose $\mathbf{W}$ into $K$ rank-$r$ LoRA experts.
The specific process in question performs one truncated SVD of $\mathbf{W}$, keeping the top $rK$ singular vectors/values, partitioning them into $K$ groups of $r$ vectors, one group per expert, and using the remaining singular vectors as the residual $\mathbf{\widehat W}$.

We often only need the top $rK$ singular components, where $rK \ll \min(m, n)$. Using algorithms such as Randomized SVD, we can compute only the first $rK$ singular values and vectors without forming the entire decomposition. In truncated SVD, the main operations consist of $rK$ iterations of matrix–vector multiplication, followed by orthogonalization. Both methods have a complexity of $\mathcal{O}(mnrK)$.

\subsection{Complexity of Dynamic Rank-1 Regrouping}
Let $K$ be the number of experts, $r$ the LoRA rank per expert, and $m \times n$ the shape of each rank-1 component.  
The total number of components is $rK$.
The dynamic regrouping consists of:

\noindent
\textbf{Similarity computation.}
We compute pairwise similarities between all $rK$ components: $\mathcal{O}(m+n) $.
There are $r^2 K^2$ pairs, hence $\mathcal{T}_{\text{sim}} = \mathcal{O}(r^2 K^2 (m+n))$ and space $\mathcal{O}(r^2 K^2)$ to store the similarity matrix.

\noindent
\textbf{Group assignment.}
Given the similarity matrix, finding the optimal partition with capacity constraints can be done by greedy search: $\mathcal{O}(r^2 K^2)$ time.
Space is dominated by the similarity matrix: $\mathcal{O}(r^2 K^2)$.

Therefore, the overall time complexity is $\mathcal{O}(r^2 K^2 (m+n))$, and space complexity is $\mathcal{O}(r^2 K^2)$.

\section{More Literature Reviews} \label{sec.appendix.related}

In this section, we provide a comprehensive organization and overview of existing methods for multi-task instruct-tuning and parameter-efficient fine-tuning of LLMs.

\subsection{LLM Multi-Task Instruct-Tuning}

According to recent surveys \citep{wang2024a,shi2024continual}, existing multi-task instruct-tuning approaches can be broadly categorized into three main paradigms: \textit{rehearsal-based}, \textit{optimization-based}, and \textit{architecture-based} methods. 
Although multi-task learning has been extensively explored in the computer vision community, its application to LLMs remains in its early stages. Accordingly, this section provides a brief overview of general multi-task instruct-tuning techniques and their variants in LLMs.

\noindent
\textbf{Rehearsal-based methods} directly store a portion of old data and re-train the model alongside new data when learning new tasks, thus preserving old knowledge \citep{lopez2017gradient,li2025cmt}. This approach is straightforward and effective, yet it encounters challenges such as storage costs \citep{d2019episodic,huang2024mitigating}. 
To tackle the issue of storage costs, several generative rehearsal-based methods employ generative models to generate data samples relevant to old tasks, thereby replacing the stored old task data. For example, they utilize variational auto-encoders \citep{shin2017continual} and generative adversarial networks \citep{liu2020generative,wang2022triple}, to learn the data distribution of old tasks and generate data by sampling from this distribution in new tasks.
On this basis, most existing efforts to address LLM cross-task interference involve synthesizing high-quality data for rehearsal using untrained proxy LLMs \citep{sun2020lamol,wang2024inscl}. For example, SSR \citep{huang2024mitigating} uses an untrained base model to generate task samples, and then employs the latest trained model to produce responses, thereby constructing rehearsal data pairs.

\noindent
\textbf{Optimization-based methods} primarily focus on the gradients between tasks during model training, aiming to guarantee the optimization gradients across tasks do not interfere with each other \citep{wang2023orthogonal,yuan2026differential}. 
To achieve this, most existing approaches employ orthogonal optimization, aiming to make the gradients between tasks mutually orthogonal \citep{lopez2017gradient,saha2021gradient} or to identify flat minima in the loss landscape during alternative task training \citep{li2024revisiting,bian2024make}. For example, O-LoRA \citep{wang2023orthogonal} learns distinct tasks in different low-rank subspaces while keeping orthogonality between these subspaces; GORP \citep{wang2025continual} restricts gradient updates to a low-rank subspace and ensures, via a projection operation, that this subspace is orthogonal to those used by previous tasks.

\noindent
\textbf{Architecture-based methods} involve isolating parameters between tasks, and allocating a new set of learnable parameters for each new task's data \citep{wang2023rehearsal,wang2023orthogonal,yang2025is} or extracting task-specific sub-networks from the pre-trained model \citep{chen2023towards}. Subsequently, inference for new tasks is performed using techniques such as automatic routing \citep{dou2023loramoe,ma2024modula,feng2024mixture}. For example, LoRAMoE \citep{dou2023loramoe} designs a LoRA-based expert model for each task and employs a trained routing function for expert selection.

In this work, we also attempt to explain the mechanism of parameter overlap across different tasks, which has received relatively little attention. \citet{song2024does} identify the existence of task‑specific neurons in LLMs and propose using special tokens to locate these neurons; Building on this, \citet{leng2025towards} further demonstrate that the overlap among such task‑specific parameters is strongly correlated with the cross‑task generalization.

\subsection{Parameter-Efficient Fine-Tuning of LLMs}

LLMs and their multimodal variants have demonstrated remarkable performance in diverse downstream tasks, \eg mathematical reasoning \citep{liu2025where,huang2025loong}. Nevertheless, their training costs remain prohibitively high \citep{liu2024deepseek,yan2026distribution,xie2026are}.
Therefore, cutting-edge works propose \textit{parameter-efficient fine-tuning} (PEFT) methods, with LoRA and MoE series receiving the most attention. 
Specifically, LoRA \citep{hu2022lora,meng2024pissa,kou2026tail} is built on the assumption that the parameter changes during model adaptation to new tasks exhibit low-rank properties, and it possesses plug-and-play characteristics. For example, DoRA \citep{liu2024dora} decomposes pre-trained weights into magnitude and directional components; QLoRA \citep{dettmers2023qlora} further quantizes LoRA, significantly reducing the fine-tuning costs of LLMs. Additionally, the MoE architecture \citep{shazeer2017outrageously,gao2022parameter}, due to its sparsity and selective activation characteristics, has become the preferred choice for several open-source foundation LMs, \eg Mixtral \citep{jiang2024mixtral} and DeepSeek \citep{liu2024deepseek}, and also serves as the base structure for numerous multi-task models \citep{ma2018modeling,huang2024harder}.
More recently, partial works attempt to re-initialize LoRAs by select key dimensions in pre-trained weights through SVD \citep{meng2024pissa,yang2024corda,fan2025make}, and this paper draws inspiration from these works to propose decomposing the basic abilities of LLMs.

\begin{table*}[t]
\centering
\renewcommand\arraystretch{1.15}
  \caption{Details of the multi-task datasets SuperNI \citep{wang2022super}.}
  \label{dataset}
  \small
  \setlength{\tabcolsep}{5pt}{
  \begin{tabular}{m{1.1cm}<{\centering}m{6.2cm}<{\centering}m{3.2cm}<{\centering}m{1.1cm}<{\centering}m{1.1cm}<{\centering}m{2.0cm}<{\centering}}
    \toprule
    Task ID & Dataset Name & Task & \#Train & \#Eval & Metrics \\
    \hline
    T002 & task002\_quoref\_answer\_generation & question answering & 1,000 & 100 & \textsc{Rouge-L} \\
    T073 & task073\_commonsenseqa\_answer\_generation & question answering & 975 & 100 & \textsc{Rouge-L} \\
    T591 & task591\_sciq\_answer\_generation & question answering & 1,000 & 100 & \textsc{Rouge-L} \\
    T181 & task181\_outcome\_extraction & information extraction & 338 & 43 & \textsc{Rouge-L} \\
    T748 & task748\_glucose\_reverse\_cause\_event\_detection & information extraction & 1,000 & 100 & \textsc{Rouge-L} \\
    T1510 & task1510\_evalution\_relation\_extraction & information extraction & 1,000 & 100 & \textsc{Rouge-L} \\
    T363 & task363\_sst2\_polarity\_classification & sentiment analysis & 1,000 & 100 & Accuracy \\
    T875 & task875\_emotion\_classification & sentiment analysis & 1,000 & 100 & Accuracy \\
    T1687 & task1687\_sentiment140\_classification & sentiment analysis & 1,000 & 100 & Accuracy \\
    T511 & task511\_reddit\_tifu\_long\_text\_summarization & text summarization & 1,000 & 100 & \textsc{Rouge-L} \\
    T1290 & task1290\_xsum\_summarization & text summarization & 1,000 & 100 & \textsc{Rouge-L} \\
    T1572 & task1572\_samsum\_summary & text summarization & 160 & 20 & \textsc{Rouge-L} \\
    T639 & task639\_multi\_woz\_user\_utterance\_generation & dialogue generation & 142 & 18 & \textsc{Rouge-L} \\
    T1590 & task1590\_diplomacy\_text\_generation & dialogue generation & 126 & 16 & \textsc{Rouge-L} \\
    T1729 & task1729\_personachat\_generate\_next & dialogue generation & 1,000 & 100 & \textsc{Rouge-L} \\
    \bottomrule
  \end{tabular} }
\end{table*}

\begin{table}[t]
\centering
\renewcommand\arraystretch{1.15}
  \caption{Five different random seeds correspond to five distinct task orderings under the sequential training setting.}
  \label{taskorder}
  \small
  \setlength{\tabcolsep}{5pt}{
  \begin{tabular}{m{0.6cm}<{\centering}m{6.7cm}<{\centering}}
    \toprule
    Seed & Task Ordering \\
    \hline
    1 & $\text{T591} \rightarrow \text{T073} \rightarrow \text{T1687} \rightarrow \text{T875} \rightarrow \text{T1572} \rightarrow \text{T639} \rightarrow \text{T1510} \rightarrow \text{T1590} \rightarrow \text{T511} \rightarrow \text{T181} \rightarrow \text{T363} \rightarrow \text{T1729} \rightarrow \text{T002} \rightarrow \text{T1290} \rightarrow \text{T748}$ \\
    \hline
    2 & $\text{T1290} \rightarrow \text{T1510} \rightarrow \text{T1572} \rightarrow \text{T002} \rightarrow \text{T073} \rightarrow \text{T511} \rightarrow \text{T1590} \rightarrow \text{T363} \rightarrow \text{T748} \rightarrow \text{T639} \rightarrow \text{T181} \rightarrow \text{T1729} \rightarrow \text{T1687} \rightarrow \text{T591} \rightarrow \text{T875}$ \\
    \hline
    3 & $\text{T002} \rightarrow \text{T073} \rightarrow \text{T1572} \rightarrow \text{T181} \rightarrow \text{T591} \rightarrow \text{T1687} \rightarrow \text{T363} \rightarrow \text{T1729} \rightarrow \text{T511} \rightarrow \text{T875} \rightarrow \text{T639} \rightarrow \text{T748} \rightarrow \text{T1590} \rightarrow \text{T1290} \rightarrow \text{T1510}$ \\
    \hline
    4 & $\text{T1290} \rightarrow \text{T875} \rightarrow \text{T639} \rightarrow \text{T1590} \rightarrow \text{T591} \rightarrow \text{T073} \rightarrow \text{T1687} \rightarrow \text{T002} \rightarrow \text{T748} \rightarrow \text{T511} \rightarrow \text{T1572} \rightarrow \text{T181} \rightarrow \text{T1729} \rightarrow \text{T363} \rightarrow \text{T1510}$ \\
    \hline
    5 & $\text{T748} \rightarrow \text{T1510} \rightarrow \text{T875} \rightarrow \text{T002} \rightarrow \text{T591} \rightarrow \text{T1729} \rightarrow \text{T1572} \rightarrow \text{T511} \rightarrow \text{T1687} \rightarrow \text{T1590} \rightarrow \text{T073} \rightarrow \text{T639} \rightarrow \text{T181} \rightarrow \text{T363} \rightarrow \text{T1290}$ \\
    \bottomrule
  \end{tabular} }
\end{table}

\section{More Experimental Details}

In this section, we provide a detailed description of our experimental setup, including dataset statistics, specifications of the LLMs used, baseline methods for comparison, implementation details, and formal evaluation metrics.

\subsection{Evaluation Benchmarks} \label{sec.appendix.detail.dataset}

\textit{SuperNI} \citep{wang2022super} is a benchmark comprising a diverse set of NLP tasks paired with expert-written instructions, designed to enable rigorous evaluation of multi-task instruct-tuning methods for LLMs. Specifically, we follow \citet{zhao2024sapt} to select three tasks each from five categories, including dialogue generation, information extraction, question answering, summarization, and sentiment analysis, resulting in a sequence of 15 tasks used to evaluate the various multi-task approaches.
For each individual task, we truncate its training set to 1,000 samples and its validation set to 100 samples; for tasks with fewer samples than these thresholds, we retain their original sizes. Detailed statistics, domains, and evaluation metrics for these datasets are summarized in Table~\ref{dataset}.
Additionally, in Table~\ref{taskorder} we show the task orderings corresponding to the sequential learning setting when the random seed is set to $\{1,2,3,4,5\}$, respectively.

\subsection{Model Cards} \label{sec.appendix.detail.model}

We employ three families of contemporary open-source: Qwen3 \citep{yang2025qwen3}, Llama3 \citep{llama3modelcard}, Gemma2 \citep{riviere2024gemma}, represented by two parameter scales to evaluate performance across model sizes.
\begin{itemize}
    \item \textbf{Qwen3-4B \footnote{\fontsize{8}{10}\selectfont \url{https://huggingface.co/Qwen/Qwen3-4B}} / 8B \footnote{\fontsize{8}{10}\selectfont \url{https://huggingface.co/Qwen/Qwen3-8B}}} are dense-architecture LLMs, which are distilled variants derived via off-policy and on-policy knowledge distillation from larger teacher models, aiming to preserve strong reasoning and instruction-following capabilities while reducing computational cost.
    \item \textbf{Llama3-3B \footnote{\fontsize{8}{10}\selectfont \url{https://huggingface.co/meta-llama/Llama-3.2-3B-Instruct}} / 8B \footnote{\fontsize{8}{10}\selectfont \url{https://huggingface.co/meta-llama/Meta-Llama-3-8B-Instruct}}} represents a significant upgrade over its predecessors with an expanded token vocabulary (128K), grouped-query attention, and training on over 15 trillion tokens.
    \item \textbf{Gemma2-2B \footnote{\fontsize{8}{10}\selectfont \url{https://huggingface.co/google/gemma-2-2b-it}} / 9B \footnote{\fontsize{8}{10}\selectfont \url{https://huggingface.co/google/gemma-2-9b-it}}} are from a lightweight, open-weight model family built on the same research foundation as Gemini. It features innovations such as sliding-window attention, logit soft-capping, and extensive knowledge distillation.
\end{itemize}

\subsection{Baselines} \label{sec.appendix.detail.baseline}

We compare our approach against a representative set of parameter-efficient fine-tuning baselines, including two low-rank adaptation variants and three recent MoE-inspired methods tailored for multi-task instruct-tuning:
\begin{itemize}
    \item \textbf{LoRA} \citep{hu2022lora} is a foundational PEFT method that freezes the pre-trained model weights and injects trainable low-rank decomposition matrices into attention layers, drastically reducing the number of trainable parameters while preserving performance close to full fine-tuning.
    \item \textbf{PiSSA} \citep{meng2024pissa} improves upon LoRA by reinitializing the low-rank adapters using the principal components of the pre-trained weight matrix via SVD. Instead of random or zero initialization, PiSSA decomposes the original weight as $\mathbf{W} = \mathbf{U} \boldsymbol{\Sigma} \mathbf{V}^\top$ and initializes the adapter to capture the dominant singular directions, leading to faster convergence and higher downstream accuracy with the same parameter budget.
    \item \textbf{OLoRA} \citep{wang2023orthogonal} enhances LoRA’s optimization stability and isolates different LoRA gradients by enforcing orthogonality in the low-rank adapter matrices. This structured initialization reduces gradient interference and accelerates training, particularly in low-data or multi-task regimes.
    \item \textbf{LoRAMoE} \citep{dou2023loramoe} integrates the MoE paradigm into LoRA by deploying multiple parallel LoRA adapters per layer, each acting as an expert, along with a trainable router that selects or blends experts based on input tokens. Crucially, it partitions experts into two groups, one dedicated to preserving world knowledge from pretraining and the other to learning task-specific instructions, thereby mitigating catastrophic forgetting during supervised fine-tuning.
    \item \textbf{GainLoRA} \citep{liang2025gated} addresses continual and multi-task learning by dynamically gating the contribution of task-specific LoRA branches during inference. It introduces a learnable gate module that modulates how much each LoRA adapter influences the output, with constraints applied during training to suppress interference from newly added adapters on previously learned tasks, thus improving knowledge retention across sequential instruct-tuning.
\end{itemize}

\begin{figure*}[t]
  \centering
  \includegraphics[width=1.0\textwidth]{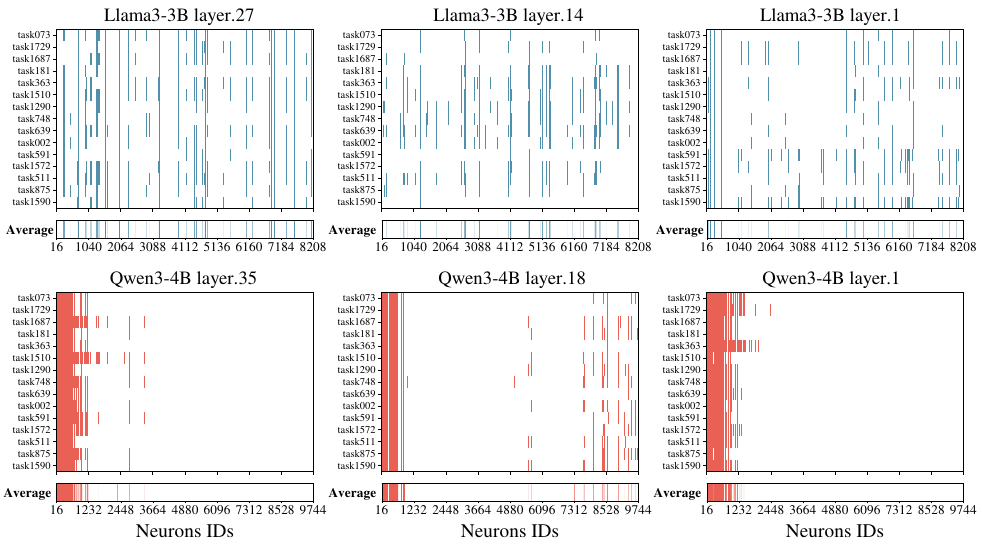}
  \caption{Shared task-specific neurons of MLP gate layers across different tasks.}
  \label{mlp_gate_act_keep10_binary}
\end{figure*}

\subsection{Implementation Details} \label{sec.appendix.detail.implementation}

We implement our method based on the above 6 LLMs, using the Hugging Face Transformers library. All experiments are conducted on 8 NVIDIA 4090 GPUs with DeepSpeed ZeRO-2 enabled to support efficient distributed training without CPU offloading, and we enable FlashAttention-2 for accelerated attention computation.
For parameter-efficient fine-tuning, we inject a MoE-style LoRA architecture into the gate projection of each transformer MLP block. The base LoRA configuration uses rank $r=4$, scaling factor $\alpha=32$ (yielding an effective scaling of $\alpha/r=8$), and dropout rate of 0.05. Our method employs 8 experts, with a top-$k=4$ routing strategy that activates four experts per token during training and inference. The expert selection and combination are learned end-to-end alongside the router parameters.
Training is performed for 10 epochs on the \textit{SuperNI} dataset, with a maximum sequence length of 1024 tokens and a target output length capped at 50 tokens. We use a per-device batch size of 1 and accumulate gradients over 1 step, resulting in an effective global batch size of 8. Optimization is carried out using the AdamW optimizer with a cosine learning rate scheduler, peak learning rate of $2 \times 10^{-4}$, warmup ratio of 0.03, and no weight decay. Mixed-precision training is enabled via bf16. We fix the random seed to the values $\{1,2,3,4,5\}$ to ensure reproducibility.

\subsection{Evaluation Metrics} \label{sec.appendix.detail.metrics}

Following standard practice in previous literature \citep{lopez2017gradient,zhao2024sapt}, we adopt the following four metrics to comprehensively assess model performance:
\begin{itemize}
    \item \textbf{\textsc{Rouge}} measures the overall effectiveness after learning the entire sequence of tasks as
    \begin{equation}
        \textsc{Rouge} = \frac{1}{T} \sum _{t=1}^T a_{t,T}, \nonumber
    \end{equation}
    where denotes the model's performance on $t$-th task after training on the $T$-th task. This reflects the final model’s ability to perform well across all seen tasks.
    \item \textbf{Forward} quantifies how much prior knowledge acquired from previously seen tasks helps (or hinders) the learning of a new task. 
    It is measured relative to a baseline where each task is learned in isolation.
    Let  $a_{t,0}$  denote the model's performance on the  $t$ -th task when trained alone (\ie without any other tasks), Forward is defined as
    \begin{equation}
        \text{Forward} = 
        \begin{cases}
            \dfrac{1}{T} \sum\limits_{t=1}^{T} \left( a_{t,T} - a_{t,0} \right), & \text{mixed training}, \\
            \dfrac{1}{T} \sum\limits_{t=1}^{T} \left( a_{t,t} - a_{t,0} \right), & \text{sequential training}. \nonumber
        \end{cases}
    \end{equation}
    A positive Forward value indicates that leveraging prior experience improves performance compared to isolated training.
    \item \textbf{Forget Rate} quantifies the extent of catastrophic forgetting on previously learned tasks under the sequential training setting as
    \begin{equation}
        \text{Forget Rate} = \frac{1}{T-1} \sum _{t=1}^{T-1} (a_{t,t} - a_{t,T}). \nonumber
    \end{equation}
    A higher value indicates more severe forgetting, as it captures the performance drop from the peak (after learning $t$-th task) to the end (after learning all $T$ tasks).
    \item \textbf{Backward} measures the influence of learning subsequent tasks on the performance of earlier tasks under the sequential training setting as
    \begin{equation}
        \text{Backward} = \frac{1}{T-1} \sum_{t=1}^{T-1} (a_{t,T} - a_{t,t}). \nonumber
    \end{equation}
    Backward can be positive if later tasks improve earlier ones (positive backward transfer), making it a more general measure.
\end{itemize}

\begin{figure*}
  \centering
  \includegraphics[width=1.0\textwidth]{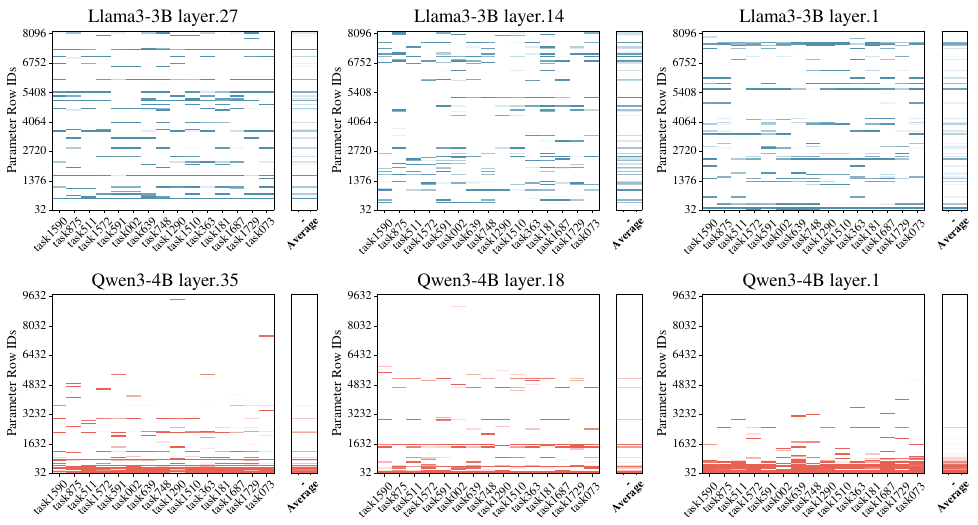}
  \caption{Shared task-specific parameters of MLP gate layers across different tasks.}
  \label{mlp_gate_grad_keep10_binary_high}
\end{figure*}

\begin{figure*}
  \centering
  \includegraphics[width=1.0\textwidth]{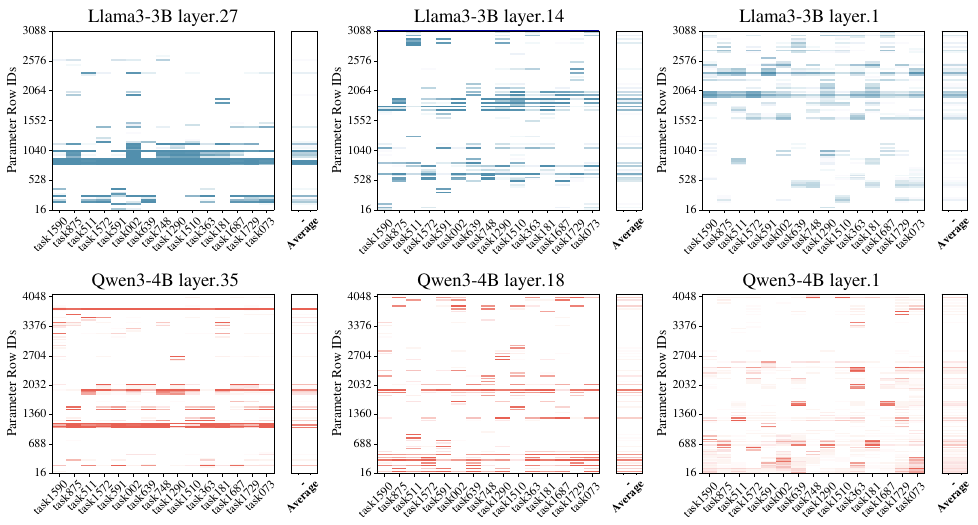}
  \caption{Shared task-specific parameters of query matrices in self-attention layers across different tasks.}
  \label{q_grad_keep10_binary}
\end{figure*}

\begin{figure*}[t]
  \centering
  \includegraphics[width=1.0\textwidth]{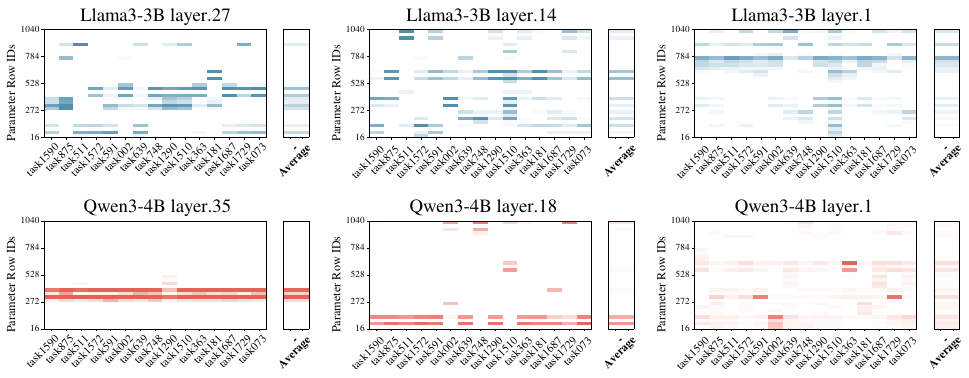}
  \caption{Shared task-specific parameters of key matrices in self-attention layers across different tasks.}
  \label{k_grad_keep10_binary}
\end{figure*}

\begin{figure*}[t]
  \centering
  \includegraphics[width=1.0\textwidth]{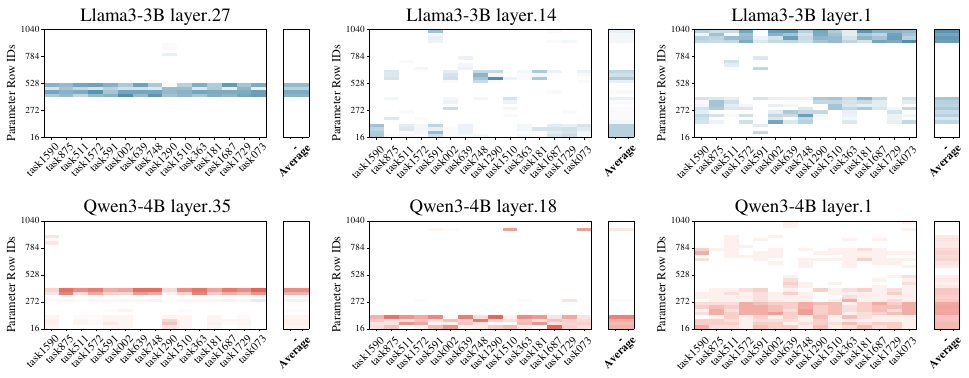}
  \caption{Shared task-specific parameters of value matrices in self-attention layers across different tasks.}
  \label{v_grad_keep10_binary}
\end{figure*}

\section{More Experimental Results} \label{app:experiments}

In this section, we present additional experimental investigations and the results of our evaluation studies, including gradient angles, sensitivity analysis, and performance across different tasks.

\subsection{More Investigation Results} \label{appendix:investigation}

In this section, we employ the same experimental setup as Sec.~\ref{sec:2} to investigate neuron and parameter sharing behaviors across different tasks in a broader range of modules. Specifically, we conduct experiments on both Llama3-3B and Qwen3-4B and report their cross-task sharing performance in key components, including the gate projections in the MLP blocks and the query, key, and value matrices in the self-attention modules, in Figs.\ref{mlp_gate_act_keep10_binary}, \ref{mlp_gate_grad_keep10_binary_high}, \ref{q_grad_keep10_binary}, \ref{k_grad_keep10_binary}, and \ref{v_grad_keep10_binary}. Furthermore, we examine these modules across Transformer layers at three distinct depths: low, middle, and high. For Llama3-3B, which comprises 28 layers in total, we select layers 1, 14, and 27 to represent the low, middle, and high levels, respectively. Similarly, for Qwen3-4B, which has 36 layers, we analyze layers 1, 18, and 35 as representatives of the corresponding depth categories.

Overall, our findings are consistent with the conclusions drawn in Sec.~\ref{sec:2}: a substantial portion of neurons and parameters remains inactive across all tasks, while those that are co-activated continue to naturally organize into base compositional groups. Additionally, Qwen3-4B again exhibits the phenomenon where the majority of activated neurons are concentrated at lower neuron IDs.

\begin{figure*}[t]
  \centering
  \includegraphics[width=0.95\textwidth]{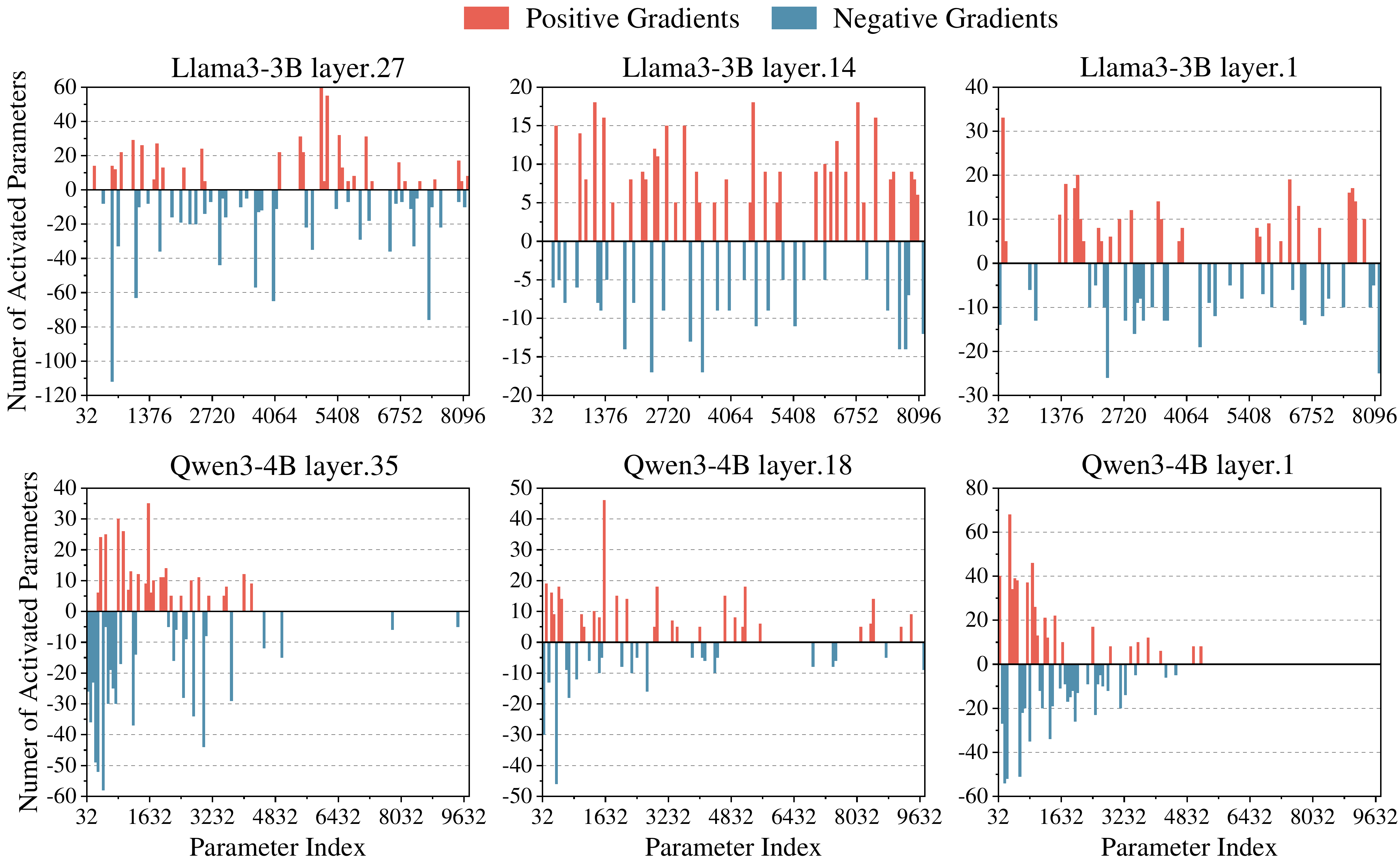}
  \caption{The number of parameters that activate each parameter column with positive or negative gradients across all tasks, respectively.}
  \label{mlp_gate_grad_statistic_nosq}
\end{figure*}

\textit{Comparing sharing behaviors across different layers}, we first observe that, for the gate projections in Figs.~\ref{mlp_gate_act_keep10_binary} and \ref{mlp_gate_grad_keep10_binary_high}, activation patterns show no significant variation across layers, \eg activations in Qwen3-4B consistently cluster at lower neuron IDs regardless of depth. In contrast, for the query, key, and value matrices in the self-attention modules, we identify systematic shifts in activation patterns from lower to higher layers. Specifically, activated parameters tend to migrate from higher to lower parameter IDs as depth increases, for instance, in the query matrix of Llama3-3B in Fig.~\ref{q_grad_keep10_binary}, or transition from scattered, diffuse activations toward more concentrated clusters, as seen in the value matrix of Qwen3-4B in Fig.~\ref{v_grad_keep10_binary}.
Although these trends indicate that different layers exhibit distinct sharing behaviors, they all remain consistent with our core hypothesis: the parameter matrices in each layer can be decomposed into a set of basic, reusable abilities. Notably, prior work has suggested that only the gate projections require decoupling \citep{dai2022knowledge,song2024does}. Given that the sharing patterns of gate matrices are largely consistent across layers in Fig.~\ref{mlp_gate_grad_keep10_binary_high}, applying the same decoupling strategy uniformly across all layers remains justified and effective.

Moreover, a potential concern arises from our primary focus on the magnitude of parameter activation, \ie the absolute value of gradients, across tasks. Specifically, if different tasks induce gradients of opposite signs on the same parameter, they may exhibit conflicting optimization directions despite similar activation magnitudes. To investigate this, we present an additional experiment in Fig.~\ref{mlp_gate_grad_statistic_nosq}, which visualizes, for each parameter, the number of tasks that produce positive versus negative gradients.
In this figure, the vertical axis represents the total number of activated parameters across 15 distinct tasks for each parameter row (\ie the sum of tasks that activate that particular parameter, regardless of gradient sign). This analysis helps reveal whether conflicting gradient directions are prevalent and provides further insight into the compatibility of task-specific updates at the parameter level.
As shown in Fig.~\ref{mlp_gate_grad_statistic_nosq}, the majority of parameters are consistently activated with gradients of the same sign across different tasks; only a small fraction exhibit both positive and negative gradients simultaneously. This result demonstrates that, even when multiple tasks activate the same parameters, their gradient directions are largely aligned in most cases. Consequently, this further supports our central analogy: such shared activation patterns can be interpreted as the reuse of basic abilities across tasks.

\begin{figure*}[t]
  \centering
  \includegraphics[width=0.98\textwidth]{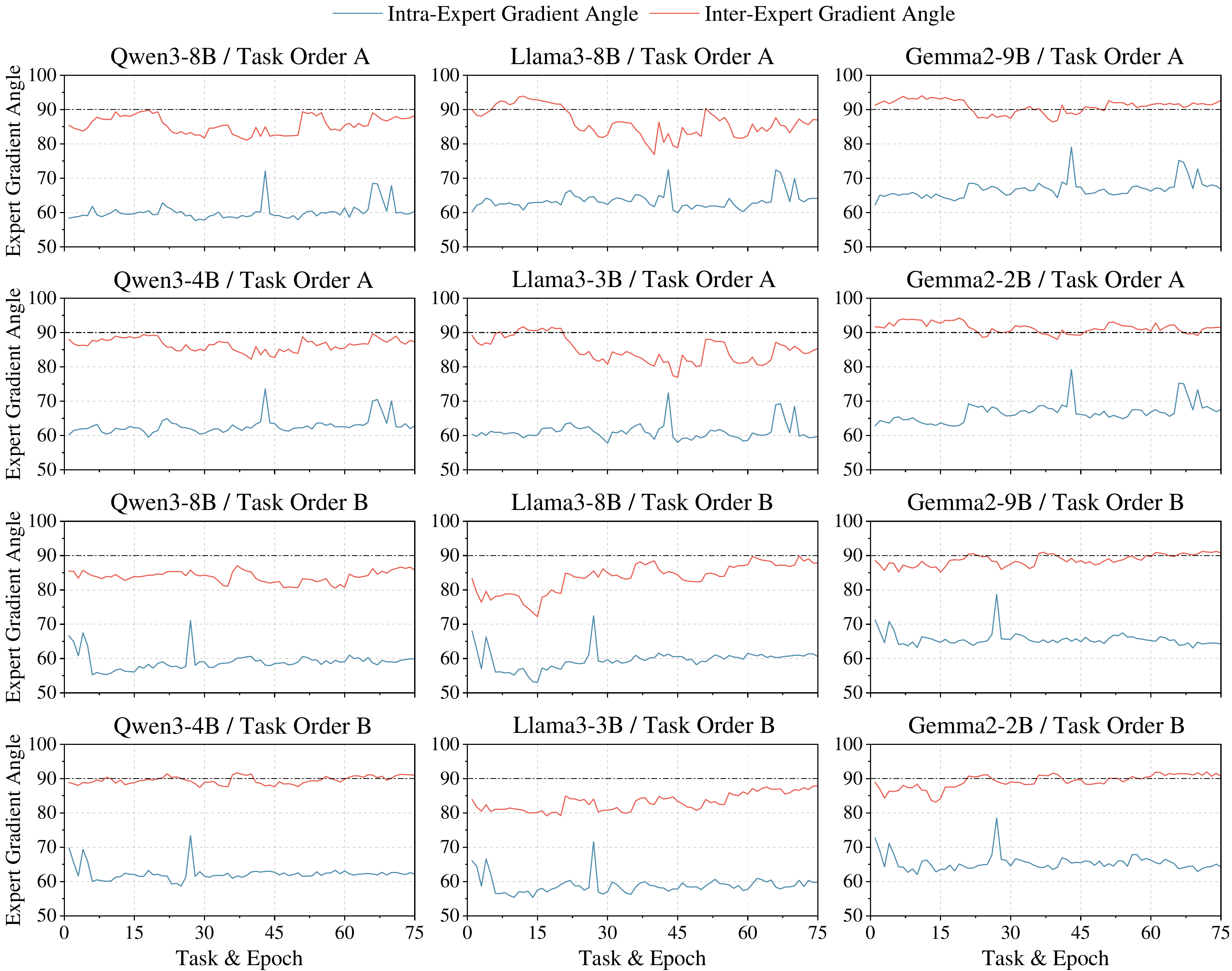}
  \caption{Intra- and inter-expert gradient angles of \baby on 6 LLMs across epochs.}
  \label{angleall}
\end{figure*}

\begin{figure*}[t]
  \centering
  \includegraphics[width=1.0\textwidth]{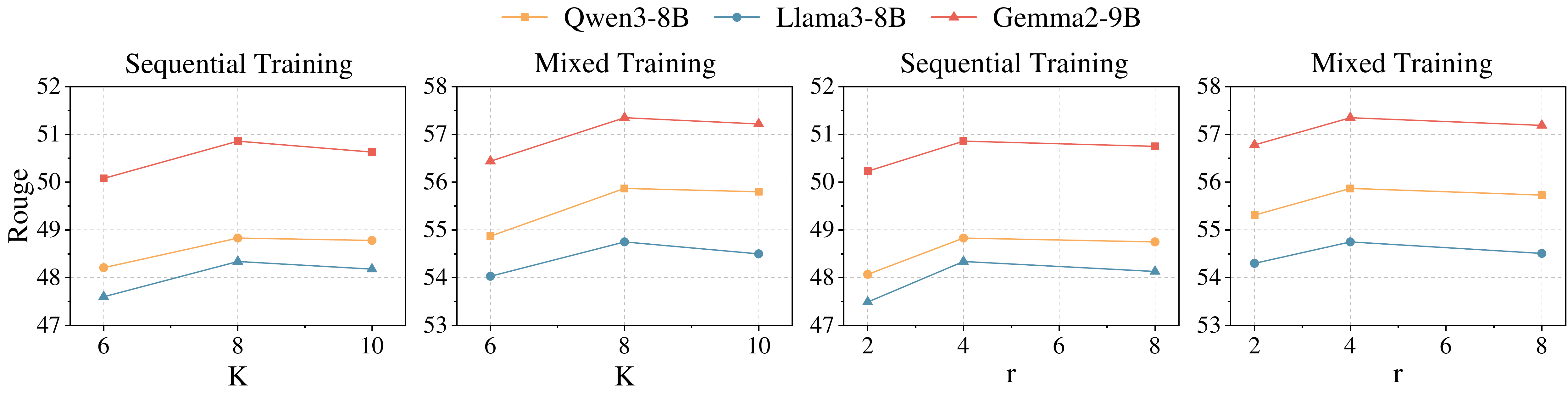}
  \caption{Sensitivity analysis of the parameters $K$ and $r$.}
  \label{sensitivity}
\end{figure*}

\subsection{More Gradient Angles} \label{app:experiments.angles}

In this section, we extend the intra-expert and inter-expert gradient angle results of our method \baby, originally presented in Fig.~\ref{compare_angle}, to a broader set of LLMs, with the extended results shown in Fig.~\ref{angleall}. We conduct experiments across all 6 LLMs and evaluate two different task orders: \textit{Task Order A} and \textit{Task Order B}, which correspond to the training sequences using random seeds 1 and 2, respectively, as specified in Table~\ref{taskorder}.
The experimental results align with the findings in Fig.~\ref{compare_angle}: our method successfully maintains inter-expert gradient angles near 90 degrees, indicating approximate orthogonality. In some cases, such as Qwen3-4B under Task Order B, the inter-expert angles remain consistently clustered around 90 degrees throughout training. Meanwhile, intra-expert gradient angles are consistently kept at relatively small values. These results collectively demonstrate the effectiveness of our DOG method in preserving orthogonality between experts while encouraging coherent updates within each expert.

\subsection{Sensitivity Analysis} \label{app:experiments.sensitivity}

To investigate the impact of the number of LoRA experts $K$ and the LoRA rank $r$ on model performance, we conduct a sensitivity analysis across three LLMs: Qwen3-8B, Llama3-8B, and Gemma2-9B, and present the results in Fig.~\ref{sensitivity}. Generally, under both mixed training and sequential training settings, our method consistently achieves optimal performance when $K=8$ and $r=4$. Deviating from these values, either increasing or decreasing $K$ or $r$, leads to degraded performance.

In our method, $K$ corresponds to the number of basic abilities decoupled by \baby. The empirical results indicate that decomposing the model into exactly 8 such abilities yields the best performance. When $K$ is too small, the set of decoupled abilities is insufficient to adequately represent all 15 tasks. Conversely, when $K$ is too large, redundant abilities emerge: this not only increases training overhead but also interferes with the routing mechanism by diluting its focus on the most relevant abilities.
Similarly, the LoRA rank $r$ represents the granularity of each decoupled unit. If $r$ is too small, individual experts lack sufficient capacity to fully encode a distinct basic ability. On the other hand, an excessively large $r$, much like an oversized $K$, imposes greater computational costs during both training and the DOG process. Moreover, a larger $r$ introduces more candidate features during grouping, which complicates the clean separation of basic abilities and undermines the effectiveness of the grouping strategy.

\subsection{Visualizing Basic Abilities}

To further explore whether these decomposed LoRAs represent distinct skills, we conduct a toy experiment on \textit{SuperNI} (K=8, top-4 experts activated) and evaluate expert activation on 20 \textit{MMLU} tasks using our proposed \baby. Their activated patterns are shown in Fig.~\ref{basicability}.

\begin{figure}[t]
  \centering
  \includegraphics[width=0.9\columnwidth]{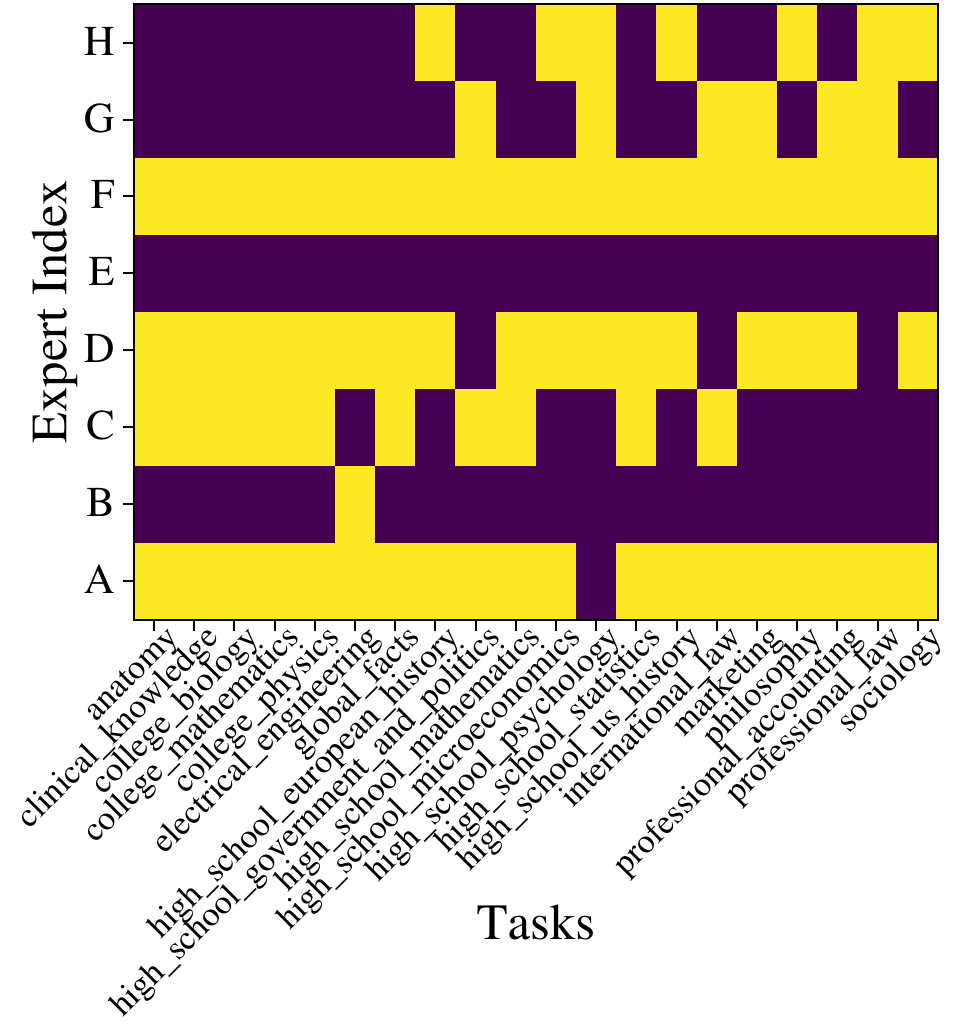}
  \caption{Visualizing basic abilities.}
  \label{basicability}
\end{figure}

Our qualitative analysis (where 1=activated, 0=inactive) reveals intriguing functional patterns: Experts A and F consistently activate across nearly all tasks, suggesting they encode general skills; Experts A, C, D, and F concurrently activate for factual knowledge and logical reasoning; Expert H exhibits a strong preference for history and sociology; Expert E is predominantly triggered by governance, business, and law-related tasks. These results suggest that expert activations follow identifiable functional patterns.

\subsection{Performance Across Different Tasks} \label{app:experiments.difftasks}

We also provide detailed evaluation results under the sequential training setting, showing the performance of the LLMs on all 15 tasks after each new task is learned. We compare LoRAMoE against our \baby method under two specific configurations: (1) training Qwen3-8B on the task sequence corresponding to random seed 1, and (2) training Gemma2-9B on the task sequence corresponding to random seed 3. The results for these experiments are presented in Tables~\ref{loramoeseed1qwen3}, \ref{baditseed1qwen3}, \ref{loramoeseed3gemma2}, and \ref{baditseed3gemma2}, respectively.

The results clearly reveal the presence of catastrophic forgetting during sequential training: as the model learns new tasks, its performance on previously acquired tasks consistently degrades. For instance, Tasks 591 and 073, the first two tasks in the sequence, exhibit a steady decline in performance as training progresses.  
In this challenging setting, our method consistently outperforms the baseline LoRAMoE across all evaluation metrics. This not only demonstrates that our approach achieves superior overall multi-task performance but also provides strong evidence that it effectively mitigates task interference and alleviates catastrophic forgetting in sequential learning scenarios.

\definecolor{tomato}{rgb}{0.9, 0.3, 0.3}
\begin{table*}[t]
\centering
\caption{Evaluation performance of the model across different tasks during sequential training, using the \textbf{Qwen3-8B} model with the \textbf{LoRAMoE} method (Rouge = 48.43, Forget Rate = 8.37, Backward = -5.71).}
\renewcommand\arraystretch{1.12}
\small
\label{loramoeseed1qwen3}
\begin{tabular}{|l|*{17}{c|}}
\hline
\multicolumn{2}{|c|}{} & \multicolumn{15}{c|}{Evaluation Task ID} \\
\cline{3-17}
\multicolumn{2}{|c|}{} & \textit{591} & \textit{073} & \textit{1687} & \textit{875} & \textit{1572} & \textit{639} & \textit{1510} & \textit{1590} & \textit{511} & \textit{181} & \textit{363} & \textit{1729} & \textit{002} & \textit{1290} & \textit{748} \\
\hline
\multirow{15}{*}{\rotatebox{90}{Training Task ID}} 
& \textit{591} & \cellcolor{tomato!62.2}62.2 & \cellcolor{tomato!83.0}83.0 & \cellcolor{tomato!79.0}79.0 & \cellcolor{tomato!45.0}45.0 & \cellcolor{tomato!37.6}37.6 & \cellcolor{tomato!2.9}2.9 & \cellcolor{tomato!87.2}87.2 & \cellcolor{tomato!11.4}11.4 & \cellcolor{tomato!16.0}16.0 & \cellcolor{tomato!39.6}39.6 & \cellcolor{tomato!92.0}92.0 & \cellcolor{tomato!13.5}13.5 & \cellcolor{tomato!75.4}75.4 & \cellcolor{tomato!20.6}20.6 & \cellcolor{tomato!43.1}43.1 \\
& \textit{073} & \cellcolor{tomato!68.1}68.1 & \cellcolor{tomato!82.7}82.7 & \cellcolor{tomato!79.0}79.0 & \cellcolor{tomato!43.0}43.0 & \cellcolor{tomato!38.1}38.1 & \cellcolor{tomato!4.5}4.5 & \cellcolor{tomato!79.5}79.5 & \cellcolor{tomato!13.6}13.6 & \cellcolor{tomato!15.3}15.3 & \cellcolor{tomato!27.6}27.6 & \cellcolor{tomato!90.0}90.0 & \cellcolor{tomato!14.4}14.4 & \cellcolor{tomato!75.7}75.7 & \cellcolor{tomato!19.5}19.5 & \cellcolor{tomato!41.6}41.6 \\
& \textit{1687} & \cellcolor{tomato!69.3}69.3 & \cellcolor{tomato!78.0}78.0 & \cellcolor{tomato!81.0}81.0 & \cellcolor{tomato!37.0}37.0 & \cellcolor{tomato!33.3}33.3 & \cellcolor{tomato!3.6}3.6 & \cellcolor{tomato!73.3}73.3 & \cellcolor{tomato!12.6}12.6 & \cellcolor{tomato!16.0}16.0 & \cellcolor{tomato!23.8}23.8 & \cellcolor{tomato!91.0}91.0 & \cellcolor{tomato!16.0}16.0 & \cellcolor{tomato!77.9}77.9 & \cellcolor{tomato!19.7}19.7 & \cellcolor{tomato!42.4}42.4 \\
& \textit{875} & \cellcolor{tomato!66.6}66.6 & \cellcolor{tomato!74.0}74.0 & \cellcolor{tomato!73.0}73.0 & \cellcolor{tomato!69.0}69.0 & \cellcolor{tomato!38.1}38.1 & \cellcolor{tomato!4.2}4.2 & \cellcolor{tomato!44.7}44.7 & \cellcolor{tomato!11.4}11.4 & \cellcolor{tomato!15.4}15.4 & \cellcolor{tomato!21.8}21.8 & \cellcolor{tomato!78.0}78.0 & \cellcolor{tomato!16.0}16.0 & \cellcolor{tomato!78.7}78.7 & \cellcolor{tomato!20.2}20.2 & \cellcolor{tomato!40.7}40.7 \\
& \textit{1572} & \cellcolor{tomato!67.3}67.3 & \cellcolor{tomato!69.0}69.0 & \cellcolor{tomato!68.0}68.0 & \cellcolor{tomato!72.0}72.0 & \cellcolor{tomato!43.0}43.0 & \cellcolor{tomato!5.2}5.2 & \cellcolor{tomato!79.2}79.2 & \cellcolor{tomato!9.8}9.8 & \cellcolor{tomato!16.7}16.7 & \cellcolor{tomato!22.1}22.1 & \cellcolor{tomato!86.0}86.0 & \cellcolor{tomato!16.8}16.8 & \cellcolor{tomato!75.7}75.7 & \cellcolor{tomato!19.7}19.7 & \cellcolor{tomato!39.7}39.7 \\
& \textit{639} & \cellcolor{tomato!60.6}60.6 & \cellcolor{tomato!70.0}70.0 & \cellcolor{tomato!63.0}63.0 & \cellcolor{tomato!67.0}67.0 & \cellcolor{tomato!36.5}36.5 & \cellcolor{tomato!12.3}12.3 & \cellcolor{tomato!77.8}77.8 & \cellcolor{tomato!11.7}11.7 & \cellcolor{tomato!15.8}15.8 & \cellcolor{tomato!16.6}16.6 & \cellcolor{tomato!77.0}77.0 & \cellcolor{tomato!13.9}13.9 & \cellcolor{tomato!76.7}76.7 & \cellcolor{tomato!19.5}19.5 & \cellcolor{tomato!39.8}39.8 \\
& \textit{1510} & \cellcolor{tomato!62.9}62.9 & \cellcolor{tomato!71.0}71.0 & \cellcolor{tomato!78.0}78.0 & \cellcolor{tomato!69.0}69.0 & \cellcolor{tomato!43.1}43.1 & \cellcolor{tomato!15.6}15.6 & \cellcolor{tomato!100.0}100.0 & \cellcolor{tomato!11.8}11.8 & \cellcolor{tomato!14.8}14.8 & \cellcolor{tomato!36.5}36.5 & \cellcolor{tomato!86.0}86.0 & \cellcolor{tomato!16.6}16.6 & \cellcolor{tomato!77.8}77.8 & \cellcolor{tomato!20.2}20.2 & \cellcolor{tomato!43.0}43.0 \\
& \textit{1590} & \cellcolor{tomato!58.4}58.4 & \cellcolor{tomato!69.0}69.0 & \cellcolor{tomato!72.0}72.0 & \cellcolor{tomato!66.0}66.0 & \cellcolor{tomato!41.4}41.4 & \cellcolor{tomato!12.4}12.4 & \cellcolor{tomato!100.0}100.0 & \cellcolor{tomato!11.8}11.8 & \cellcolor{tomato!15.7}15.7 & \cellcolor{tomato!39.5}39.5 & \cellcolor{tomato!84.0}84.0 & \cellcolor{tomato!15.2}15.2 & \cellcolor{tomato!75.5}75.5 & \cellcolor{tomato!20.6}20.6 & \cellcolor{tomato!44.0}44.0 \\
& \textit{511} & \cellcolor{tomato!53.4}53.4 & \cellcolor{tomato!68.0}68.0 & \cellcolor{tomato!75.0}75.0 & \cellcolor{tomato!69.0}69.0 & \cellcolor{tomato!35.2}35.2 & \cellcolor{tomato!6.4}6.4 & \cellcolor{tomato!100.0}100.0 & \cellcolor{tomato!11.9}11.9 & \cellcolor{tomato!15.6}15.6 & \cellcolor{tomato!37.8}37.8 & \cellcolor{tomato!85.0}85.0 & \cellcolor{tomato!10.0}10.0 & \cellcolor{tomato!69.8}69.8 & \cellcolor{tomato!22.6}22.6 & \cellcolor{tomato!41.4}41.4 \\
& \textit{181} & \cellcolor{tomato!51.3}51.3 & \cellcolor{tomato!64.0}64.0 & \cellcolor{tomato!74.0}74.0 & \cellcolor{tomato!68.0}68.0 & \cellcolor{tomato!32.9}32.9 & \cellcolor{tomato!8.0}8.0 & \cellcolor{tomato!99.7}99.7 & \cellcolor{tomato!11.0}11.0 & \cellcolor{tomato!12.8}12.8 & \cellcolor{tomato!69.2}69.2 & \cellcolor{tomato!71.0}71.0 & \cellcolor{tomato!13.3}13.3 & \cellcolor{tomato!73.3}73.3 & \cellcolor{tomato!21.9}21.9 & \cellcolor{tomato!42.2}42.2 \\
& \textit{363} & \cellcolor{tomato!53.3}53.3 & \cellcolor{tomato!63.0}63.0 & \cellcolor{tomato!76.0}76.0 & \cellcolor{tomato!62.0}62.0 & \cellcolor{tomato!39.6}39.6 & \cellcolor{tomato!8.1}8.1 & \cellcolor{tomato!100.0}100.0 & \cellcolor{tomato!10.1}10.1 & \cellcolor{tomato!14.1}14.1 & \cellcolor{tomato!71.2}71.2 & \cellcolor{tomato!87.0}87.0 & \cellcolor{tomato!12.2}12.2 & \cellcolor{tomato!74.0}74.0 & \cellcolor{tomato!21.4}21.4 & \cellcolor{tomato!45.0}45.0 \\
& \textit{1729} & \cellcolor{tomato!51.3}51.3 & \cellcolor{tomato!69.0}69.0 & \cellcolor{tomato!79.0}79.0 & \cellcolor{tomato!60.0}60.0 & \cellcolor{tomato!40.9}40.9 & \cellcolor{tomato!5.2}5.2 & \cellcolor{tomato!100.0}100.0 & \cellcolor{tomato!11.7}11.7 & \cellcolor{tomato!11.6}11.6 & \cellcolor{tomato!75.4}75.4 & \cellcolor{tomato!88.0}88.0 & \cellcolor{tomato!15.6}15.6 & \cellcolor{tomato!77.1}77.1 & \cellcolor{tomato!20.7}20.7 & \cellcolor{tomato!43.4}43.4 \\
& \textit{002} & \cellcolor{tomato!52.1}52.1 & \cellcolor{tomato!61.0}61.0 & \cellcolor{tomato!82.0}82.0 & \cellcolor{tomato!56.0}56.0 & \cellcolor{tomato!39.7}39.7 & \cellcolor{tomato!5.1}5.1 & \cellcolor{tomato!100.0}100.0 & \cellcolor{tomato!11.6}11.6 & \cellcolor{tomato!12.8}12.8 & \cellcolor{tomato!62.4}62.4 & \cellcolor{tomato!88.0}88.0 & \cellcolor{tomato!14.0}14.0 & \cellcolor{tomato!77.2}77.2 & \cellcolor{tomato!21.9}21.9 & \cellcolor{tomato!42.0}42.0 \\
& \textit{1290} & \cellcolor{tomato!46.6}46.6 & \cellcolor{tomato!59.0}59.0 & \cellcolor{tomato!75.0}75.0 & \cellcolor{tomato!55.0}55.0 & \cellcolor{tomato!18.2}18.2 & \cellcolor{tomato!8.1}8.1 & \cellcolor{tomato!99.0}99.0 & \cellcolor{tomato!7.3}7.3 & \cellcolor{tomato!8.4}8.4 & \cellcolor{tomato!68.0}68.0 & \cellcolor{tomato!85.0}85.0 & \cellcolor{tomato!12.3}12.3 & \cellcolor{tomato!71.1}71.1 & \cellcolor{tomato!25.7}25.7 & \cellcolor{tomato!34.2}34.2 \\
& \textit{748} & \cellcolor{tomato!43.4}43.4 & \cellcolor{tomato!62.0}62.0 & \cellcolor{tomato!75.0}75.0 & \cellcolor{tomato!61.0}61.0 & \cellcolor{tomato!31.4}31.4 & \cellcolor{tomato!6.9}6.9 & \cellcolor{tomato!98.3}98.3 & \cellcolor{tomato!7.9}7.9 & \cellcolor{tomato!12.0}12.0 & \cellcolor{tomato!70.0}70.0 & \cellcolor{tomato!86.0}86.0 & \cellcolor{tomato!11.8}11.8 & \cellcolor{tomato!76.2}76.2 & \cellcolor{tomato!24.7}24.7 & \cellcolor{tomato!59.9}59.9 \\
\hline
\end{tabular}
\end{table*}

\begin{table*}[t]
\centering
\caption{Evaluation performance of the model across different tasks during sequential training, using the \textbf{Qwen3-8B} model with the \textbf{\baby} method (Rouge = 49.96, Forget Rate = 7.27, Backward = -5.35).}
\renewcommand\arraystretch{1.12}
\small
\label{baditseed1qwen3}
\begin{tabular}{|l|*{17}{c|}}
\hline
\multicolumn{2}{|c|}{} & \multicolumn{15}{c|}{Evaluation Task ID} \\
\cline{3-17}
\multicolumn{2}{|c|}{} & \textit{591} & \textit{073} & \textit{1687} & \textit{875} & \textit{1572} & \textit{639} & \textit{1510} & \textit{1590} & \textit{511} & \textit{181} & \textit{363} & \textit{1729} & \textit{002} & \textit{1290} & \textit{748} \\
\hline
\multirow{15}{*}{\rotatebox{90}{Training Task ID}} 
& \textit{591} & \cellcolor{tomato!69.2}69.2 & \cellcolor{tomato!78.0}78.0 & \cellcolor{tomato!80.0}80.0 & \cellcolor{tomato!44.0}44.0 & \cellcolor{tomato!41.8}41.8 & \cellcolor{tomato!5.1}5.1 & \cellcolor{tomato!90.8}90.8 & \cellcolor{tomato!14.1}14.1 & \cellcolor{tomato!16.3}16.3 & \cellcolor{tomato!50.7}50.7 & \cellcolor{tomato!91.0}91.0 & \cellcolor{tomato!13.7}13.7 & \cellcolor{tomato!77.7}77.7 & \cellcolor{tomato!20.1}20.1 & \cellcolor{tomato!41.7}41.7 \\
& \textit{073} & \cellcolor{tomato!70.6}70.6 & \cellcolor{tomato!79.5}79.5 & \cellcolor{tomato!76.0}76.0 & \cellcolor{tomato!48.0}48.0 & \cellcolor{tomato!38.9}38.9 & \cellcolor{tomato!5.5}5.5 & \cellcolor{tomato!96.5}96.5 & \cellcolor{tomato!11.4}11.4 & \cellcolor{tomato!15.7}15.7 & \cellcolor{tomato!40.8}40.8 & \cellcolor{tomato!91.0}91.0 & \cellcolor{tomato!17.2}17.2 & \cellcolor{tomato!80.5}80.5 & \cellcolor{tomato!21.3}21.3 & \cellcolor{tomato!43.0}43.0 \\
& \textit{1687} & \cellcolor{tomato!69.4}69.4 & \cellcolor{tomato!77.0}77.0 & \cellcolor{tomato!86.0}86.0 & \cellcolor{tomato!43.0}43.0 & \cellcolor{tomato!37.1}37.1 & \cellcolor{tomato!6.7}6.7 & \cellcolor{tomato!91.7}91.7 & \cellcolor{tomato!11.6}11.6 & \cellcolor{tomato!16.3}16.3 & \cellcolor{tomato!33.8}33.8 & \cellcolor{tomato!87.0}87.0 & \cellcolor{tomato!15.8}15.8 & \cellcolor{tomato!81.3}81.3 & \cellcolor{tomato!20.6}20.6 & \cellcolor{tomato!40.1}40.1 \\
& \textit{875} & \cellcolor{tomato!69.7}69.7 & \cellcolor{tomato!76.0}76.0 & \cellcolor{tomato!83.0}83.0 & \cellcolor{tomato!73.0}73.0 & \cellcolor{tomato!39.7}39.7 & \cellcolor{tomato!3.6}3.6 & \cellcolor{tomato!47.4}47.4 & \cellcolor{tomato!11.1}11.1 & \cellcolor{tomato!17.7}17.7 & \cellcolor{tomato!29.2}29.2 & \cellcolor{tomato!81.0}81.0 & \cellcolor{tomato!14.6}14.6 & \cellcolor{tomato!72.7}72.7 & \cellcolor{tomato!19.7}19.7 & \cellcolor{tomato!38.4}38.4 \\
& \textit{1572} & \cellcolor{tomato!67.9}67.9 & \cellcolor{tomato!75.0}75.0 & \cellcolor{tomato!81.0}81.0 & \cellcolor{tomato!75.0}75.0 & \cellcolor{tomato!39.7}39.7 & \cellcolor{tomato!6.3}6.3 & \cellcolor{tomato!76.8}76.8 & \cellcolor{tomato!12.0}12.0 & \cellcolor{tomato!15.9}15.9 & \cellcolor{tomato!30.9}30.9 & \cellcolor{tomato!79.0}79.0 & \cellcolor{tomato!13.6}13.6 & \cellcolor{tomato!80.4}80.4 & \cellcolor{tomato!20.6}20.6 & \cellcolor{tomato!43.3}43.3 \\
& \textit{639} & \cellcolor{tomato!61.1}61.1 & \cellcolor{tomato!71.0}71.0 & \cellcolor{tomato!82.0}82.0 & \cellcolor{tomato!75.0}75.0 & \cellcolor{tomato!37.4}37.4 & \cellcolor{tomato!12.8}12.8 & \cellcolor{tomato!63.8}63.8 & \cellcolor{tomato!14.3}14.3 & \cellcolor{tomato!14.3}14.3 & \cellcolor{tomato!30.2}30.2 & \cellcolor{tomato!81.0}81.0 & \cellcolor{tomato!14.0}14.0 & \cellcolor{tomato!73.5}73.5 & \cellcolor{tomato!20.3}20.3 & \cellcolor{tomato!41.0}41.0 \\
& \textit{1510} & \cellcolor{tomato!65.9}65.9 & \cellcolor{tomato!74.0}74.0 & \cellcolor{tomato!81.0}81.0 & \cellcolor{tomato!76.0}76.0 & \cellcolor{tomato!39.8}39.8 & \cellcolor{tomato!11.7}11.7 & \cellcolor{tomato!100.0}100.0 & \cellcolor{tomato!10.8}10.8 & \cellcolor{tomato!15.4}15.4 & \cellcolor{tomato!34.5}34.5 & \cellcolor{tomato!83.0}83.0 & \cellcolor{tomato!14.1}14.1 & \cellcolor{tomato!80.3}80.3 & \cellcolor{tomato!19.3}19.3 & \cellcolor{tomato!42.2}42.2 \\
& \textit{1590} & \cellcolor{tomato!63.4}63.4 & \cellcolor{tomato!72.0}72.0 & \cellcolor{tomato!83.0}83.0 & \cellcolor{tomato!74.0}74.0 & \cellcolor{tomato!35.8}35.8 & \cellcolor{tomato!7.9}7.9 & \cellcolor{tomato!99.7}99.7 & \cellcolor{tomato!11.4}11.4 & \cellcolor{tomato!15.0}15.0 & \cellcolor{tomato!34.3}34.3 & \cellcolor{tomato!82.0}82.0 & \cellcolor{tomato!11.1}11.1 & \cellcolor{tomato!76.9}76.9 & \cellcolor{tomato!19.5}19.5 & \cellcolor{tomato!42.7}42.7 \\
& \textit{511} & \cellcolor{tomato!56.6}56.6 & \cellcolor{tomato!75.0}75.0 & \cellcolor{tomato!83.0}83.0 & \cellcolor{tomato!73.0}73.0 & \cellcolor{tomato!32.5}32.5 & \cellcolor{tomato!8.9}8.9 & \cellcolor{tomato!100.0}100.0 & \cellcolor{tomato!8.6}8.6 & \cellcolor{tomato!15.3}15.3 & \cellcolor{tomato!29.2}29.2 & \cellcolor{tomato!84.0}84.0 & \cellcolor{tomato!10.5}10.5 & \cellcolor{tomato!71.9}71.9 & \cellcolor{tomato!22.2}22.2 & \cellcolor{tomato!37.3}37.3 \\
& \textit{181} & \cellcolor{tomato!55.8}55.8 & \cellcolor{tomato!75.0}75.0 & \cellcolor{tomato!82.0}82.0 & \cellcolor{tomato!73.0}73.0 & \cellcolor{tomato!28.5}28.5 & \cellcolor{tomato!6.9}6.9 & \cellcolor{tomato!100.0}100.0 & \cellcolor{tomato!10.3}10.3 & \cellcolor{tomato!15.5}15.5 & \cellcolor{tomato!70.8}70.8 & \cellcolor{tomato!83.0}83.0 & \cellcolor{tomato!11.0}11.0 & \cellcolor{tomato!76.1}76.1 & \cellcolor{tomato!21.6}21.6 & \cellcolor{tomato!41.9}41.9 \\
& \textit{363} & \cellcolor{tomato!60.6}60.6 & \cellcolor{tomato!73.0}73.0 & \cellcolor{tomato!81.0}81.0 & \cellcolor{tomato!70.0}70.0 & \cellcolor{tomato!33.9}33.9 & \cellcolor{tomato!9.7}9.7 & \cellcolor{tomato!100.0}100.0 & \cellcolor{tomato!10.3}10.3 & \cellcolor{tomato!15.8}15.8 & \cellcolor{tomato!71.1}71.1 & \cellcolor{tomato!91.0}91.0 & \cellcolor{tomato!11.1}11.1 & \cellcolor{tomato!76.6}76.6 & \cellcolor{tomato!21.3}21.3 & \cellcolor{tomato!43.9}43.9 \\
& \textit{1729} & \cellcolor{tomato!60.4}60.4 & \cellcolor{tomato!72.0}72.0 & \cellcolor{tomato!80.0}80.0 & \cellcolor{tomato!68.0}68.0 & \cellcolor{tomato!34.7}34.7 & \cellcolor{tomato!10.7}10.7 & \cellcolor{tomato!99.7}99.7 & \cellcolor{tomato!11.2}11.2 & \cellcolor{tomato!15.1}15.1 & \cellcolor{tomato!65.9}65.9 & \cellcolor{tomato!92.0}92.0 & \cellcolor{tomato!12.8}12.8 & \cellcolor{tomato!77.7}77.7 & \cellcolor{tomato!21.1}21.1 & \cellcolor{tomato!44.2}44.2 \\
& \textit{002} & \cellcolor{tomato!54.6}54.6 & \cellcolor{tomato!74.0}74.0 & \cellcolor{tomato!80.0}80.0 & \cellcolor{tomato!62.0}62.0 & \cellcolor{tomato!35.8}35.8 & \cellcolor{tomato!11.0}11.0 & \cellcolor{tomato!100.0}100.0 & \cellcolor{tomato!11.4}11.4 & \cellcolor{tomato!15.3}15.3 & \cellcolor{tomato!64.1}64.1 & \cellcolor{tomato!90.0}90.0 & \cellcolor{tomato!15.1}15.1 & \cellcolor{tomato!77.2}77.2 & \cellcolor{tomato!19.7}19.7 & \cellcolor{tomato!40.8}40.8 \\
& \textit{1290} & \cellcolor{tomato!52.7}52.7 & \cellcolor{tomato!76.0}76.0 & \cellcolor{tomato!79.0}79.0 & \cellcolor{tomato!58.0}58.0 & \cellcolor{tomato!27.5}27.5 & \cellcolor{tomato!8.4}8.4 & \cellcolor{tomato!100.0}100.0 & \cellcolor{tomato!9.7}9.7 & \cellcolor{tomato!10.6}10.6 & \cellcolor{tomato!60.6}60.6 & \cellcolor{tomato!84.0}84.0 & \cellcolor{tomato!12.6}12.6 & \cellcolor{tomato!77.9}77.9 & \cellcolor{tomato!26.0}26.0 & \cellcolor{tomato!28.9}28.9 \\
& \textit{748} & \cellcolor{tomato!53.5}53.5 & \cellcolor{tomato!75.0}75.0 & \cellcolor{tomato!81.0}81.0 & \cellcolor{tomato!56.0}56.0 & \cellcolor{tomato!29.1}29.1 & \cellcolor{tomato!7.6}7.6 & \cellcolor{tomato!99.0}99.0 & \cellcolor{tomato!11.8}11.8 & \cellcolor{tomato!12.7}12.7 & \cellcolor{tomato!55.3}55.3 & \cellcolor{tomato!89.0}89.0 & \cellcolor{tomato!13.6}13.6 & \cellcolor{tomato!76.4}76.4 & \cellcolor{tomato!24.4}24.4 & \cellcolor{tomato!65.1}65.1 \\
\hline
\end{tabular}
\end{table*}

\begin{table*}[t]
\centering
\caption{Evaluation performance of the model across different tasks during sequential training, using the \textbf{Gemma2-9B} model with the \textbf{LoRAMoE} method (Rouge = 47.05, Forget Rate = 11.94, Backward = -10.11).}
\renewcommand\arraystretch{1.12}
\small
\label{loramoeseed3gemma2}
\begin{tabular}{|l|*{17}{c|}}
\hline
\multicolumn{2}{|c|}{} & \multicolumn{15}{c|}{Evaluation Task ID} \\
\cline{3-17}
\multicolumn{2}{|c|}{} & \textit{002} & \textit{073} & \textit{1572} & \textit{181} & \textit{591} & \textit{1687} & \textit{363} & \textit{1729} & \textit{511} & \textit{875} & \textit{639} & \textit{748} & \textit{1590} & \textit{1290} & \textit{1510} \\
\hline
\multirow{15}{*}{\rotatebox{90}{Training Task ID}} 
& \textit{002} & \cellcolor{tomato!83.1}83.1 & \cellcolor{tomato!49.0}49.0 & \cellcolor{tomato!34.6}34.6 & \cellcolor{tomato!26.4}26.4 & \cellcolor{tomato!67.7}67.7 & \cellcolor{tomato!73.0}73.0 & \cellcolor{tomato!87.2}87.2 & \cellcolor{tomato!12.6}12.6 & \cellcolor{tomato!15.6}15.6 & \cellcolor{tomato!45.0}45.0 & \cellcolor{tomato!4.7}4.7 & \cellcolor{tomato!25.4}25.4 & \cellcolor{tomato!7.5}7.5 & \cellcolor{tomato!21.2}21.2 & \cellcolor{tomato!89.0}89.0 \\
& \textit{073} & \cellcolor{tomato!83.2}83.2 & \cellcolor{tomato!66.0}66.0 & \cellcolor{tomato!37.1}37.1 & \cellcolor{tomato!32.4}32.4 & \cellcolor{tomato!71.4}71.4 & \cellcolor{tomato!73.0}73.0 & \cellcolor{tomato!82.0}82.0 & \cellcolor{tomato!12.0}12.0 & \cellcolor{tomato!15.1}15.1 & \cellcolor{tomato!43.0}43.0 & \cellcolor{tomato!3.6}3.6 & \cellcolor{tomato!22.7}22.7 & \cellcolor{tomato!5.4}5.4 & \cellcolor{tomato!21.1}21.1 & \cellcolor{tomato!84.8}84.8 \\
& \textit{1572} & \cellcolor{tomato!79.8}79.8 & \cellcolor{tomato!76.0}76.0 & \cellcolor{tomato!30.0}30.0 & \cellcolor{tomato!65.7}65.7 & \cellcolor{tomato!66.8}66.8 & \cellcolor{tomato!75.0}75.0 & \cellcolor{tomato!93.0}93.0 & \cellcolor{tomato!15.5}15.5 & \cellcolor{tomato!14.9}14.9 & \cellcolor{tomato!46.0}46.0 & \cellcolor{tomato!5.6}5.6 & \cellcolor{tomato!39.9}39.9 & \cellcolor{tomato!11.7}11.7 & \cellcolor{tomato!21.1}21.1 & \cellcolor{tomato!97.0}97.0 \\
& \textit{181} & \cellcolor{tomato!83.2}83.2 & \cellcolor{tomato!83.0}83.0 & \cellcolor{tomato!33.3}33.3 & \cellcolor{tomato!47.0}47.0 & \cellcolor{tomato!63.5}63.5 & \cellcolor{tomato!75.0}75.0 & \cellcolor{tomato!94.0}94.0 & \cellcolor{tomato!12.1}12.1 & \cellcolor{tomato!14.7}14.7 & \cellcolor{tomato!44.0}44.0 & \cellcolor{tomato!3.7}3.7 & \cellcolor{tomato!27.7}27.7 & \cellcolor{tomato!9.5}9.5 & \cellcolor{tomato!20.7}20.7 & \cellcolor{tomato!96.0}96.0 \\
& \textit{591} & \cellcolor{tomato!81.7}81.7 & \cellcolor{tomato!77.0}77.0 & \cellcolor{tomato!41.0}41.0 & \cellcolor{tomato!50.4}50.4 & \cellcolor{tomato!63.9}63.9 & \cellcolor{tomato!72.7}72.7 & \cellcolor{tomato!92.0}92.0 & \cellcolor{tomato!14.9}14.9 & \cellcolor{tomato!14.9}14.9 & \cellcolor{tomato!46.0}46.0 & \cellcolor{tomato!8.2}8.2 & \cellcolor{tomato!41.2}41.2 & \cellcolor{tomato!12.4}12.4 & \cellcolor{tomato!21.4}21.4 & \cellcolor{tomato!92.9}92.9 \\
& \textit{1687} & \cellcolor{tomato!81.5}81.5 & \cellcolor{tomato!82.0}82.0 & \cellcolor{tomato!38.3}38.3 & \cellcolor{tomato!43.9}43.9 & \cellcolor{tomato!63.9}63.9 & \cellcolor{tomato!72.0}72.0 & \cellcolor{tomato!92.0}92.0 & \cellcolor{tomato!12.9}12.9 & \cellcolor{tomato!16.1}16.1 & \cellcolor{tomato!47.0}47.0 & \cellcolor{tomato!8.3}8.3 & \cellcolor{tomato!29.7}29.7 & \cellcolor{tomato!13.8}13.8 & \cellcolor{tomato!21.6}21.6 & \cellcolor{tomato!95.1}95.1 \\
& \textit{363} & \cellcolor{tomato!81.7}81.7 & \cellcolor{tomato!78.0}78.0 & \cellcolor{tomato!40.2}40.2 & \cellcolor{tomato!77.4}77.4 & \cellcolor{tomato!61.6}61.6 & \cellcolor{tomato!74.0}74.0 & \cellcolor{tomato!86.0}86.0 & \cellcolor{tomato!14.2}14.2 & \cellcolor{tomato!16.4}16.4 & \cellcolor{tomato!41.0}41.0 & \cellcolor{tomato!3.7}3.7 & \cellcolor{tomato!28.1}28.1 & \cellcolor{tomato!12.1}12.1 & \cellcolor{tomato!20.3}20.3 & \cellcolor{tomato!94.8}94.8 \\
& \textit{1729} & \cellcolor{tomato!82.8}82.8 & \cellcolor{tomato!81.0}81.0 & \cellcolor{tomato!41.1}41.1 & \cellcolor{tomato!73.5}73.5 & \cellcolor{tomato!69.6}69.6 & \cellcolor{tomato!68.0}68.0 & \cellcolor{tomato!90.0}90.0 & \cellcolor{tomato!11.8}11.8 & \cellcolor{tomato!15.3}15.3 & \cellcolor{tomato!42.0}42.0 & \cellcolor{tomato!3.7}3.7 & \cellcolor{tomato!22.9}22.9 & \cellcolor{tomato!10.0}10.0 & \cellcolor{tomato!20.7}20.7 & \cellcolor{tomato!92.5}92.5 \\
& \textit{511} & \cellcolor{tomato!83.4}83.4 & \cellcolor{tomato!69.0}69.0 & \cellcolor{tomato!37.5}37.5 & \cellcolor{tomato!70.5}70.5 & \cellcolor{tomato!69.8}69.8 & \cellcolor{tomato!70.0}70.0 & \cellcolor{tomato!87.0}87.0 & \cellcolor{tomato!8.2}8.2 & \cellcolor{tomato!14.4}14.4 & \cellcolor{tomato!40.0}40.0 & \cellcolor{tomato!1.2}1.2 & \cellcolor{tomato!21.4}21.4 & \cellcolor{tomato!4.2}4.2 & \cellcolor{tomato!18.9}18.9 & \cellcolor{tomato!83.3}83.3 \\
& \textit{875} & \cellcolor{tomato!78.7}78.7 & \cellcolor{tomato!36.0}36.0 & \cellcolor{tomato!32.9}32.9 & \cellcolor{tomato!72.7}72.7 & \cellcolor{tomato!72.3}72.3 & \cellcolor{tomato!72.0}72.0 & \cellcolor{tomato!90.0}90.0 & \cellcolor{tomato!7.6}7.6 & \cellcolor{tomato!13.8}13.8 & \cellcolor{tomato!45.0}45.0 & \cellcolor{tomato!0.8}0.8 & \cellcolor{tomato!19.6}19.6 & \cellcolor{tomato!2.6}2.6 & \cellcolor{tomato!18.6}18.6 & \cellcolor{tomato!77.0}77.0 \\
& \textit{639} & \cellcolor{tomato!84.4}84.4 & \cellcolor{tomato!69.0}69.0 & \cellcolor{tomato!37.2}37.2 & \cellcolor{tomato!66.8}66.8 & \cellcolor{tomato!71.0}71.0 & \cellcolor{tomato!86.0}86.0 & \cellcolor{tomato!81.0}81.0 & \cellcolor{tomato!8.9}8.9 & \cellcolor{tomato!12.4}12.4 & \cellcolor{tomato!46.0}46.0 & \cellcolor{tomato!1.1}1.1 & \cellcolor{tomato!19.9}19.9 & \cellcolor{tomato!13.7}13.7 & \cellcolor{tomato!20.7}20.7 & \cellcolor{tomato!88.4}88.4 \\
& \textit{748} & \cellcolor{tomato!79.9}79.9 & \cellcolor{tomato!43.0}43.0 & \cellcolor{tomato!30.6}30.6 & \cellcolor{tomato!65.7}65.7 & \cellcolor{tomato!71.9}71.9 & \cellcolor{tomato!87.0}87.0 & \cellcolor{tomato!88.0}88.0 & \cellcolor{tomato!6.9}6.9 & \cellcolor{tomato!13.4}13.4 & \cellcolor{tomato!45.0}45.0 & \cellcolor{tomato!0.8}0.8 & \cellcolor{tomato!18.8}18.8 & \cellcolor{tomato!2.9}2.9 & \cellcolor{tomato!19.2}19.2 & \cellcolor{tomato!83.3}83.3 \\
& \textit{1590} & \cellcolor{tomato!81.7}81.7 & \cellcolor{tomato!58.0}58.0 & \cellcolor{tomato!40.3}40.3 & \cellcolor{tomato!57.2}57.2 & \cellcolor{tomato!72.1}72.1 & \cellcolor{tomato!78.0}78.0 & \cellcolor{tomato!92.0}92.0 & \cellcolor{tomato!12.8}12.8 & \cellcolor{tomato!12.7}12.7 & \cellcolor{tomato!35.0}35.0 & \cellcolor{tomato!0.4}0.4 & \cellcolor{tomato!16.1}16.1 & \cellcolor{tomato!11.5}11.5 & \cellcolor{tomato!18.9}18.9 & \cellcolor{tomato!80.3}80.3 \\
& \textit{1290} & \cellcolor{tomato!77.9}77.9 & \cellcolor{tomato!62.0}62.0 & \cellcolor{tomato!34.7}34.7 & \cellcolor{tomato!56.5}56.5 & \cellcolor{tomato!74.4}74.4 & \cellcolor{tomato!80.0}80.0 & \cellcolor{tomato!89.0}89.0 & \cellcolor{tomato!11.1}11.1 & \cellcolor{tomato!11.7}11.7 & \cellcolor{tomato!40.0}40.0 & \cellcolor{tomato!1.6}1.6 & \cellcolor{tomato!20.0}20.0 & \cellcolor{tomato!11.8}11.8 & \cellcolor{tomato!19.8}19.8 & \cellcolor{tomato!72.5}72.5 \\
& \textit{1510} & \cellcolor{tomato!79.9}79.9 & \cellcolor{tomato!57.0}57.0 & \cellcolor{tomato!38.5}38.5 & \cellcolor{tomato!62.4}62.4 & \cellcolor{tomato!70.5}70.5 & \cellcolor{tomato!77.0}77.0 & \cellcolor{tomato!91.0}91.0 & \cellcolor{tomato!17.4}17.4 & \cellcolor{tomato!13.4}13.4 & \cellcolor{tomato!35.0}35.0 & \cellcolor{tomato!6.4}6.4 & \cellcolor{tomato!18.7}18.7 & \cellcolor{tomato!10.4}10.4 & \cellcolor{tomato!19.4}19.4 & \cellcolor{tomato!91.4}91.4 \\
\hline
\end{tabular}
\end{table*}

\begin{table*}[t]
\centering
\caption{Evaluation performance of the model across different tasks during sequential training, using the \textbf{Gemma2-9B} model with the \textbf{\baby} method (Rouge = 50.23, Forget Rate = 7.72, Backward = -6.39).}
\renewcommand\arraystretch{1.12}
\small
\label{baditseed3gemma2}
\begin{tabular}{|l|*{17}{c|}}
\hline
\multicolumn{2}{|c|}{} & \multicolumn{15}{c|}{Evaluation Task ID} \\
\cline{3-17}
\multicolumn{2}{|c|}{} & \textit{002} & \textit{073} & \textit{1572} & \textit{181} & \textit{591} & \textit{1687} & \textit{363} & \textit{1729} & \textit{511} & \textit{875} & \textit{639} & \textit{748} & \textit{1590} & \textit{1290} & \textit{1510} \\
\hline
\multirow{15}{*}{\rotatebox{90}{Training Task ID}} 
& \textit{002} & \cellcolor{tomato!82.5}82.5 & \cellcolor{tomato!70.0}70.0 & \cellcolor{tomato!34.0}34.0 & \cellcolor{tomato!28.3}28.3 & \cellcolor{tomato!71.6}71.6 & \cellcolor{tomato!73.0}73.0 & \cellcolor{tomato!89.0}89.0 & \cellcolor{tomato!12.2}12.2 & \cellcolor{tomato!15.7}15.7 & \cellcolor{tomato!43.0}43.0 & \cellcolor{tomato!4.5}4.5 & \cellcolor{tomato!30.1}30.1 & \cellcolor{tomato!8.0}8.0 & \cellcolor{tomato!21.8}21.8 & \cellcolor{tomato!93.0}93.0 \\
& \textit{073} & \cellcolor{tomato!82.7}82.7 & \cellcolor{tomato!81.0}81.0 & \cellcolor{tomato!30.8}30.8 & \cellcolor{tomato!53.3}53.3 & \cellcolor{tomato!68.0}68.0 & \cellcolor{tomato!74.0}74.0 & \cellcolor{tomato!92.0}92.0 & \cellcolor{tomato!14.6}14.6 & \cellcolor{tomato!15.4}15.4 & \cellcolor{tomato!45.0}45.0 & \cellcolor{tomato!6.9}6.9 & \cellcolor{tomato!41.0}41.0 & \cellcolor{tomato!8.7}8.7 & \cellcolor{tomato!21.0}21.0 & \cellcolor{tomato!97.3}97.3 \\
& \textit{1572} & \cellcolor{tomato!82.3}82.3 & \cellcolor{tomato!78.0}78.0 & \cellcolor{tomato!40.5}40.5 & \cellcolor{tomato!45.0}45.0 & \cellcolor{tomato!70.0}70.0 & \cellcolor{tomato!73.0}73.0 & \cellcolor{tomato!91.0}91.0 & \cellcolor{tomato!13.9}13.9 & \cellcolor{tomato!15.0}15.0 & \cellcolor{tomato!46.0}46.0 & \cellcolor{tomato!9.4}9.4 & \cellcolor{tomato!34.9}34.9 & \cellcolor{tomato!11.9}11.9 & \cellcolor{tomato!21.8}21.8 & \cellcolor{tomato!96.1}96.1 \\
& \textit{181} & \cellcolor{tomato!76.0}76.0 & \cellcolor{tomato!77.0}77.0 & \cellcolor{tomato!39.8}39.8 & \cellcolor{tomato!68.4}68.4 & \cellcolor{tomato!66.2}66.2 & \cellcolor{tomato!74.0}74.0 & \cellcolor{tomato!90.0}90.0 & \cellcolor{tomato!13.4}13.4 & \cellcolor{tomato!15.3}15.3 & \cellcolor{tomato!40.0}40.0 & \cellcolor{tomato!5.4}5.4 & \cellcolor{tomato!30.0}30.0 & \cellcolor{tomato!13.5}13.5 & \cellcolor{tomato!20.7}20.7 & \cellcolor{tomato!87.8}87.8 \\
& \textit{591} & \cellcolor{tomato!77.7}77.7 & \cellcolor{tomato!73.0}73.0 & \cellcolor{tomato!36.0}36.0 & \cellcolor{tomato!64.1}64.1 & \cellcolor{tomato!72.5}72.5 & \cellcolor{tomato!69.0}69.0 & \cellcolor{tomato!90.0}90.0 & \cellcolor{tomato!8.3}8.3 & \cellcolor{tomato!13.8}13.8 & \cellcolor{tomato!41.0}41.0 & \cellcolor{tomato!2.6}2.6 & \cellcolor{tomato!22.1}22.1 & \cellcolor{tomato!4.2}4.2 & \cellcolor{tomato!19.2}19.2 & \cellcolor{tomato!83.8}83.8 \\
& \textit{1687} & \cellcolor{tomato!76.5}76.5 & \cellcolor{tomato!68.0}68.0 & \cellcolor{tomato!41.0}41.0 & \cellcolor{tomato!62.2}62.2 & \cellcolor{tomato!72.8}72.8 & \cellcolor{tomato!85.0}85.0 & \cellcolor{tomato!90.0}90.0 & \cellcolor{tomato!8.0}8.0 & \cellcolor{tomato!11.3}11.3 & \cellcolor{tomato!42.0}42.0 & \cellcolor{tomato!2.4}2.4 & \cellcolor{tomato!16.6}16.6 & \cellcolor{tomato!4.9}4.9 & \cellcolor{tomato!18.1}18.1 & \cellcolor{tomato!85.8}85.8 \\
& \textit{363} & \cellcolor{tomato!78.3}78.3 & \cellcolor{tomato!77.0}77.0 & \cellcolor{tomato!37.5}37.5 & \cellcolor{tomato!62.6}62.6 & \cellcolor{tomato!70.7}70.7 & \cellcolor{tomato!81.0}81.0 & \cellcolor{tomato!92.0}92.0 & \cellcolor{tomato!10.6}10.6 & \cellcolor{tomato!9.5}9.5 & \cellcolor{tomato!44.0}44.0 & \cellcolor{tomato!2.2}2.2 & \cellcolor{tomato!15.6}15.6 & \cellcolor{tomato!11.5}11.5 & \cellcolor{tomato!19.2}19.2 & \cellcolor{tomato!80.3}80.3 \\
& \textit{1729} & \cellcolor{tomato!77.5}77.5 & \cellcolor{tomato!75.0}75.0 & \cellcolor{tomato!40.7}40.7 & \cellcolor{tomato!69.5}69.5 & \cellcolor{tomato!69.0}69.0 & \cellcolor{tomato!80.0}80.0 & \cellcolor{tomato!94.0}94.0 & \cellcolor{tomato!14.6}14.6 & \cellcolor{tomato!12.4}12.4 & \cellcolor{tomato!39.0}39.0 & \cellcolor{tomato!6.5}6.5 & \cellcolor{tomato!16.7}16.7 & \cellcolor{tomato!11.1}11.1 & \cellcolor{tomato!20.2}20.2 & \cellcolor{tomato!88.2}88.2 \\
& \textit{511} & \cellcolor{tomato!77.4}77.4 & \cellcolor{tomato!58.0}58.0 & \cellcolor{tomato!25.7}25.7 & \cellcolor{tomato!68.6}68.6 & \cellcolor{tomato!71.5}71.5 & \cellcolor{tomato!81.0}81.0 & \cellcolor{tomato!92.0}92.0 & \cellcolor{tomato!14.0}14.0 & \cellcolor{tomato!16.9}16.9 & \cellcolor{tomato!38.0}38.0 & \cellcolor{tomato!3.2}3.2 & \cellcolor{tomato!26.1}26.1 & \cellcolor{tomato!12.2}12.2 & \cellcolor{tomato!21.8}21.8 & \cellcolor{tomato!78.7}78.7 \\
& \textit{875} & \cellcolor{tomato!76.1}76.1 & \cellcolor{tomato!45.0}45.0 & \cellcolor{tomato!31.2}31.2 & \cellcolor{tomato!66.7}66.7 & \cellcolor{tomato!68.2}68.2 & \cellcolor{tomato!60.0}60.0 & \cellcolor{tomato!78.0}78.0 & \cellcolor{tomato!14.5}14.5 & \cellcolor{tomato!16.0}16.0 & \cellcolor{tomato!84.0}84.0 & \cellcolor{tomato!1.3}1.3 & \cellcolor{tomato!14.8}14.8 & \cellcolor{tomato!9.9}9.9 & \cellcolor{tomato!21.2}21.2 & \cellcolor{tomato!50.4}50.4 \\
& \textit{639} & \cellcolor{tomato!76.6}76.6 & \cellcolor{tomato!54.0}54.0 & \cellcolor{tomato!19.5}19.5 & \cellcolor{tomato!63.4}63.4 & \cellcolor{tomato!63.8}63.8 & \cellcolor{tomato!58.0}58.0 & \cellcolor{tomato!78.0}78.0 & \cellcolor{tomato!14.3}14.3 & \cellcolor{tomato!15.9}15.9 & \cellcolor{tomato!84.0}84.0 & \cellcolor{tomato!9.1}9.1 & \cellcolor{tomato!19.4}19.4 & \cellcolor{tomato!12.1}12.1 & \cellcolor{tomato!21.1}21.1 & \cellcolor{tomato!61.7}61.7 \\
& \textit{748} & \cellcolor{tomato!75.9}75.9 & \cellcolor{tomato!53.0}53.0 & \cellcolor{tomato!36.7}36.7 & \cellcolor{tomato!71.2}71.2 & \cellcolor{tomato!63.3}63.3 & \cellcolor{tomato!77.0}77.0 & \cellcolor{tomato!79.0}79.0 & \cellcolor{tomato!16.7}16.7 & \cellcolor{tomato!16.6}16.6 & \cellcolor{tomato!85.0}85.0 & \cellcolor{tomato!10.4}10.4 & \cellcolor{tomato!62.8}62.8 & \cellcolor{tomato!12.5}12.5 & \cellcolor{tomato!19.6}19.6 & \cellcolor{tomato!59.6}59.6 \\
& \textit{1590} & \cellcolor{tomato!74.6}74.6 & \cellcolor{tomato!53.0}53.0 & \cellcolor{tomato!34.2}34.2 & \cellcolor{tomato!71.5}71.5 & \cellcolor{tomato!63.6}63.6 & \cellcolor{tomato!72.0}72.0 & \cellcolor{tomato!80.0}80.0 & \cellcolor{tomato!15.0}15.0 & \cellcolor{tomato!16.6}16.6 & \cellcolor{tomato!82.0}82.0 & \cellcolor{tomato!10.6}10.6 & \cellcolor{tomato!63.3}63.3 & \cellcolor{tomato!12.6}12.6 & \cellcolor{tomato!21.5}21.5 & \cellcolor{tomato!60.3}60.3 \\
& \textit{1290} & \cellcolor{tomato!73.7}73.7 & \cellcolor{tomato!64.0}64.0 & \cellcolor{tomato!17.3}17.3 & \cellcolor{tomato!61.0}61.0 & \cellcolor{tomato!61.5}61.5 & \cellcolor{tomato!58.5}58.5 & \cellcolor{tomato!84.0}84.0 & \cellcolor{tomato!13.3}13.3 & \cellcolor{tomato!11.2}11.2 & \cellcolor{tomato!79.0}79.0 & \cellcolor{tomato!7.7}7.7 & \cellcolor{tomato!61.2}61.2 & \cellcolor{tomato!11.2}11.2 & \cellcolor{tomato!27.6}27.6 & \cellcolor{tomato!55.6}55.6 \\
& \textit{1510} & \cellcolor{tomato!75.7}75.7 & \cellcolor{tomato!68.0}68.0 & \cellcolor{tomato!15.5}15.5 & \cellcolor{tomato!64.1}64.1 & \cellcolor{tomato!60.9}60.9 & \cellcolor{tomato!75.0}75.0 & \cellcolor{tomato!83.0}83.0 & \cellcolor{tomato!13.6}13.6 & \cellcolor{tomato!11.1}11.1 & \cellcolor{tomato!82.0}82.0 & \cellcolor{tomato!4.9}4.9 & \cellcolor{tomato!62.6}62.6 & \cellcolor{tomato!11.2}11.2 & \cellcolor{tomato!26.0}26.0 & \cellcolor{tomato!100.0}100.0 \\
\hline
\end{tabular}
\end{table*}


\end{document}